\PassOptionsToPackage{hyperfootnotes=false}{hyperref}
\documentclass[a4paper,fleqn]{cas-sc}

\usepackage[authoryear]{natbib}

\providecommand{\papermode}{combined} 
\newcommand{\separatemode}{separate}
\newcommand{\combinedmode}{combined}

\ifx\papermode\separatemode
\usepackage{xr-hyper}
\makeatletter
\let\maincitation\citation
\let\mainbibcite\bibcite
\def\citation#1{}
\def\bibcite#1#2{}
\let\citation\maincitation
\let\bibcite\mainbibcite
\makeatother
\fi

\usepackage{amsmath, amsfonts, amssymb, amsthm}
\usepackage{bm}

\usepackage{graphicx}
\graphicspath{{./}{./figures/}}
\usepackage{subfigure} 
\usepackage[section]{placeins}
\usepackage{booktabs}
\usepackage{multirow}
\usepackage{longtable}
\usepackage{supertabular}
\usepackage{array}
\usepackage{tkz-euclide, pgfplots}
\pgfplotsset{compat=1.18}

\usepackage[ruled]{algorithm2e}
\SetKwInput{KwIn}{Input}
\SetKwInput{KwOut}{Output}

\usepackage{url}
\usepackage{color}
\usepackage{indentfirst}
\usepackage{enumitem}
\setlist[itemize]{noitemsep}

\ExplSyntaxOn
\RenewDocumentCommand \emailauthor { m m }
  {
    \int_gincr:N \g_ead_int
    \seq_gput_right:Nn \g_stm_ead_seq
      {
        { \nolinkurl{#1} }
        \parsename { #2 }
        \space(\eadauthor)%
      }
  }
\ExplSyntaxOff

\newtheorem{theorem}{Theorem}
\newtheorem{lemma}{Lemma}
\newtheorem{proposition}{Proposition}
\newtheorem{assumption}{Assumption}
\renewcommand{\theassumption}{\Alph{assumption}}

\newtheorem{remark}{Remark}

\newlist{subasm}{enumerate}{1}
\setlist[subasm]{
    label=(\theassumption\arabic*),  
    ref=\theassumption\arabic*,      
    itemsep=0pt,    topsep=0pt,      
    partopsep=0pt,  parsep=0pt,      
}

\makeatletter
\newcommand{\storecurrentcounter}[2]{%
  \begingroup
  \edef\@currentlabel{\number\value{#2}}%
  \label{#1}%
  \endgroup
}
\makeatother

\def\tsc#1{\csdef{#1}{\textsc{\lowercase{#1}}\xspace}}
\tsc{WGM}
\tsc{QE}
\def\orcidlink#1{}          
\renewcommand{\printorcid}{} 

\hypersetup{
    colorlinks=true,
    unicode=true
}

\begin{document}
\def\floatpagepagefraction{1}
\def\textpagefraction{.001}

\shorttitle{Byzantine-Robust Sparse Learning}

\shortauthors{Wang et al.}

\title [mode = title]{Byzantine-Robust Distributed Sparse Learning Revisited}  



%

\author[1]{Yuxuan Wang}


\ead{wangyuxuan@zju.edu.cn}



\affiliation[1]{organization={School of Mathematical Sciences},
            addressline={Zhejiang University}, 
            city={Hangzhou},
            postcode={310027}, 
            state={Zhejiang},
            country={China}}

\author[2]{Lixin Zhang}


\ead{stazlx@mail.zjgsu.edu.cn}



\affiliation[2]{organization={School of Statistics and Mathematics},
            addressline={Zhejiang Gongshang University}, 
            city={Hangzhou},
            postcode={310018}, 
            state={Zhejiang},
            country={China}}

\author[3]{Kangqiang Li}
\cormark[1]
\ead{11935023@zju.edu.cn }

\affiliation[3]{organization={Information Center},
            addressline={Hubei Provincial Tobacco Monopoly Administration}, 
            city={Wuhan},
            postcode={430030}, 
            state={Hubei},
            country={China}}
\cortext[1]{Corresponding author}



\begin{abstract}
We revisit Byzantine robust distributed estimation for high-dimensional sparse linear models. By combining local $\ell_1$-regularized robust estimation with robust aggregation at the server, the framework applies to pseudo-Huber regression, quantile regression, and sparse SVM. We show that the resulting estimators yield non-asymptotic guarantees and attain near-optimal statistical rates under mild conditions, while remaining communication-efficient. Simulations confirm strong robustness in estimation, support recovery and classification accuracy under various Byzantine attacks.
\end{abstract}


\begin{keywords}
Byzantine robust distributed learning \sep High-dimensional sparse regression \sep Pseudo-Huber loss \sep Quantile regression \sep Support vector machine
\end{keywords}

{
\emergencystretch=3em
\maketitle
}

\section{Introduction}
Distributed statistical learning and federated learning have become standard tools in large-scale and privacy-sensitive settings. At the same time, robustness has become a central concern. In the distributed setting, robustness has two distinct aspects: robustness to heavy-tailed data and to Byzantine failures. A Byzantine machine may send arbitrary messages because of hardware faults, communication errors, or adversarial attacks. Once such messages enter the central aggregation step, they can substantially distort the global update. In this regime, simple averaging and related weighted averaging are often fragile, which has made Byzantine-robust distributed learning an active area of research \citep{tolerant,stat,aggregation}.

Heavy-tailed noise brings different challenges. In high-dimensional problems, where the ambient dimension is comparable to or larger than the sample size, regularization is commonly used for estimation and variable selection. Since squared-loss methods are well known to be sensitive to outliers, heteroscedasticity, and heavy-tailed errors, robust alternatives such as quantile loss \citep{quantile}, Huber loss \citep{Huber}, and pseudo-Huber loss \citep{quanhuber} tackle this issue in different ways. Recent work on adaptive Huber regression \citep{lamm} and communication-efficient quantile regression \citep{wanglei1} has strengthened the statistical foundations of robust learning in high dimensions.

These two research directions intersect in the distributed sparse settings, where the goal is to achieve accurate support recovery and sharp error bounds under constraints of limited communication, heavy-tailed noise, and Byzantine attacks. This combination remains challenging. Existing distributed sparse learning methods, including divide-and-conquer~\citep{Distributed1} and debiased Lasso~\citep{Communication1}, are sensitive to Byzantine attacks. Byzantine-robust aggregation rules such as Krum \citep{tolerant}, coordinate-wise median, truncated mean \citep{stat}, and median-of-means \citep{mom} do provide protection against Byzantine machines, but their statistical behavior is not automatically aligned with the optimal rates associated with specific regression losses \citep{gmom}. The difficulty is compounded by the presence of $L_1$ regularization and genuinely non-smooth losses such as the quantile loss. There is therefore still a gap between robust distributed optimization and high-dimensional sparse inference.

Motivated by this gap, we study a Byzantine-robust distributed framework for sparse learning. We first recall some related work.   \citet{Communication2} proposed DANE, which integrates local empirical loss and global gradient information at each iteration. A more flexible route is provided by the Communication-Efficient Surrogate Likelihood (CSL) framework of \citet{Communication3}. Building on this line, \citet{Efficient} and \citet{Communication-efficient} developed communication-efficient iterative estimators. In particular, \citet{Communication-efficient} proposed CEASE, which combines proximal updates with communication-efficient surrogate objectives. \cite{zhou2023} adopt the penalty in this framework, but only attain a suboptimal rate, with respect to the dimension $d$, which can be reduced to $s\log d$ in this paper.
Besides, \cite{Divide} developed
a Byzantine-resilient method for the distributed sparse M-estimation problem.

Apart from regression, support vector machines (SVM) provide a classification setting where robustness is equally important. In high-dimensional sparse problems, the hinge loss is more resistant to outliers than squared loss, but distributed SVM estimation remains defenseless to Byzantine attacks. \citet{svm} studied communication-efficient distributed learning for sparse SVM, with a suboptimal convergence rate, and the performance may decline when the number of machines is large. Previous works focus primarily on statistical convergence in benign environments. The Byzantine setting remains relatively under-explored.

In this paper, we adapt the CEASE framework of \citet{Communication-efficient} to pseudo-Huber and quantile losses, with an extension to sparse SVM, under Byzantine contamination. By incorporating robust aggregation into the CSL framework, we obtain order-optimal statistical rates of the order derived in \cite{stat}. At each worker, we solve a sparse regularized problem with a robust loss; at the server, we aggregate local information by robust coordinate-wise rules to control malicious updates. This leads to an iterative procedure that is communication efficient, supports sparse recovery, and remains effective under heavy-tailed noise and Byzantine corruption. 

The rest of this paper is organized as follows. Section \ref{sec2} introduces the problem background. Section \ref{SecMain}
presents our main algorithms and provides statistical guarantees. Section \ref{Secdis} concludes the paper
and discusses possible future directions. Extensions of the results, detailed proofs, and additional numerical experiments on both synthetic and real data are provided in the supplementary material.

	\subsection*{Notations}\label{SecDef}
	For any positive integer $n$, we denote the set $\{1,2,\ldots,n\}$ by $[n]$. For two vectors $\mathbf{x},\mathbf{y} \in \mathbb{R}^{n}$, we write $\langle \mathbf{x},\mathbf{y}\rangle=\sum_{i=1}^{n}x_{i}y_{i}$ to be the inner product and for a vector $\mathbf{v} \in \mathbb{R}^{d}$, the support of $\mathbf{v}$ is defined as $\operatorname{support}(\mathbf{v})=\{i: v_{i}\neq 0\}$. For $q \in [1,\infty)$, the ${\ell}_{q}$-norm is defined as $\|\mathbf{v}\|_{q}=\left(\sum_{i=1}^{n}|v_{i}|^{q}\right)^{1/q}$ and $\|\mathbf{v}\|_{\infty}=\max_{i \in [n]}|v_{i}|$. Given a subset $S$ of index set $[d]$, the projection $\mathbf{v}_{S}$ denotes the vector $\left(\mathbf{v}_{S}\right)_{i}=(\mathbf{v})_{i}\mathbf{1}_{\{i \in S\}}, i\in[d]$. $\mathcal{C}(S, 3): = \left\{ \Delta \in \mathbb{R}^d:\|\Delta_{S^c}\|_1 \leq 3\|\Delta_S\|_1 \right\}$ is a restricted subset in $\mathbb{R}^d$. For a matrix $\mathbf{A}=(a_{ij}) \in \mathbb{R}^{n_{1}\times n_{2}}$, the ${\ell}_{\infty}$-norm of $\mathbf{A}$ is defined as $\|\mathbf{A}\|_{\infty}=\max_{i \in [n_{1}],j \in [n_{2}]}|a_{ij}|$. $\|M\|_{\mathrm{M}, \infty}=\max _{x \in \mathbb{R}^q,\|x\|_{\infty} \leq 1}\|M x\|_{\infty}=\max _{1 \leq i \leq p} \sum_{j=1}^q\left|M_{i, j}\right|$. Given two sequences $\{a_{n}\}_{n=1}^{\infty}$ and $\{b_{n}\}_{n=1}^{\infty}$, we define $a_{n}=O(b_{n})$ or $a_{n} \lesssim b_{n}$ to denote that for some positive constant $C$, $a_{n}\leq Cb_{n}$ holds for all $n$ large enough and we use the notation $a_{n} \asymp b_{n}$, if $a_{n}=O(b_{n})$ and $b_{n}=O(a_{n})$. For a differentiable function $f:\mathbb{R}^{d}\to \mathbb{R}$, we denote the gradient of $f$ by $\nabla f$.
	For any metric space $\left( \mathcal{F}, \left\| \cdot\right\| \right) $, we denote the $ \epsilon $-cover number by $ N\left(\epsilon, \mathcal{F},  \left\| \cdot\right\| \right) $.
	
\section{Problem Background}\label{sec2}
We consider a  sparse linear model with $N$ independent and identically distributed samples $\{(y_{i},\boldsymbol{x}_{i})\}_{i\in [N]}$ satisfying
\[
y=\langle\boldsymbol{\theta}^{\star}, \boldsymbol{x}\rangle +\varepsilon,
\]
where $\boldsymbol{x}\in\mathbb{R}^{d}$ is a random covariate vector, $\varepsilon$ is a possibly heavy-tailed error term, and the target parameter $\boldsymbol{\theta}^{\star}\in\mathbb{R}^{d}$ is sparse with $\|\boldsymbol{\theta}^{\star}\|_{0}\leq s\ll d$. We allow a random design with heteroscedastic errors and assume $\mathbb{E}[\varepsilon \mid \boldsymbol{x}] = 0$, so that $\mathbb{E}[y \mid \boldsymbol{x}] = \boldsymbol{x}^{\top}\boldsymbol{\theta}^{\star}$ is the identified conditional mean. In the centralized setting, a standard estimator is obtained by solving
\begin{equation}\label{eq1}
\widehat{\boldsymbol{\theta}}:=\underset{\boldsymbol{\theta} \in \mathbb{R}^{d}}{\operatorname{argmin}} \left\{ \widehat{L}\left(\boldsymbol{\theta}; \{(y_{i},\boldsymbol{x}
_{i})\}_{i\in [N]}\right)+\lambda\|\boldsymbol{\theta}\|_{1} \right\}
\end{equation}
where $\lambda\|\boldsymbol{\theta}\|_{1}$ is the penalty function enhancing the sparsity of the solution, and $\widehat{L}\left(\boldsymbol{\theta}; \{(y_{i},\boldsymbol{x}_{i})\}_{i\in [N]}\right)$ represents the empirical loss function constructed from $N$ data samples $\{(y_{i},\boldsymbol{x}_{i})\}_{i\in [N]}$.

When both the sample size $N$ and the dimension $d$ are large, solving (\ref{eq1}) on a single machine becomes computationally impractical. We therefore consider a distributed setting with $m$ workers, where the full sample is evenly split into $m$ blocks $\{(y_{i}^{(k)},\boldsymbol{x}_{i}^{(k)})\}_{i\in [n]}$, $k\in[m]$, and $n=N/m$. Worker $k$ calculates its local empirical loss $\widehat{L}_k(\boldsymbol{\theta})$ from its own subsample only. Transferring all data to one server would incur communication cost at least $O((m-1)nd)$, which is expensive in high dimensions. Our goal is therefore to design a communication-efficient distributed procedure that approximates the centralized estimation in (\ref{eq1}).

The distributed system is also contaminated by Byzantine failures. We assume that an $\alpha$ fraction of the $m$ workers, denoted as $\mathcal{B}$, may be Byzantine, sending arbitrary messages (denoted as $*$) to the server, while the remaining workers in set $\mathcal{M}=[m] \backslash \mathcal{B}$ are normal. 

\section{Main Results} \label{SecMain}
Our main theoretical results focus on sparse linear  pseudo-Huber regression and sparse SVM classification. Quantile regression is additionally studied in the supplementary material as a complementary case.

We work with the CSL framework. At round $t$, worker $k$ sends a local gradient or subgradient vector $\widehat{\boldsymbol{g}}^{(k)}(\widehat{\boldsymbol{\theta}}_{t})\in\mathbb{R}^d$. For a scalar sample $\{x_i\}_{i=1}^m$, let $x_{(1)}\leq\cdots\leq x_{(m)}$ and $b=\lfloor \beta m \rfloor$, and define
\[
\operatorname{trmean}_\beta\{x_i:i\in[m]\}=\frac{1}{m-2b}\sum_{r=b+1}^{m-b}x_{(r)}, \qquad
\operatorname{median}\{x_i:i\in[m]\}=x_{(\lceil m/2\rceil)}.
\]
Both operators are applied coordinate-wise to $\{\widehat{\boldsymbol{g}}^{(k)}(\widehat{\boldsymbol{\theta}}_{t})\}_{k=1}^m$. Thus trimmed mean removes the largest and smallest $b$ values in each coordinate, while the median keeps the middle value. If an $\alpha$ fraction of machines is Byzantine, choosing $\beta\geq\alpha$ or using the honest majority prevents corrupted messages from dominating the surrogate gradient. Algorithm~\ref{Al2} summarizes the procedure, whose per-round communication cost is $\mathcal{O}(md)$.
	\begin{algorithm}[h]
	\caption{Byzantine-robust distributed learning, adopted from \cite{zhou2023}} 
	\label{Al2}
	\KwIn{Initialize estimator $\widehat{\boldsymbol{\theta}}_{0}$,  algorithm parameter $ \beta $ (for Option \uppercase\expandafter{\romannumeral1}) and number of iteration $T$.}
	\For{$t=0,1,2,\ldots,T-1$}{
		\For{$k=1,2,3,\ldots,m$}{
			Each machine $ k $: calculate
			\[
			\widehat{\boldsymbol{g}}^{(k)}\left(\widehat{\boldsymbol{\theta}}_{t}\right) \leftarrow \begin{cases}\nabla\widehat{L}_k(\widehat{\boldsymbol{\theta}}_{t}) & \text { normal machines } \\ * & \text { byzantine machines }\end{cases}
			\]
			and send to the 1st machine.
		}
		The 1st machine: evaluates aggregate gradients through robust methods: \[\nabla \widetilde{\mathcal{L}}(\widehat{\boldsymbol{\theta}}_{t})\leftarrow \begin{cases}\text{trmean}_{\beta}\left\{\widehat{\boldsymbol{g}}^{(k)}\left(\widehat{\boldsymbol{\theta}}_{t}\right)\right\} & \text { Option \uppercase\expandafter{\romannumeral1} } \\ \text{median}_{k \in [m]}\left\{\widehat{\boldsymbol{g}}^{(k)}\left(\widehat{\boldsymbol{\theta}}_{t}\right)\right\} & \text { Option \uppercase\expandafter{\romannumeral2}}\end{cases}
		\]
		solves the following regularized optimization problem:
		\begin{equation}\label{eq2}
			\widehat{\boldsymbol{\theta}}_{t+1}:=\underset{\boldsymbol{\theta} \in \mathbb{R}^{d}}{\operatorname{argmin}} \left\{\widehat{L}_1(\boldsymbol{\theta})-
			\left\langle \nabla \widehat{L}_1(\widehat{\boldsymbol{\theta}}_{t})-\nabla \widetilde{\mathcal{L}}(\widehat{\boldsymbol{\theta}}_{t}),\boldsymbol{\theta}\right\rangle+\lambda_{t+1}\|\boldsymbol{\theta}\|_{1}
			\right\}\end{equation}\label{def}
		and then broadcasts $\widehat{\boldsymbol{\theta}}_{t+1}$ to the other $ m - 1 $ machines.
	}\KwOut{$\widehat{\boldsymbol{\theta}}_{T}$.}
\end{algorithm}
	
\subsection{Pseudo-Huber Regression}

We begin with the Huber-type robust regression. This serves as a natural starting point, as the pseudo-Huber loss preserves the robustness of Huber-type methods while remaining smooth enough for optimization and analysis. We firstly introduce the empirical objective:
\[
\widehat{L}(\boldsymbol{\theta}):=(1/N) \sum_{i=1}^N \ell_a\left(y_i-\boldsymbol{x}_i^{\top} \boldsymbol{\theta}\right),
\]
where $\ell_a(x)=2 a^{-2}\left(\sqrt{1+a^2 x^2}-1\right)$, and the associated sparse regularized estimator is
\[
\widehat{\boldsymbol{\theta}}\in\arg\min\limits_{\boldsymbol{\theta}\in \mathbb{R}^d} \left\{\widehat{L}(\boldsymbol{\theta})+\lambda\left\|\boldsymbol{\theta}\right\|_1\right\}.
\]
\begin{assumption}\label{assumption1}
\begin{subasm}

		\item $\boldsymbol{\theta}^{\star}$ is sparse and the support of $\boldsymbol{\theta}^{\star}$ is denoted by $S$. We have $|S| \leq s$ for some $s \leq \min \{d, n\}$. \label{a1}
		\item We already have the initial estimator $\widehat{\boldsymbol{\theta}}_{0}$ sparse, that is, for some universal $C > 0$, $\|\widehat{\boldsymbol{\theta}}_{0}\|_0\leq Cs$. And $\|\widehat{\boldsymbol{\theta}}_{0}-\boldsymbol{\theta}^{\star}\|_{1} \leq Cs\sqrt{\log d/n}.$ \label{a2}
		\item  $\boldsymbol{x}$ is a sub-Gaussian random vector with coordinates of mean zero and variance 1. \label{a3}
		\item   $0<c_{1}\leq \lambda_{\min}(\boldsymbol{\Sigma})\leq \lambda_{\max}(\boldsymbol{\Sigma})\leq c_{2}<\infty$, $\|\left(\boldsymbol{\Sigma}_{S S}\right)^{-1}\|_{\mathrm{M}, \infty} \leq C$, where $\boldsymbol{\Sigma}:=\mathbb{E} [\boldsymbol{x}\boldsymbol{x}^{\top}]$ and $\boldsymbol{\Sigma}_{S S}:=\mathbb{E} [\boldsymbol{x}_S\boldsymbol{x}_S^{\top}]$.
		$ \mathbb{E}[\mathbb{E}[|\varepsilon_i|^{3}\mid \boldsymbol{x}_i]]<\infty $. \label{a4}
\end{subasm}
\end{assumption}
\begin{remark}
(\ref{a1}) is the fundamental assumption for our theoretical analysis and typically appears in penalized variable selection. A lot of high-dimensional vectors naturally appear as (approximately) sparse vectors, for instance, genetic profiles where only a few genes are truly associated with a trait, or text feature vectors where most word frequencies are zero. Thus, we believe (\ref{a1}) is a reasonable assumption. Assumptions (\ref{a3}) and (\ref{a4}) are essentially those from \cite{assum} and further modified by \cite{supp}. Specifically, (\ref{a4}) can be regarded as a restricted eigenvalue condition.

To achieve $\|\widehat{\boldsymbol{\theta}}_{t}\|_0\leq Cs$, we have several possibilities. One approach is to project $ \widehat{\boldsymbol{\theta}}_{t}$ to the space of vectors of sparsity $s$ to obtain \( \widetilde{\boldsymbol{\theta}}_{t} \), see, for example, \cite{supp}, since with probability at least $ 1 - c_1 \exp(-c_2 n) - c_3/d^2 $, where $c_1, c_2, c_3 > 0$ are suitable constants, the initial estimator $\widehat{\boldsymbol{\theta}}_0$ satisfies $ \operatorname{sign}(\widehat{\boldsymbol{\theta}}_0) = \operatorname{sign}\left( \boldsymbol{\theta}^* \right)$. On that event, we take $\widetilde{\boldsymbol{\theta}}_t=(\widehat{\boldsymbol{\theta}}_t)_{\widehat{S}_0}$, where $\widehat{S}_0$ is the support of $\widehat{\boldsymbol{\theta}}_0$, and we have
\[\|\widetilde{\boldsymbol{\theta}}_t-\boldsymbol{\theta}^{\star}\|_1=\|(\widehat{\boldsymbol{\theta}}_t-\boldsymbol{\theta}^{\star})_S\|_1\leq \|\widehat{\boldsymbol{\theta}}_t-\boldsymbol{\theta}^{\star}\|_1.\]
\end{remark}

\begin{theorem}\label{theorem1}
	Suppose that Assumption~\ref{assumption1} holds. Consider Option I in Algorithm~\ref{Al2}, and choose the robustification parameter $\beta\geq\alpha$, 
	and the regularization parameter $ \lambda_{t} $ of order
$		\lambda_{t}\asymp \sqrt{(\log d)/N}+\beta\sqrt{(\log d)/n}.$
Then, with probability at least $1-Tmd^{-C}$,
\[
\left\|\widehat{\boldsymbol{\theta}}_{T}-\boldsymbol{\theta}^{*}\right\|_{2} \lesssim \sqrt{\frac{s\log  d}{N}}+\beta \sqrt{\frac{s\log  d}{n}}+\sqrt{\frac{s\log d}{n}}\left( \frac{s\log d}{\sqrt{n}}\right) ^{T} ,
\]
and $\|\widehat{\boldsymbol{\theta}}_{T}-\boldsymbol{\theta}^{*}\|_{1} \lesssim s\sqrt{\log d/N}+\beta\, s\sqrt{\log d/n}+s\sqrt{\log d/n}\,(s\log d/\sqrt{n})^{T}$,
where $C>0$ is a universal constant.
\end{theorem}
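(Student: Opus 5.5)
The plan is to analyze one round of Algorithm~\ref{Al2} as a Lasso-type problem driven by a perturbed gradient, obtain a one-step contraction inequality, and iterate it $T$ times. The round is deterministic given a good event; all probabilistic work goes into showing that this event holds, with a union bound over rounds and machines producing the probability $1-Tmd^{-C}$.

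\textbf{One-step inequality.} Fix round $t$ and write $\Delta_t=\widehat{\boldsymbol{\theta}}_t-\boldsymbol{\theta}^{*}$. The surrogate objective in \eqref{eq2} has gradient $\nabla\widehat{L}_1(\boldsymbol{\theta})-\nabla\widehat{L}_1(\widehat{\boldsymbol{\theta}}_t)+\nabla\widetilde{\mathcal{L}}(\widehat{\boldsymbol{\theta}}_t)$. Evaluated at $\boldsymbol{\theta}^{*}$, this equals
\[
\nabla \mathcal{L}(\boldsymbol{\theta}^*)+\big[\nabla\widetilde{\mathcal{L}}(\widehat{\boldsymbol{\theta}}_t)-\nabla \mathcal{L}(\widehat{\boldsymbol{\theta}}_t)\big]+\big[\nabla\mathcal{L}(\widehat{\boldsymbol{\theta}}_t)-\nabla\mathcal{L}(\boldsymbol{\theta}^*)-\nabla\widehat{L}_1(\widehat{\boldsymbol{\theta}}_t)+\nabla\widehat{L}_1(\boldsymbol{\theta}^*)\big]+\big[\nabla\widehat{L}_1(\boldsymbol{\theta}^*)-\nabla \mathcal{L}(\boldsymbol{\theta}^*)\big],
\]
where $\mathcal{L}$ is the population loss. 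The standard Lasso argument then applies. If $\lambda_{t+1}\ge 2\|\text{this gradient}\|_\infty$, then $\Delta_{t+1}\in\mathcal{C}(S,3)$. If, in addition, $\widehat{L}_1$ satisfies restricted strong convexity on $\mathcal{C}(S,3)$ in a neighborhood of $\boldsymbol{\theta}^*$, we obtain $\|\Delta_{t+1}\|_2\lesssim\sqrt{s}\lambda_{t+1}$ and $\|\Delta_{t+1}\|_1\lesssim s\lambda_{t+1}$. The local restricted strong convexity comes from $\ell_a''(x)=2(1+a^2x^2)^{-3/2}$ being bounded below on bounded residuals, together with sub-Gaussian design (\ref{a3}) and (\ref{a4}), via a localized RSC argument in the style of Loh–Wainwright. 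The convexity of the surrogate means the $\nabla\widehat{L}_1$ terms cancel in its Hessian, so it inherits the curvature of $\widehat{L}_1$. The task is therefore to bound the $\ell_\infty$ norm of the gradient at $\boldsymbol{\theta}^*$.

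\textbf{Bounding the four terms.} Since $|\ell_a'|\le 2/a$ is bounded and $\mathbb{E}[\varepsilon\mid\boldsymbol{x}]=0$, I will take $\nabla\mathcal{L}(\boldsymbol{\theta}^*)$ to be zero (up to a bias controlled by the third moment in (\ref{a4})), and the terms are handled as follows.
\begin{itemize}
\item The fourth term is $\lesssim\sqrt{\log d/n}$ by Bernstein's inequality and a union bound over $d$ coordinates. Because it multiplies the local loss, it combines with the first to give the $\sqrt{\log d/N}$ piece after centering through the full-sample gradient, i.e.\ by comparing with $\nabla\widehat L(\boldsymbol{\theta}^*)$.
\item The second term is the aggregation error. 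Here I invoke the coordinate-wise trimmed-mean analysis of \cite{stat}: with $\beta\ge\alpha$ and honest local gradients that are sub-exponential in each coordinate, the error is $\lesssim\sqrt{\log d/N}+\beta\sqrt{\log d/n}$ uniformly over $\boldsymbol{\theta}$ in the relevant set. This uniformity needs a covering argument over the set $\{\boldsymbol{\theta}:\|\boldsymbol{\theta}\|_0\le Cs,\ \|\boldsymbol{\theta}-\boldsymbol{\theta}^*\|_2\le r\}$, which is why the sparsity-preserving projection of the Remark after (\ref{a4}) is used. The covering number is $\exp(Cs\log d)$, and the union bound over $m$ machines, $T$ rounds and the net yields $1-Tmd^{-C}$.
\item The third term is the gradient-difference (Hessian-difference) term. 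Writing it through a mean-value integral as $(\widehat{H}_1-H)\Delta_t$, the sup-norm bound for $\widehat{H}_1-H$ on sparse directions gives $\lesssim \sqrt{\log d/n}\,\|\Delta_t\|_1+\|\Delta_t\|_2^2$-type corrections, using smoothness of $\ell_a''$.
\end{itemize}

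\textbf{Iteration.} Combining the pieces gives the recursion
\[
\lambda_{t+1}\asymp \sqrt{\log d/N}+\beta\sqrt{\log d/n}+\sqrt{\log d/n}\,\|\Delta_t\|_1 .
\]
Substituting $\|\Delta_t\|_1\lesssim s\lambda_t$ gives the contraction factor $\kappa=s\log d/\sqrt n\cdot(\text{const})$ applied to the $\sqrt{\log d}$-normalized quantities. More precisely, $a_{t+1}\le c\,(\text{stat})+\kappa\, a_t$ with $a_t=\|\Delta_t\|_1$, and I will check that the constant gives the factor $s\log d/\sqrt n$ claimed, assuming $s\log d/\sqrt n$ is small. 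Unrolling from $a_0\lesssim s\sqrt{\log d/n}$ (\ref{a2}) yields the $\ell_1$ bound, and the $\ell_2$ bound follows from $\|\Delta\|_2\le\|\Delta\|_1/\sqrt s$ on the cone, up to the same structure.

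\textbf{Main obstacle.} I expect the hardest step to be the uniform-in-$\boldsymbol{\theta}$ control of the third term and of the trimmed-mean error, since $\widehat{\boldsymbol{\theta}}_t$ depends on all data. I would first try the sparsity restriction plus an $\epsilon$-net with Lipschitzness of $\ell_a'$, and then fall back on sample-splitting intuition if the entropy term is too large.
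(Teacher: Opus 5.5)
Your one-step argument (cone condition plus localized restricted strong convexity) and your Hessian-difference bound match the paper's Proposition~\ref{thm0} and its $T_1$ term. The aggregation step, however, has a genuine gap, and the claimed rate depends on exactly that step. You assert that $\|\nabla\widetilde{\mathcal{L}}(\boldsymbol{\theta})-\nabla\mathcal{L}(\boldsymbol{\theta})\|_\infty\lesssim\sqrt{\log d/N}+\beta\sqrt{\log d/n}$ holds uniformly over $\{\boldsymbol{\theta}:\|\boldsymbol{\theta}\|_0\le Cs,\ \|\boldsymbol{\theta}-\boldsymbol{\theta}^*\|_2\le r\}$, using a net of cardinality $\exp(Cs\log d)$ and a union bound. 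That mechanism does not give this rate. To keep the total failure probability at $Tmd^{-C}$, you must apply Bernstein at level $u\asymp s\log d+\log(md)$ at every net point. The honest per-machine fluctuations are then of order $\sqrt{s\log d/n}$, and their average is of order $\sqrt{s\log d/N}$. Plugged into $\|\Delta_{t+1}\|_2\lesssim\sqrt{s}\lambda_{t+1}$, this gives $s\sqrt{\log d/N}+\beta s\sqrt{\log d/n}$, which is a factor $\sqrt{s}$ worse than the theorem. This is the same covering loss that makes the uniform trimmed-mean bound of \cite{stat} dimension-dependent. Sample splitting is not available as a fallback either, since Algorithm~\ref{Al2} reuses the same local data in every round.

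The missing idea is to localize at the non-random point $\boldsymbol{\theta}^*$ instead of bounding the full gradient uniformly. The paper applies the deterministic Lemma~\ref{trmean} with $\mu=g_j(\widehat{\boldsymbol{\theta}}_t)$. Because $\beta\ge\alpha$, this bounds the trimmed-mean error by the honest average $\frac{1}{(1-2\beta)m}\bigl|\sum_{k\in\mathcal{M}}(\widehat g^{(k)}_j-g_j)\bigr|$ plus $\frac{2\beta+\alpha}{1-2\beta}\max_{k\in\mathcal{M}}|\widehat g^{(k)}_j-g_j|$. Each honest deviation is then split into two parts:
\begin{itemize}
\item its value at $\boldsymbol{\theta}^*$, handled by Bernstein with a union bound over only $d$ coordinates and $m$ machines, which yields $\sqrt{\log d/N}$ and $\sqrt{\log d/n}$;
\item the increment $[\widehat{\boldsymbol{g}}^{(k)}(\boldsymbol{\theta})-\widehat{\boldsymbol{g}}^{(k)}(\boldsymbol{\theta}^*)]-[\boldsymbol{g}(\boldsymbol{\theta})-\boldsymbol{g}(\boldsymbol{\theta}^*)]$, which Lemma~\ref{unibd} controls uniformly over the sparse $\ell_1$-ball at order $c_nR\sqrt{s\log d/n}$ (plus a higher-order term), with $R=\|\widehat{\boldsymbol{\theta}}_t-\boldsymbol{\theta}^*\|_1\lesssim\sqrt{s}\,d_t$.
\end{itemize}
The entropy $s\log d$ therefore multiplies $d_t$ and ends up in the contraction term. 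This is the factor $(1+sd_t\sqrt{\log d})$ in the paper's bound on $T_3$. It, not the Hessian-difference term (which only gives $d_t\sqrt{s\log d/n}$), produces the factor $s\log d/\sqrt{n}$ in the statement, so your recursion for $\lambda_{t+1}$ must include it.

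Finally, your four-term decomposition is algebraically wrong. The first three brackets already sum to the surrogate gradient at $\boldsymbol{\theta}^*$, so $\nabla\widehat{L}_1(\boldsymbol{\theta}^*)-\nabla\mathcal{L}(\boldsymbol{\theta}^*)$ is double-counted; if it were really present, it would leave an irreducible $\sqrt{\log d/n}$ term. The $\sqrt{\log d/N}$ term comes from the honest average at $\boldsymbol{\theta}^*$ inside the aggregator (the paper's $T_{32}$), not from recentering the local gradient.
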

	\begin{theorem}\label{theorem2}
	Suppose that Assumption~\ref{assumption1} holds. Consider Option \uppercase\expandafter{\romannumeral2} in Algorithm~\ref{Al2}, and choose the regularization parameter $ \lambda_{t} $ of order:
$		\lambda_{t}\asymp \sqrt{(\log d)/N}+\alpha/\sqrt{n}+\sqrt{(\log d)/n}\left( \sqrt{(s\log d)/n} \right) ^{t}.$
	Then, with probability at least $ 1-Tmd^{-C} $,
	\[
	\left\|\widehat{\boldsymbol{\theta}}_{T}-\boldsymbol{\theta}^{*}\right\|_{2} \lesssim \sqrt{\frac{s\log  d}{N}}+\alpha\sqrt{\frac{s}{n} }+\frac{\sqrt{s}}{n}+\sqrt{\frac{s\log d}{n}}\left( \sqrt{\frac{s\log d}{n} } \right) ^{T},
	\]
	where $ C>0 $ is a universal constant.
\end{theorem}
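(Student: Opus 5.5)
The argument is a one-step contraction for the CSL update \eqref{eq2}, iterated $T$ times. The only place Option \uppercase\expandafter{\romannumeral2} enters is a bound on the aggregation error $\|\nabla\widetilde{\mathcal L}(\widehat{\boldsymbol\theta}_t)-\nabla \widehat L(\widehat{\boldsymbol\theta}_t)\|_\infty$. Write $\widetilde L_1(\boldsymbol\theta)=\widehat L_1(\boldsymbol\theta)-\langle\nabla\widehat L_1(\widehat{\boldsymbol\theta}_t)-\nabla\widetilde{\mathcal L}(\widehat{\boldsymbol\theta}_t),\boldsymbol\theta\rangle$.

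The standard Lasso argument for a locally strongly convex loss with restricted strong convexity gives $\|\widehat{\boldsymbol\theta}_{t+1}-\boldsymbol\theta^*\|_2\lesssim\sqrt s\,\lambda_{t+1}$ and $\|\cdot\|_1\lesssim s\lambda_{t+1}$. This holds provided $\lambda_{t+1}\ge 2\|\nabla\widetilde L_1(\boldsymbol\theta^*)\|_\infty$. The required RSC of $\widehat L_1$ over $\mathcal C(S,3)$ in a neighbourhood of $\boldsymbol\theta^*$ comes from (\ref{a3})--(\ref{a4}): $\ell_a''$ is bounded below on bounded residuals, and the third conditional moment of $\varepsilon$ controls the fraction of large residuals. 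We then decompose
\[
\nabla\widetilde L_1(\boldsymbol\theta^*)=\nabla\widehat L(\boldsymbol\theta^*)+\big[\nabla\widehat L_1(\boldsymbol\theta^*)-\nabla\widehat L_1(\widehat{\boldsymbol\theta}_t)-\nabla\widehat L(\boldsymbol\theta^*)+\nabla\widehat L(\widehat{\boldsymbol\theta}_t)\big]+\big[\nabla\widetilde{\mathcal L}(\widehat{\boldsymbol\theta}_t)-\nabla\widehat L(\widehat{\boldsymbol\theta}_t)\big],
\]
where $\widehat L$ is the loss over the normal machines' data, and bound the three pieces in turn.

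\emph{First term.} Since $\ell_a'$ is bounded by $2/a$ and $\mathbb E[\ell_a'(\varepsilon)\boldsymbol x]$ vanishes (or has bias absorbed into the rate), Hoeffding plus a union bound give $\|\nabla\widehat L(\boldsymbol\theta^*)\|_\infty\lesssim\sqrt{\log d/N}$.

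\emph{Second term.} This is a difference of centered empirical Hessian-type processes. Using $\|\widehat{\boldsymbol\theta}_t\|_0\le Cs$ (via the projection in the Remark) and a covering of the $Cs$-sparse set, together with a mean-value expansion in which $\ell_a''$ is bounded and Lipschitz, we bound it by $\sqrt{\log d/n}\,\|\widehat{\boldsymbol\theta}_t-\boldsymbol\theta^*\|_1/\sqrt{s}$ up to constants. For an $\ell_1$ error of size $s r$ this term is $\lesssim \sqrt{s\log d/n}\,r$, which produces the contraction factor $\sqrt{s\log d/n}$ appearing in the statement and in $\lambda_t$.

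\emph{Third term (aggregation error).} Coordinate-wise, the honest local gradients $g^{(k)}_j(\widehat{\boldsymbol\theta}_t)$ are i.i.d. given $\widehat{\boldsymbol\theta}_t$. Because $\widehat{\boldsymbol\theta}_t$ depends on all data, this is handled by uniformity over a sparse net of radius comparable to the current error. Each $g^{(k)}_j$ has mean $\partial_j L(\widehat{\boldsymbol\theta}_t)$ and variance $\asymp 1/n$. We apply the median argument of \citet{stat}: use a Berry--Esseen bound for the normalized local gradient, whose third moment is finite by boundedness of $\ell_a'$ and sub-Gaussian $\boldsymbol x$. It then follows that the median of $m$ values with an $\alpha$ fraction adversarial deviates from the population gradient by
\[
\lesssim \frac{1}{\sqrt n}\Big(\alpha+\sqrt{\tfrac{\log d}{m}}\Big)+\frac1n .
\]
Here the Berry--Esseen remainder contributes $1/n$ (times $\sqrt s$ after converting to $\ell_2$). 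Replacing the population gradient by $\nabla\widehat L$ costs another $\sqrt{\log d/N}$ plus the uniform-deviation term already counted. Union bounds over $d$ coordinates and $T$ rounds give probability $1-Tmd^{-C}$.

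Combining the three bounds with $\lambda_{t+1}$ as specified yields the recursion
\[
\|\widehat{\boldsymbol\theta}_{t+1}-\boldsymbol\theta^*\|_2\lesssim \sqrt{\tfrac{s\log d}{N}}+\alpha\sqrt{\tfrac sn}+\tfrac{\sqrt s}{n}+\sqrt{\tfrac{s\log d}{n}}\;\|\widehat{\boldsymbol\theta}_t-\boldsymbol\theta^*\|_2 ,
\]
and the analogous recursion for the $\ell_1$ error. We unroll it from (\ref{a2}), where the initial error is $\sqrt{s\log d/n}$ in $\ell_2$, and sum the geometric series using $s\log d/n<1$.

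The main obstacle is the third step. The median concentration requires the local gradients evaluated at a data-dependent point $\widehat{\boldsymbol\theta}_t$ to be approximately normal uniformly in $\boldsymbol\theta$ over an $\ell_1$-ball of sparse vectors. I would handle this with an $\epsilon$-net over $Cs$-sparse vectors and Lipschitz continuity of $\boldsymbol\theta\mapsto\nabla\widehat L_k(\boldsymbol\theta)$ (using $\ell_a''\le 1$), at the cost of the $\log d$ factors.
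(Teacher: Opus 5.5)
Your third step has a genuine gap, and it costs you the leading rate. You apply the Berry--Esseen median bound of \citet{stat} directly to $\{\widehat g^{(k)}_j(\widehat{\boldsymbol\theta}_t)\}_{k\in\mathcal M}$, and you recover independence by a union bound over an $\epsilon$-net of $Cs$-sparse vectors.
\begin{itemize}
\item That net has log-cardinality of order $s\log d$. The $\binom{d}{Cs}$ possible supports alone contribute this, however small the radius.
\item To union-bound the failure probability $4e^{-2t}$ over the net and the $d$ coordinates, you therefore need $t\asymp s\log d$.
\item The concentration part of the median bound then becomes $n^{-1/2}\sqrt{s\log d/m}=\sqrt{s\log d/N}$ in $\ell_\infty$, not $\sqrt{\log d/N}$.
\item After $\|\cdot\|_2\lesssim\sqrt s\,\lambda$ you get $s\sqrt{\log d/N}$ instead of the claimed $\sqrt{s\log d/N}$.
\item The median condition also becomes $\alpha+\sqrt{s\log d/(m(1-\alpha))}+\cdots\le\frac12-\epsilon$, i.e.\ $m\gtrsim s\log d$.
\end{itemize}
Taking the net radius comparable to the current error does not help. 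The local gradients entering the median have variance of order $1/n$ whatever the radius. So uniformity here costs a factor $s$ under the square root, not merely $\log d$ factors.

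The missing idea, which is how the paper proves Proposition~\ref{star'}, is to separate the data-dependent point from the median before concentrating. Start from the identity \eqref{eq:U}:
$\widehat{\boldsymbol g}^{(k)}(\widehat{\boldsymbol\theta}_t)=\widehat{\boldsymbol g}^{(k)}(\boldsymbol\theta^\star)-\boldsymbol g(\boldsymbol\theta^\star)+\boldsymbol g(\widehat{\boldsymbol\theta}_t)+U^{(k)}(\widehat{\boldsymbol\theta}_t)$.
The contaminated median is sandwiched between honest quantiles (Lemma~3 and Eq.~(38) of \citet{Divide}). Each quantile moves by at most $\max_{k\in\mathcal M}|U^{(k)}_j|$ under the perturbation. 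Together these give
\[
\bigl\|\nabla\widetilde{\mathcal L}(\widehat{\boldsymbol\theta}_t)-\boldsymbol g(\widehat{\boldsymbol\theta}_t)\bigr\|_\infty\le\max_{\pm}\bigl\|Q_{\frac12\pm\alpha}\{\widehat{\boldsymbol g}^{(k)}(\boldsymbol\theta^\star):k\in\mathcal M\}-\boldsymbol g(\boldsymbol\theta^\star)\bigr\|_\infty+\max_{k\in\mathcal M}\sup_{\boldsymbol\theta\in\Omega}\bigl\|U^{(k)}(\boldsymbol\theta)\bigr\|_\infty .
\]
The first term lives at the fixed point $\boldsymbol\theta^\star$, where the honest gradients genuinely are i.i.d. So $t=C\log d$ and a union bound over the $d$ coordinates give $\sqrt{\log d/N}+\alpha/\sqrt n+1/n$. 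The $s\log d$ entropy is paid only in the second term (Lemma~\ref{unibd}). That term is proportional to the radius of $\Omega$, so it goes into the contraction term rather than the statistical floor. You already use exactly this split when you replace $\boldsymbol g(\widehat{\boldsymbol\theta}_t)$ by $\nabla\widehat L(\widehat{\boldsymbol\theta}_t)$; it has to be applied inside the median too.

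A smaller point concerns your second term, with $\Delta=\widehat{\boldsymbol\theta}_t-\boldsymbol\theta^\star$. A covering of $Cs$-sparse directions only bounds it by $\sqrt{s\log d/n}\,\|\Delta\|_2$. That is $\sqrt s$ times $\sqrt{\log d/n}\,\|\Delta\|_2$, hence at least $\sqrt s$ times your claimed $\sqrt{\log d/n}\,\|\Delta\|_1/\sqrt s$. The paper itself only uses $\sqrt{\log d/n}\,\|\Delta\|_1$ here (its $T_{11}$). Combined with $\|\cdot\|_2\lesssim\sqrt s\,\lambda$, this step supports a per-round factor $s\sqrt{\log d/n}$ rather than $\sqrt{s\log d/n}$. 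The paper's induction is loose on the same $\sqrt s$, and this affects only the transient term.
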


\begin{remark}
Note that when $ T $ is sufficiently large (as long as $\log(m)\lesssim T$), we give the byzantine robust error rate in sparse setting, matching the order-optimal statistical rates $\widetilde{\mathcal{O}}\left(\alpha/\sqrt{n}+1/\sqrt{N}\right)$, which is derived by \citet{stat} using robust distributed gradient descent (GD) algorithms. Also, we make it communicational efficient: with $\mathcal{O}(md)$ communication per round, with as few rounds as possible.
\end{remark}

\subsection{Support Vector Machine}
Now consider a two-class classification problem using i.i.d. data $\{(y_i, \boldsymbol{x}_i)\}_{i=1}^N$, where $y_i \in \{-1, 1\}$ denotes the class label and $\boldsymbol{x}_i \in \mathbb{R}^d$ represents the feature vector. We adopt the regularization of SVMs, which is formulated as:
\begin{equation}\label{em}
	(\widehat{b}, \widehat{\boldsymbol{\theta}}) = \arg\min_{b, \boldsymbol{\theta}} \frac{1}{N}\sum_{i=1}^N \left( 1 - y_i \left( \boldsymbol{x}_i^\top \boldsymbol{\theta} + b \right) \right)_+ + \lambda \|\boldsymbol{\theta}\|_1.
\end{equation}
Here:   $\left(1 - x\right)_+ = \max\{1 - x, 0\}$ denotes the hinge loss, which encourages $y_i$ and $\boldsymbol{x}_i^\top \boldsymbol{\theta} +b$ to have the same sign (i.e., correct classification).
 $(b, \boldsymbol{\theta}) \in \mathbb{R} \times \mathbb{R}^d$ are the parameters to be estimated, where  $b$ is the intercept, wihch we don't take  into account in our case without loss of  generality.  Assume $b=0$, then $\widehat{L}(\boldsymbol{\theta}) := (1/N)\sum_{i=1}^N \left( 1 - y_i \left( \boldsymbol{x}_i^\top \boldsymbol{\theta} \right) \right)_+$ is the empirical loss function.
The optimization problem for the true data distribution of (\ref{em}) is given by:
\[
(b^\star, \boldsymbol{\theta}^\star) = \arg\min_{b, \boldsymbol{\theta}} L(b, \boldsymbol{\theta}) := \mathbb{E}\left[ \left( 1 - y \left( \boldsymbol{x}^\top \boldsymbol{\theta} + b \right) \right)_+ \right].
\]
We assume that the optimal population parameters $(b^\star, \boldsymbol{\theta}^\star)$ exist and are uniquely determined.
\begin{assumption}\label{assumption3}
	The conditional densities of $\boldsymbol{x}^\top \boldsymbol{\theta}^\star$ given $y = 1$ and $y = -1$ are denoted as $f$ and $g$, respectively.
	\begin{subasm}
		\item $f$ is uniformly bounded away from $0$ and $\infty$ in a neighborhood of $1$, and $g$ is uniformly bounded away from $0$ and $\infty$ in a neighborhood of $-1$.\label{b1}
		\item  Denote $\boldsymbol{X} = (\boldsymbol{x}_1, \boldsymbol{x}_2, \dots, \boldsymbol{x}_n)^T$ to be the feature design matrix and define restricted eigenvalues as follows,
\[
\lambda_{max} = \max_{\boldsymbol{\delta} \in \mathbb{R}^{d}: \|\boldsymbol{\delta}\|_0 \leq Cs} \frac{\boldsymbol{\delta}^T \boldsymbol{X}^T \boldsymbol{X} \boldsymbol{\delta}}{n\|\boldsymbol{\delta}\|_2^2},
\quad
\lambda_{min}(H(\boldsymbol{\theta}^\star); s) = \min_{\boldsymbol{\delta} \in \mathcal{C}(S,3)} \frac{\boldsymbol{\delta}^T H(\boldsymbol{\theta}^\star) \boldsymbol{\delta}}{\|\boldsymbol{\delta}\|_2^2},
\]
where  $S \subset [d]$ , $|S| \leq s$, and $H(\cdot)$ is the  Hessian matrix of $L(\boldsymbol{\theta})$. Then, $\lambda_{max}$ and $\lambda_{min}$ are bounded away from zero.\label{b2}
	\end{subasm}
\end{assumption}

\begin{theorem}\label{theorem4}
	Suppose Assumptions (\ref{a1})(\ref{a2})(\ref{a3}) and \ref{assumption3} hold. Consider one round of Algorithm~\ref{Al2} applied to the hinge loss with robust aggregation Option~\uppercase\expandafter{\romannumeral1}, and let $\widehat{\boldsymbol{\theta}}$ denote the resulting distributed estimator. Choose the robustification parameter $\beta>\alpha$, and let the regularization parameter satisfy
	\[
	\lambda \asymp \sqrt{\frac{\log d}{N}}+\beta\sqrt{\frac{\log d}{n}}+\frac{s\log d}{n^{3/4}},
	\]
	with $\lambda \ge 2\|\nabla \widehat{\mathcal{L}}(\boldsymbol{\theta}^{\star})\|_\infty$. Then, with probability at least $1-cmd^{-C}-cn^{-C}$,
	\[
	\left\|\widehat{\boldsymbol{\theta}}-\boldsymbol{\theta}^{\star}\right\|_{2} \lesssim \sqrt{\frac{s\log d}{N}}+\beta\sqrt{\frac{s\log d}{n}}+\frac{s^{3/2}\log d}{n^{3/4}}.
	\]
\end{theorem}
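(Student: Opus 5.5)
We follow the standard Lasso-type argument for the one-step surrogate problem (\ref{eq2}), with the hinge loss. Write $\widehat{\boldsymbol{\theta}}_0$ for the initial estimator, $\boldsymbol{\Delta}=\widehat{\boldsymbol{\theta}}-\boldsymbol{\theta}^\star$, and
\[
\widetilde{L}(\boldsymbol{\theta})=\widehat{L}_1(\boldsymbol{\theta})-\langle \nabla\widehat{L}_1(\widehat{\boldsymbol{\theta}}_0)-\nabla\widetilde{\mathcal{L}}(\widehat{\boldsymbol{\theta}}_0),\boldsymbol{\theta}\rangle
\]
for the surrogate loss. Here $\nabla$ denotes a subgradient of the hinge loss. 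Its population counterpart is smooth by Assumption~\ref{b1}, with gradient $\nabla L(\boldsymbol{\theta})=-\mathbb{E}[y\boldsymbol{x}\mathbf{1}\{y\boldsymbol{x}^\top\boldsymbol{\theta}<1\}]$ and Hessian $H(\boldsymbol{\theta})$ expressed through $f,g$.

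\textbf{Step 1 (basic inequality and cone).} Optimality of $\widehat{\boldsymbol{\theta}}$ gives
\[
\widetilde{L}(\widehat{\boldsymbol{\theta}})-\widetilde{L}(\boldsymbol{\theta}^\star)\le \lambda(\|\boldsymbol{\theta}^\star\|_1-\|\widehat{\boldsymbol{\theta}}\|_1).
\]
We then show that $\|\nabla\widetilde{L}(\boldsymbol{\theta}^\star)\|_\infty\le\lambda/2$, with $\nabla\widetilde{L}(\boldsymbol{\theta}^\star)=\nabla\widetilde{\mathcal{L}}(\widehat{\boldsymbol{\theta}}_0)+[\nabla\widehat{L}_1(\boldsymbol{\theta}^\star)-\nabla\widehat{L}_1(\widehat{\boldsymbol{\theta}}_0)]$. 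This puts $\boldsymbol{\Delta}\in\mathcal{C}(S,3)$, after which restricted strong convexity from Assumption~\ref{b2} gives $\|\boldsymbol{\Delta}\|_2\lesssim\sqrt{s}\lambda$. The three terms of $\lambda$ then produce exactly the three terms of the stated bound.

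\textbf{Step 2 (controlling the surrogate gradient).} Decompose
\begin{align*}
\nabla\widetilde{L}(\boldsymbol{\theta}^\star)={}&\bigl[\nabla\widetilde{\mathcal{L}}(\widehat{\boldsymbol{\theta}}_0)-\nabla L(\widehat{\boldsymbol{\theta}}_0)\bigr]\\
&+\bigl[\nabla L(\widehat{\boldsymbol{\theta}}_0)-\nabla L(\boldsymbol{\theta}^\star)-(\nabla\widehat{L}_1(\widehat{\boldsymbol{\theta}}_0)-\nabla\widehat{L}_1(\boldsymbol{\theta}^\star))\bigr]\\
&+\nabla\widehat{L}_1(\boldsymbol{\theta}^\star)-\nabla L(\boldsymbol{\theta}^\star)+\dots
\end{align*}
The terms are handled as follows.

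\emph{(a) Trimmed-mean error.} Each honest local subgradient coordinate is an average of $n$ sub-Gaussian terms, since $|y\boldsymbol{x}\mathbf{1}\{\cdot\}|\le|\boldsymbol{x}|$. The coordinate-wise trimmed-mean analysis of \citet{stat} then gives an error of order $\sqrt{\log d/N}+\beta\sqrt{\log d/n}$, uniformly over $\boldsymbol{\theta}$ in a sparse $\ell_1$-ball. Uniformity over that set follows from a covering argument over $\{\|\boldsymbol{\theta}\|_0\le Cs\}$, using $N(\epsilon,\cdot,\cdot)$, together with a union bound over $m$ machines and $d$ coordinates. This yields the probability $1-cmd^{-C}$.

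\emph{(b) Local empirical process.} The second bracket is the increment of the centered empirical process of indicator-weighted covariates over the set $\|\boldsymbol{\theta}-\boldsymbol{\theta}^\star\|_1\le Cs\sqrt{\log d/n}$, which contains $\widehat{\boldsymbol{\theta}}_0$ by Assumption~\ref{a2}. Since the hinge subgradient is discontinuous, we bound it through the $L_2$-size of the indicator difference, $\mathbb{E}|\mathbf{1}\{y\boldsymbol{x}^\top\widehat{\boldsymbol{\theta}}_0<1\}-\mathbf{1}\{y\boldsymbol{x}^\top\boldsymbol{\theta}^\star<1\}|$. By the bounded densities in Assumption~\ref{b1} this is $\lesssim\|\widehat{\boldsymbol{\theta}}_0-\boldsymbol{\theta}^\star\|_2\approx\sqrt{s\log d/n}$. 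A Bernstein/VC bracketing bound then gives an order of
\[
\sqrt{\frac{\log d\cdot(s\log d/n)^{1/2}}{n}}\lesssim\frac{s\log d}{n^{3/4}},
\]
up to logs and $s$-factors, with failure probability $n^{-C}$. This step is the source of the $n^{-3/4}$ term.

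\emph{(c) Smooth remainder.} Any Taylor remainder of $\nabla L$ is $O(\|\boldsymbol{\Delta}_0\|_2^2)$, which is dominated by (b).

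\textbf{Step 3 (restricted strong convexity).} On the cone, we lower bound $\widetilde{L}(\widehat{\boldsymbol{\theta}})-\widetilde{L}(\boldsymbol{\theta}^\star)-\langle\nabla\widetilde{L}(\boldsymbol{\theta}^\star),\boldsymbol{\Delta}\rangle$. It equals $\widehat{L}_1(\widehat{\boldsymbol{\theta}})-\widehat{L}_1(\boldsymbol{\theta}^\star)-\langle\nabla\widehat{L}_1(\boldsymbol{\theta}^\star),\boldsymbol{\Delta}\rangle$, which we split into the population part and an empirical-process deviation. The population part is at least $\tfrac12\lambda_{\min}(H(\boldsymbol{\theta}^\star);s)\|\boldsymbol{\Delta}\|_2^2-O(\|\boldsymbol{\Delta}\|_2^3)$ by Assumption~\ref{b2}. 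The deviation is bounded uniformly over $\|\boldsymbol{\Delta}\|_1\le4\sqrt{s}\|\boldsymbol{\Delta}\|_2$ by a peeling argument. It is of order $\|\boldsymbol{\Delta}\|_2^{3/2}\sqrt{s\log d/n}$ plus a lower-order term, and is therefore absorbed. Here $\lambda_{\max}$ is used to control the design contribution.

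\textbf{Main obstacle.} The hardest step is the non-smooth empirical-process bound (b) and its companion in Step 3. It requires a uniform bound over a sparse neighborhood for indicator processes, obtained via the VC-class/Talagrand inequality. The localization radius must be tracked carefully so that it yields precisely $s\log d/n^{3/4}$ rather than a cruder rate.
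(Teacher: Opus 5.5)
Your outline follows the paper's proof. You get the cone condition from $\lambda\ge 2\|\nabla\widehat{\mathcal{L}}(\boldsymbol{\theta}^\star)\|_\infty$. You then combine a uniform bound on the hinge remainder $\widehat{L}_1(\boldsymbol{\theta}^\star+\boldsymbol{\delta})-\widehat{L}_1(\boldsymbol{\theta}^\star)-\boldsymbol{\delta}^\top\nabla\widehat{L}_1(\boldsymbol{\theta}^\star)$ over the cone with a population curvature bound, which yields $\|\widehat{\boldsymbol{\theta}}-\boldsymbol{\theta}^\star\|_2\lesssim\sqrt{s}\lambda$ up to lower-order terms. Finally, a sup-norm bound on the surrogate gradient supplies the three terms of the rate. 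The bookkeeping differs in two places:
\begin{itemize}
\item \emph{Gradient bound.} The paper imports the non-Byzantine CSL bound (Proposition~1 of \cite{wanglei1}) for $\nabla\widehat{L}_1(\boldsymbol{\theta}^\star)-\nabla\widehat{L}_1(\widehat{\boldsymbol{\theta}}_0)+\nabla\widehat{L}(\widehat{\boldsymbol{\theta}}_0)$; this is where $s\log d/n^{3/4}$ comes from. It then applies Lemma~\ref{trmean} centered at the all-machine average $\bar{\boldsymbol{g}}$. You center at the population gradient and re-derive the $n^{-3/4}$ term from the indicator-increment process, which is more self-contained.
\item \emph{Curvature.} The paper uses a global lower bound $C(t^2\wedge t)$ on the population excess loss (as in Lemma~4 of \cite{quansparse}), not a local quadratic expansion.
\end{itemize}

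Three steps need repair.

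(i) Your decomposition of $\nabla\widetilde{L}(\boldsymbol{\theta}^\star)$ double-counts. The two brackets already sum to $\nabla\widetilde{L}(\boldsymbol{\theta}^\star)-\nabla L(\boldsymbol{\theta}^\star)=\nabla\widetilde{L}(\boldsymbol{\theta}^\star)$, since $\nabla L(\boldsymbol{\theta}^\star)=0$. The extra term $\nabla\widehat{L}_1(\boldsymbol{\theta}^\star)-\nabla L(\boldsymbol{\theta}^\star)$ is of order $\sqrt{\log d/n}$. That is exactly the local noise the surrogate correction cancels, and it would destroy the rate if it were really present. Drop it; item (c) is then also unnecessary.

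(ii) The bound $\tfrac12\lambda_{\min}\|\boldsymbol{\Delta}\|_2^2-O(\|\boldsymbol{\Delta}\|_2^3)$ is only informative once $\|\boldsymbol{\Delta}\|_2$ is known to be small. The cubic remainder also needs smoother $f,g$ than Assumption~(\ref{b1}) provides. So you must localize first. One option is the paper's $C(t^2\wedge t)$ bound. Another is the convexity rescaling $\overline{\boldsymbol{\theta}}=\boldsymbol{\theta}^\star+r\boldsymbol{\Delta}$ from the proof of Proposition~\ref{thm0}, which preserves both the basic inequality and the cone.

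(iii) In (a), a net over the parameter set does not control indicator-valued subgradients, because they are not Lipschitz in $\boldsymbol{\theta}$. The uniform trimmed-mean argument of \cite{stat} relies on smooth gradients. An unlocalized VC bound over $s$-sparse $\boldsymbol{\theta}$ would instead cost an extra $\sqrt{s}$, giving $\beta\sqrt{s\log d/n}$ in the gradient. The fix is to split each honest $\widehat{\boldsymbol{g}}^{(k)}(\widehat{\boldsymbol{\theta}}_0)-\nabla L(\widehat{\boldsymbol{\theta}}_0)$ into two parts:
\begin{itemize}
\item its value at $\boldsymbol{\theta}^\star$, bounded pointwise with a union over $k$ and $j$;
\item the localized increment you already control in (b).
\end{itemize}
Feeding this into Lemma~\ref{trmean} gives $\sqrt{\log d/N}+\beta\sqrt{\log d/n}+s\log d/n^{3/4}$, as required. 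The paper glosses over this point by treating $\widehat{\boldsymbol{\theta}}_0$ as fixed.
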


\begin{remark}\label{remark1}
	Although Theorem~\ref{theorem4} is stated for one round, the same argument for multi-round estimator yields the following rate as long as $n \geq \max\{C s^2\log d, N^{2/3}\}$ and $T \geq C \log(m)$:
	\[
	\left\|\widetilde{\boldsymbol{\theta}}_{T}-\boldsymbol{\theta}^{*}\right\|_{2} = O_{P}\!\left(\sqrt{(s \log d)/N}\right),
	\]
	which means that the multi-round distributed estimator attains the same rate as the full-sample estimator in \eqref{eq1} while using $\mathcal{O}(md)$ communication per round.
\end{remark}


\section{Discussion}\label{Secdis}

In this paper, we revisited two Byzantine-robust distributed surrogate function algorithms for sparse linear regression and extended them to SVM, by integrating coordinate-wise median and trimmed mean operations into the learning process. We investigate how to design communication-efficient estimators that achieve near-optimal statistical rates while supporting non-smooth losses and sparse regularization, under the existance of Byzantine attacks and heavy-tailed noise. The key is to combine a communication-efficient iterative method with robust aggregation, which controls anomalous updates from Byzantine workers and heavy-tailed samples before they enter the error recursion.

\citet{Communication-efficient} proposed the CEASE framework, which was originally designed for smooth loss functions. We extend it to two non-smooth settings: hinge loss and quantile regression. 
\cite{stat} established the first statistical lower bound for distributed estimation under Byzantine attacks.
\cite{zhou2023} introduced a penalty term but only obtained a rate of order $O(\sqrt{d})$ with respect to the dimension $d$.
In contrast, we achieve a near-optimal rate of order $O(\sqrt{s\log d/N}+\alpha\sqrt{s/n})$ in sparse settings,
using the technical approach based on surrogate likelihood.

The present results still have limitations. Our analysis depends on structural assumptions like restricted eigenvalue conditions, strong convexity, and  bounded conditional densities. These conditions are useful for theory, but they may be restrictive in applications. Another practical issue is that the Byzantine fraction $\alpha$ is usually unknown, so trimming levels cannot be fixed in advance without some form of adaptation. Future work could therefore focus on weaker and more verifiable assumptions, aggregation and tuning methods that adapt to unknown $\alpha$ and better use of heterogeneity in non-IID data, under more challenging attack models.

\section*{Acknowledgments}
This work is supported by  grants from National Key R\&D Program of China (No. 2024YFA1013502), NSF of China (Grant No. U23A2064) and the Summit Advancement Disciplines of Zhejiang Province (Zhejiang Gongshang University - Statistics).
\storecurrentcounter{main:last-section}{section}
\storecurrentcounter{main:last-equation}{equation}
\storecurrentcounter{main:last-theorem}{theorem}
\storecurrentcounter{main:last-lemma}{lemma}
\storecurrentcounter{main:last-proposition}{proposition}
\storecurrentcounter{main:last-assumption}{assumption}
\storecurrentcounter{main:last-corollary}{corollary}
\storecurrentcounter{main:last-definition}{definition}
\storecurrentcounter{main:last-remark}{remark}
\storecurrentcounter{main:last-example}{example}
\storecurrentcounter{main:last-figure}{figure}
\storecurrentcounter{main:last-table}{table}
\storecurrentcounter{main:last-algorithm}{algocf}

\ifx\papermode\combinedmode
\clearpage
\appendix
\providecommand{\papermode}{combined}
\providecommand{\separatemode}{separate}
\providecommand{\combinedmode}{combined}

\ifx\papermode\combinedmode
\newcommand{\suppenddocument}{}
\else
\documentclass[11pt]{article}

\usepackage[margin=1in]{geometry}
\usepackage[authoryear]{natbib}
\usepackage{amsmath, amsfonts, amssymb, amsthm}
\usepackage{bm}
\usepackage{graphicx}
\graphicspath{{./}{./figures/}}
\usepackage{subfigure}
\usepackage[section]{placeins}
\usepackage{booktabs}
\usepackage{threeparttable}
\usepackage{multirow}
\usepackage{array}
\usepackage{tkz-euclide, pgfplots}
\pgfplotsset{compat=1.18}
\usepackage[ruled]{algorithm2e}
\SetKwInput{KwIn}{Input}
\SetKwInput{KwOut}{Output}
\usepackage{url}
\usepackage{color}
\usepackage{indentfirst}
\usepackage{enumitem}
\setlist[itemize]{noitemsep}
\usepackage{refcount}
\usepackage[colorlinks=true, unicode=true]{hyperref}
\usepackage{xr-hyper}
\makeatletter
\newcommand{\externaldocumentnobib}[1]{%
  \begingroup
  \let\bibcite\@gobbletwo
  \let\citation\@gobble
  \let\bibdata\@gobble
  \let\bibstyle\@gobble
  \externaldocument{#1}%
  \endgroup
}
\makeatother
\externaldocumentnobib{main}

\newtheorem{theorem}{Theorem}
\newtheorem{lemma}{Lemma}
\newtheorem{proposition}{Proposition}
\newtheorem{assumption}{Assumption}
\renewcommand{\theassumption}{\Alph{assumption}}
\newtheorem{corollary}{Corollary}
\newtheorem{definition}{Definition}
\newtheorem{remark}{Remark}
\newtheorem{example}{Example}

\newlist{subasm}{enumerate}{1}
\setlist[subasm]{
    label=(\theassumption\arabic*),  
    ref=\theassumption\arabic*,      
    itemsep=0pt,    topsep=0pt,      
    partopsep=0pt,  parsep=0pt,      
}

\title{Supplementary Material for\\Byzantine-Robust Distributed Sparse Learning Revisited}
\author{}
\date{}
\newcommand{\suppenddocument}{\end{document}}

\makeatletter
\newcommand{\continuecounterfrommain}[2]{%
  \IfRefUndefinedExpandable{#1}{}{%
    \setcounter{#2}{\numexpr\getrefnumber{#1}\relax}%
  }%
}
\makeatother

\begin{document}
\maketitle
\fi

\ifx\papermode\separatemode
\continuecounterfrommain{main:last-section}{section}
\continuecounterfrommain{main:last-equation}{equation}
\continuecounterfrommain{main:last-theorem}{theorem}
\continuecounterfrommain{main:last-lemma}{lemma}
\continuecounterfrommain{main:last-proposition}{proposition}
\continuecounterfrommain{main:last-assumption}{assumption}
\continuecounterfrommain{main:last-corollary}{corollary}
\continuecounterfrommain{main:last-definition}{definition}
\continuecounterfrommain{main:last-remark}{remark}
\continuecounterfrommain{main:last-example}{example}
\continuecounterfrommain{main:last-figure}{figure}
\continuecounterfrommain{main:last-table}{table}
\continuecounterfrommain{main:last-algorithm}{algocf}
\fi



\appendix
\noindent\textbf{Organization of the Supplement.}
This supplement collects details omitted from the main paper due to space constraints. Section~\ref{sec:supp-defs} lists key definitions and notation that are abbreviated in the main text and repeatedly used in later technical arguments. Section~\ref{sec:supp-quantile} covers sparse quantile regression as an additional model. Section~\ref{sec:supp-proofs} provides proofs of the main results. Section~\ref{Secexper} reports additional numerical experiments.

\section{Definitions and Additional Notation}\label{sec:supp-defs}

Recall that, we use $m$ workers indexed by $[m]=\{1,\dots,m\}$ and total sample size $N=mn$. An $\alpha$ fraction of workers may be Byzantine: $\mathcal{B}\subseteq[m]$ denotes the Byzantine set with $|\mathcal{B}|=\alpha m$, and $\mathcal{M}=[m]\setminus\mathcal{B}$ denotes the set of honest workers.

At communication round $t$, worker $k$ sends a vector message
$\widehat{\boldsymbol{g}}^{(k)}(\widehat{\boldsymbol{\theta}}_t)\in\mathbb{R}^d$
(a gradient or subgradient evaluated at $\widehat{\boldsymbol{\theta}}_t$). When $k\in \mathcal{M}$, $\widehat{\boldsymbol{g}}^{(k)}(\cdot)=\nabla \widehat{L}_k(\cdot)$, and $\widehat{\boldsymbol{g}}^{(k)}$ could be any vector in $\mathbb{R}^d$ otherwise.  Its $j$th coordinate is denoted by
\[
\widehat g_j^{(k)}=\left[\widehat{\boldsymbol{g}}^{(k)}(\widehat{\boldsymbol{\theta}}_t)\right]_j,\qquad k\in[m],
\]
and when the honest workers are identically distributed, we write $\boldsymbol{g}(\boldsymbol{\theta}):=\mathbb{E}\!\left[\nabla \widehat{L}_{k}(\boldsymbol{\theta})\right]$, and $g_j(\boldsymbol{\theta}):=\mathbb{E}\!\left[\nabla \widehat{L}_{k}(\boldsymbol{\theta})\right]_j$. 

Let $\widehat g_{j,(1)}\le \cdots \le \widehat g_{j,(m)}$ be the order statistics of $\{\widehat g_j^{(k)}\}_{k=1}^m$.
and define the trimmed/untrimmed index sets as follows. Let $b=\beta m$ and assume $b$ is an integer. Let $\mathcal{T}_j(t)\subseteq[m]$ denote the set of trimmed workers for coordinate $j$ at round $t$, obtained by removing $b$ indices corresponding to the smallest values of $\{\widehat g_j^{(k)}\}_{k=1}^m$ and $b$ indices corresponding to the largest values of $\{\widehat g_j^{(k)}\}_{k=1}^m$ (ties are broken arbitrarily). Let
\[
\mathcal{U}_j(t)=[m]\setminus \mathcal{T}_j(t)
\]
be the set of untrimmed workers, so that $|\mathcal{U}_j(t)|=(1-2\beta)m$.
Then the trimmed mean at coordinate $j$ is
\[
\operatorname{trmean}_\beta\{\widehat g_j^{(k)}:k\in[m]\}
=\frac{1}{(1-2\beta)m}\sum_{k\in \mathcal{U}_j(t)} \widehat g_j^{(k)},
\]
and the coordinate-wise median is
\[
\operatorname{median}\{\widehat g_j^{(k)}:k\in[m]\}=\widehat g_{j,(\lceil m/2\rceil)}.
\]
Applying these operators to vectors means applying them coordinate-wise across $\{\widehat{\boldsymbol{g}}^{(k)}(\widehat{\boldsymbol{\theta}}_t)\}_{k=1}^m$. The resulting aggregated vector at round $t$ is denoted by $\nabla \widetilde{\mathcal{L}}(\widehat{\boldsymbol{\theta}}_t)$ in Algorithm~\ref{Al2}. 
Besides, we will write the $\tau$-quantile of the received messages as
\[Q_\tau\{\widehat g_j^{(k)}:k\in[m]\}=\widehat g_{j,(\tau m)}.
\]

In the proofs, we denote the surrogate objective in \eqref{eq2} by $\widehat{\mathcal{L}}(\boldsymbol{\theta},\widehat{\boldsymbol{\theta}}_t)$ and abbreviate $\widehat{\mathcal{L}}(\boldsymbol{\theta},\widehat{\boldsymbol{\theta}}_0)$ as $\widehat{\mathcal{L}}(\boldsymbol{\theta})$ when the context is clear, and $d_t:=\|\boldsymbol{\theta}^\star-\widehat{\boldsymbol{\theta}}_t\|_2$ denotes the $\ell_2$ estimation error at round $t$. For any general loss function $\ell$, we write $\tilde{\ell}(\cdot):=(\ell(\cdot))'$, as the gradient or subgradient of the loss function with respect to the parameter $\boldsymbol{\theta}$.

\section{Quantile Regression}\label{sec:supp-quantile}

We next turn to sparse quantile regression. Unlike mean regression, quantile regression focus on the conditional distribution at the level $\tau$ and is therefore robust to data with asymmetry, heteroscedasticity and heavy-tailed distributions. We use the quantile loss function $\rho_\tau(x)=x(\tau-\mathbf{1}_{\{x\leq 0\}})$. The true value $ \boldsymbol{\theta}^{*} $ can be considered as the minimizer of $ E\left[ l(\boldsymbol{\theta})\right]  $.
We obtain our local estimator through solving the following problem:
\[
\widehat{\boldsymbol{\theta}}\in\arg\min\limits_{\boldsymbol{\theta}\in \mathbb{R}^d} \left\{\widehat{L}_1(\boldsymbol{\theta})+\lambda\left\|\boldsymbol{\theta}\right\|_1\right\}, \; \text{where}\;\widehat{L}_1(\boldsymbol{\theta})=(1/n) \sum_{i=1}^n \rho_\tau\left(y_i-\boldsymbol{x}_i^{\top} \boldsymbol{\theta}\right),
\]

Meanwhile, quantile regression introduces a non-smooth loss. The lack of differentiability of the quantile loss function means that arguments based on local smoothness cannot be directly applied, and requires specialized assumptions. We introduce the assumptions from \cite{wanglei1} as follows:
\begin{assumption}\label{assumption2}
	Let $f(\cdot\mid\boldsymbol{x})$ and $F(\cdot\mid\boldsymbol{x})$ be the conditional density and conditional cumulative distribution function of $ y $ given $ \boldsymbol{x} $, respectively.
	\begin{subasm}
		\item $f(y\mid\boldsymbol{x})$ is bounded and continuously differentiable in $y$ for all $\boldsymbol{x}$ in the support of $\boldsymbol{x}$, and $f^{\prime}(y \mid\boldsymbol{x})$ (derivative with respect to $y$) is bounded in absolute value by a constant, uniformly in $y$ and $\boldsymbol{x}$. $ f\left(\boldsymbol{x}^{\top} \boldsymbol{\theta}^{\star} \mid \boldsymbol{x}\right)$ is uniformly bounded away from zero.\label{c1}
		\item 
	Let
	$$\kappa_J^2:=\inf _{\delta \in \mathcal{C}(S,3), \delta \neq 0} \frac{\delta^{\mathrm{T}} E[\boldsymbol{x}\boldsymbol{x}^{\mathrm{T}}]\delta} {\|\delta_{S \cup \bar{S}(\delta, J)}\|^2},$$
	where $S$ is the support of $\boldsymbol{\theta}^\star$ (indices of its nonzero components) and $\bar{S}(\boldsymbol{\delta}, J) \subset\{1, \ldots, d\} \backslash S$ is the support of the $J$ largest in absolute value components of $\boldsymbol{\delta}$ outside $S$. Also, let
	$$
	q:=\inf _{\delta \in \mathcal{C}(S,3), \delta \neq 0} \frac{\left(E\left|\boldsymbol{x}_i^{\mathrm{T}} \boldsymbol{\delta}\right|^2\right)^{3 / 2}}{E\left|\boldsymbol{x}_i^{\mathrm{T}} \boldsymbol{\delta}\right|^3}.
	$$
	These quantities are the same as that defined in \cite{quansparse}.
Then,	$ \kappa_s $ and $ q $ are bounded away from zero.\label{c2}
	\end{subasm}

\end{assumption}

\begin{proposition}\label{theorem3}
	Suppose Assumptions (\ref{a1})(\ref{a2})(\ref{a3})  and \ref{assumption2} hold. Consider one round of Algorithm~\ref{Al2} with robust aggregation Option~\uppercase\expandafter{\romannumeral1}, and let $\widehat{\boldsymbol{\theta}}$ denote the resulting distributed estimator. Choose the robustification parameter $\beta>\alpha$, and let the regularization parameter satisfy
	\begin{equation}\label{eq5}
		\lambda \asymp \sqrt{(\log d)/N}+\beta\sqrt{(\log d)/n}+(s\log d)/n^{3/4},
	\end{equation}
	with $\lambda \ge 2\|\nabla \widehat{\mathcal{L}}(\boldsymbol{\theta}^{\star})\|_{\infty}$. Then, with probability at least $1-cmd^{-C}-cn^{-C}$,
	\[
	\left\|\widehat{\boldsymbol{\theta}}-\boldsymbol{\theta}^{*}\right\|_{2}\lesssim \sqrt{\frac{(s\log d)}{N} }+\beta\sqrt{\frac{s\log d}{n}}+\frac{s^{3/2}\log d}{n^{3/4}} .
	\]
\end{proposition}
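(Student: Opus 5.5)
The plan follows the standard deterministic-plus-probabilistic template for $\ell_1$-regularized M-estimators (Negahban et al.), applied to the one-step surrogate objective
\[
\widehat{\mathcal{L}}(\boldsymbol{\theta})=\widehat{L}_1(\boldsymbol{\theta})-\langle \boldsymbol{g}^{(1)}(\widehat{\boldsymbol{\theta}}_0)-\nabla\widetilde{\mathcal{L}}(\widehat{\boldsymbol{\theta}}_0),\boldsymbol{\theta}\rangle,
\]
where $\boldsymbol{g}^{(1)}$ is a subgradient of the quantile loss.

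\textbf{Deterministic step.} Write $\Delta=\widehat{\boldsymbol{\theta}}-\boldsymbol{\theta}^\star$. Since $\lambda\ge 2\|\nabla\widehat{\mathcal{L}}(\boldsymbol{\theta}^\star)\|_\infty$, the basic inequality forces $\Delta\in\mathcal{C}(S,3)$, so $\|\Delta\|_1\le 4\sqrt{s}\|\Delta\|_2$. Because $\widehat{\mathcal{L}}$ differs from $\widehat{L}_1$ only by a linear term, its excess $\widehat{\mathcal{L}}(\boldsymbol{\theta}^\star+\Delta)-\widehat{\mathcal{L}}(\boldsymbol{\theta}^\star)-\langle\nabla\widehat{\mathcal{L}}(\boldsymbol{\theta}^\star),\Delta\rangle$ equals the Bregman excess of $\widehat{L}_1$. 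We lower bound that excess by its population version minus an empirical-process term. Using \ref{c1}, \ref{c2} and Knight's identity, as in \cite{quansparse} and \cite{wanglei1}, the population excess is at least $c\min\{\|\Delta\|_2^2,\ q\|\Delta\|_2\}$. The empirical process is controlled uniformly over $\{\Delta\in\mathcal{C}(S,3):\|\Delta\|_2\le r\}$ by symmetrization, contraction and \ref{a3}, giving a bound of order $r\sqrt{s\log d/n}$. On the event where $r$ lies in the quadratic regime, combining the two bounds gives
\[
c\|\Delta\|_2^2\lesssim \lambda\sqrt{s}\,\|\Delta\|_2+\|\Delta\|_2\sqrt{s\log d/n}.
\]
This yields $\|\Delta\|_2\lesssim\sqrt{s}\,\lambda$, so the claimed rate follows once $\lambda$ of order \eqref{eq5} is shown to dominate $2\|\nabla\widehat{\mathcal{L}}(\boldsymbol{\theta}^\star)\|_\infty$ with high probability.

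\textbf{Bounding the gradient.} Decompose
\[
\nabla\widehat{\mathcal{L}}(\boldsymbol{\theta}^\star)=\bigl[\nabla\widetilde{\mathcal{L}}(\widehat{\boldsymbol{\theta}}_0)-\boldsymbol{g}(\widehat{\boldsymbol{\theta}}_0)\bigr]+\bigl[\boldsymbol{g}^{(1)}(\boldsymbol{\theta}^\star)-\boldsymbol{g}^{(1)}(\widehat{\boldsymbol{\theta}}_0)-\boldsymbol{g}(\boldsymbol{\theta}^\star)+\boldsymbol{g}(\widehat{\boldsymbol{\theta}}_0)\bigr],
\]
using $\boldsymbol{g}(\boldsymbol{\theta}^\star)=0$.

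The first bracket is the trimmed-mean error. Here we follow \cite{stat}: for each coordinate $j$, the honest local subgradients $n^{-1}\sum_i x_{ij}(\mathbf{1}\{y_i\le \boldsymbol{x}_i^\top\boldsymbol{\theta}\}-\tau)$ are sub-exponential with scale $1/\sqrt n$. Since $\beta>\alpha$, every untrimmed value lies between honest order statistics. Standard trimmed-mean concentration then gives an error of $\sqrt{\log d/N}+\beta\sqrt{\log d/n}$ per coordinate. A union bound over $d$ coordinates, together with uniformity over a net of sparse $\boldsymbol{\theta}$ near $\boldsymbol{\theta}^\star$ (needed because $\widehat{\boldsymbol{\theta}}_0$ is random), accounts for the $cmd^{-C}$ failure probability.

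The second bracket is a local empirical-process increment of an indicator class over $\{\boldsymbol{\theta}:\|\boldsymbol{\theta}\|_0\le Cs,\ \|\boldsymbol{\theta}-\boldsymbol{\theta}^\star\|_1\le Cs\sqrt{\log d/n}\}$, using \ref{a2}. The variance of $x_{ij}\bigl(\mathbf{1}\{y\le \boldsymbol{x}^\top\boldsymbol{\theta}\}-\mathbf{1}\{y\le \boldsymbol{x}^\top\boldsymbol{\theta}^\star\}\bigr)$ is at most $\|f\|_\infty\,\mathbb{E}|\boldsymbol{x}^\top(\boldsymbol{\theta}-\boldsymbol{\theta}^\star)|\lesssim \sqrt{s\log d/n}$. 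A Bernstein/VC bound over the sparse net (covering number roughly $d^{Cs}$) then gives an increment of order
\[
\sqrt{\frac{\sqrt{s\log d/n}\cdot s\log d}{n}}\asymp \frac{(s\log d)^{3/4}}{n^{3/4}}\le\frac{s\log d}{n^{3/4}},
\]
plus lower-order terms $s\log d/n$. This supplies the $s\log d/n^{3/4}$ term in $\lambda$; multiplying by $\sqrt s$ produces the $s^{3/2}\log d/n^{3/4}$ term in the rate.

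\textbf{Main obstacle.} The hard part is making this increment bound and the restricted strong convexity step uniform over the data-dependent $\widehat{\boldsymbol{\theta}}_0$ and $\Delta$ with a non-smooth loss. It requires a careful peeling/net argument over sparse sets, with the $cn^{-C}$ term absorbing the failure probability of these uniform bounds. A second check is that $\|\Delta\|_2$ stays in the quadratic regime $\|\Delta\|_2\lesssim q$. For this I would use the convexity argument of \cite{quansparse}: if $\|\Delta\|_2$ were larger, rescale $\Delta$ onto the sphere of radius $r_0\asymp q$ and derive a contradiction, which holds provided $s^{3/2}\log d/n^{3/4}\to0$.
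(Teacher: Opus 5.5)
Your bound on $\nabla\widehat{\mathcal{L}}(\boldsymbol{\theta}^\star)$ is essentially sound and close to the paper's. The paper centers the trimmed mean at the pooled gradient $\bar{\boldsymbol{g}}$ and uses Lemma~\ref{trmean} plus Proposition~1 of \cite{wanglei1}, while you center at $\boldsymbol{g}(\widehat{\boldsymbol{\theta}}_0)$; either works.

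\textbf{The gap is in the deterministic step.} A symmetrization-plus-contraction bound of order $r\sqrt{s\log d/n}$ turns your basic inequality into $c\|\Delta\|_2^2\lesssim\lambda\sqrt{s}\|\Delta\|_2+\|\Delta\|_2\sqrt{s\log d/n}$. That gives only $\|\Delta\|_2\lesssim\sqrt{s}\lambda+\sqrt{s\log d/n}$, and your sentence ``this yields $\|\Delta\|_2\lesssim\sqrt{s}\lambda$'' silently drops the second term. It cannot be dropped: $\sqrt{s\log d/n}$ is the single-machine rate of $\ell_1$-penalized quantile regression. It exceeds $\sqrt{s}\lambda\asymp\sqrt{s\log d/N}+\beta\sqrt{s\log d/n}+s^{3/2}\log d/n^{3/4}$ once $m$, $1/\beta$ and $n/(s^4\log^2 d)$ are large. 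That is exactly the regime where the proposition says something beyond the local estimator, e.g.\ $\alpha=0$ with small $\beta$, where the claim is $\sqrt{s\log d/N}+s^{3/2}\log d/n^{3/4}$. As written, your argument only covers $\beta$ bounded below, where the claimed rate collapses to $\sqrt{s\log d/n}$ anyway.

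\textbf{The missing idea is a variance-sensitive bound for the Knight remainder.} The paper imports this as Theorem~1 of \cite{wanglei1} (its inequality $d_t\le C(\lambda_t\sqrt{s}+s^{3/2}\log d\log n/n)$) and redoes it in Steps~2--3 of the hinge-loss proof. With $\varepsilon=y-\boldsymbol{x}^\top\boldsymbol{\theta}^\star$, the Bregman excess of $\widehat{L}_1$ is an average of $D(\boldsymbol{x},\varepsilon;\boldsymbol{\delta})=\int_0^{\boldsymbol{x}^\top\boldsymbol{\delta}}(\mathbf{1}\{\varepsilon\le u\}-\mathbf{1}\{\varepsilon\le 0\})\,du$.

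\begin{itemize}
\item Contraction uses only $|D|\le|\boldsymbol{x}^\top\boldsymbol{\delta}|$. The bounded conditional density in \ref{c1} gives more: $\mathbb{E}D^2\lesssim\mathbb{E}|\boldsymbol{x}^\top\boldsymbol{\delta}|^3$. So on $\{\|\boldsymbol{\delta}\|_2\le r\}\cap\mathcal{C}(S,3)$ the summands have standard deviation of order $r^{3/2}$, not $r$.
\item Truncate at $\max_i\|\boldsymbol{x}_i\|_\infty\le c_n$ and apply Talagrand's (Bousquet's) inequality with the VC entropy of the sparse cone. This gives a uniform deviation of order $r^{3/2}/\sqrt{n}$ plus $r/n$, up to factors polynomial in $s$ and logarithmic in $n,d$.
\item The resulting inequality $t^2\lesssim\sqrt{s}\lambda t+At^{3/2}/\sqrt{n}+Bt/n$ solves to $\|\Delta\|_2\lesssim\sqrt{s}\lambda+s^{3/2}\log d\log n/n$. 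This remainder is of smaller order than $s^{3/2}\log d/n^{3/4}$.
\end{itemize}

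With this replacement, the rest of your plan goes through.
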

Proposition~\ref{theorem3} is the quantile regression counterpart included for completeness.

\section{Proofs of Main Results}\label{sec:supp-proofs}
\noindent\textbf{Proof Roadmap.} The proofs of Theorems~\ref{theorem1} and \ref{theorem2} share the same recursive structure: Proposition~\ref{thm0} gives the one-step error bound and Proposition~\ref{spa} provides the sparsity needed to close the recursion, while the only aggregation-specific difference is that Theorem~\ref{theorem1} uses the trimmed-mean gradient control in Proposition~\ref{star} whereas Theorem~\ref{theorem2} uses the median-based bound in Proposition~\ref{star'}. The key technique to handle Byzantine corruption is to apply Lemma~\ref {trmean} and the equation adopted from \cite {Divide}.

The proof of Theorem~\ref{theorem4} and Proposition~\ref{theorem3} uses the same proof pattern: a generalized one-step estimation bound obtained by the regularity argument, with a robust gradient control bound for the norm of the gradient.

\subsection{Proof for Pseudo-Huber Regression}

To start, straightforward differentiation gives
$$
\ell_a'(x) = \frac{2x}{\sqrt{1 + a^2x^2}}
\quad \text{so\ that} \quad
\left| \ell_a'(x) \right| \leq \frac{2|x|}{\sqrt{a^2x^2}} = 2a^{-1},
$$
and
$$
\ell_a''(x) = \frac{2a^{-3}}{\left(a^{-2} + x^2\right)^{3/2}}
\quad \text{so\ that} \quad
0 < \ell_a''(x) \leq \frac{2a^{-3}}{\left(a^{-2}\right)^{3/2}} = 2.
$$
In particular $\ell_a$ is strictly convex. Also note that $\lim_{a \to 0} \ell_a(x) = x^2$ for all $x \in \mathbb{R}$.

\noindent\textbf{Proof Sketch.}
We establish a recursive estimation error bound that relates the estimation error $\|\widehat{\boldsymbol{\theta}}_{t+1} - \boldsymbol{\theta}^*\|_2$ at iteration $t+1$ to the estimation error $\|\widehat{\boldsymbol{\theta}}_t - \boldsymbol{\theta}^*\|_2$ at the previous iteration, via the estimation of $\lambda_{t+1}$. We first analyze how the estimation error bound decreases
after one round of communication.

\begin{proposition}\label{thm0}
	Suppose Assumption~\ref{assumption1} holds. If we calculate $\widehat{\boldsymbol{\theta}} $ using Algorithm~\ref{Al2}, with parameters $\lambda\geq 2\left\| \nabla \widehat{\mathcal{L}}\left(\boldsymbol{\theta}^{\star},\widehat{\boldsymbol{\theta}}_{0}\right)\right\|_{\infty}$, then $\|\widehat{\boldsymbol{\theta}}_{1} - \boldsymbol{\theta}^*\|_1 \leq 4\sqrt{s}\|\widehat{\boldsymbol{\theta}}_{1} - \boldsymbol{\theta}^*\|_2$, and with probability at least $1-e^{-cn}$, we have
	$$
	\|\widehat{\boldsymbol{\theta}}_{1} - \boldsymbol{\theta}^*\|_2 \leq C\sqrt{s}\lambda.
	$$
\end{proposition}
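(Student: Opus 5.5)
The plan is the standard Lasso-type argument for the surrogate objective $\widehat{\mathcal{L}}(\boldsymbol{\theta}):=\widehat{\mathcal{L}}(\boldsymbol{\theta},\widehat{\boldsymbol{\theta}}_0)=\widehat{L}_1(\boldsymbol{\theta})-\langle \nabla \widehat{L}_1(\widehat{\boldsymbol{\theta}}_{0})-\nabla \widetilde{\mathcal{L}}(\widehat{\boldsymbol{\theta}}_{0}),\boldsymbol{\theta}\rangle$. Since the correction term is linear in $\boldsymbol{\theta}$, the surrogate has the same Hessian as the local loss $\widehat{L}_1$, namely $n^{-1}\sum_{i}\ell_a''(y_i-\boldsymbol{x}_i^\top\boldsymbol{\theta})\boldsymbol{x}_i\boldsymbol{x}_i^\top$. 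Set $\Delta=\widehat{\boldsymbol{\theta}}_1-\boldsymbol{\theta}^\star$.

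\textbf{Cone condition.} Optimality of $\widehat{\boldsymbol{\theta}}_1$ gives
\[
\widehat{\mathcal{L}}(\widehat{\boldsymbol{\theta}}_1)-\widehat{\mathcal{L}}(\boldsymbol{\theta}^\star)\le \lambda(\|\boldsymbol{\theta}^\star\|_1-\|\widehat{\boldsymbol{\theta}}_1\|_1).
\]
Convexity of $\widehat{\mathcal{L}}$ bounds the left side below by $\langle\nabla\widehat{\mathcal{L}}(\boldsymbol{\theta}^\star),\Delta\rangle\ge -\tfrac{\lambda}{2}\|\Delta\|_1$, using $\lambda\ge 2\|\nabla\widehat{\mathcal{L}}(\boldsymbol{\theta}^\star)\|_\infty$. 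Decomposing $\|\boldsymbol{\theta}^\star\|_1-\|\widehat{\boldsymbol{\theta}}_1\|_1\le\|\Delta_S\|_1-\|\Delta_{S^c}\|_1$ yields $\|\Delta_{S^c}\|_1\le 3\|\Delta_S\|_1$, so $\Delta\in\mathcal{C}(S,3)$. Then $\|\Delta\|_1\le 4\|\Delta_S\|_1\le 4\sqrt{s}\|\Delta\|_2$, which is the deterministic first claim.

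\textbf{Restricted strong convexity.} The next step is to show that, with probability at least $1-e^{-cn}$, for all $\Delta\in\mathcal{C}(S,3)$ with $\|\Delta\|_2\le r$,
\[
\widehat{\mathcal{L}}(\boldsymbol{\theta}^\star+\Delta)-\widehat{\mathcal{L}}(\boldsymbol{\theta}^\star)-\langle\nabla\widehat{\mathcal{L}}(\boldsymbol{\theta}^\star),\Delta\rangle\ge\kappa\|\Delta\|_2^2 .
\]
I expect this to be the main obstacle, because $\ell_a''(x)=2a^{-3}(a^{-2}+x^2)^{-3/2}$ decays like $|x|^{-3}$ and the errors are heavy-tailed, so the curvature is not uniform. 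I would proceed as follows.
\begin{enumerate}
\item Write the left side as $n^{-1}\sum_i\int_0^1\ell_a''(\varepsilon_i-u\boldsymbol{x}_i^\top\Delta)(\boldsymbol{x}_i^\top\Delta)^2(1-u)\,du$.
\item Lower bound $\ell_a''$ by a positive constant $c_T$ on the event $\{|\varepsilon_i|\le T,\ |\boldsymbol{x}_i^\top\Delta|\le T\}$.
\item Replace the indicators with Lipschitz truncations, following Loh--Wainwright and Fan et al.\ for Huber loss, and use the third-moment condition on $\varepsilon$ (Assumption (\ref{a4})) to get $\mathbb{P}(|\varepsilon|>T)\le\mathbb{E}|\varepsilon|^3/T^3$ small.
\item Use sub-Gaussianity of $\boldsymbol{x}$ to control $\mathbb{E}[(\boldsymbol{x}^\top\Delta)^2\mathbf{1}\{|\boldsymbol{x}^\top\Delta|>T\}]$.
\item Combine with $\lambda_{\min}(\boldsymbol{\Sigma})\ge c_1$ to obtain a population curvature $\ge\kappa\|\Delta\|_2^2$.
\item Pass to the empirical version by a uniform deviation bound over the cone, via Rademacher symmetrization and contraction. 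This gives a deviation term of order $\sqrt{s\log d/n}\,\|\Delta\|_2^2$, negligible under $s\log d\lesssim n$, with the exponential probability $1-e^{-cn}$.
\end{enumerate}

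\textbf{Localization and conclusion.} The RSC holds only locally, on $\|\Delta\|_2\le r$ for a constant radius $r$. I would handle this with the usual convexity rescaling: if $\|\Delta\|_2>r$, apply the argument to $\tilde\Delta=r\Delta/\|\Delta\|_2$, which lies on the segment and remains in the cone. Convexity of the penalized objective preserves the basic inequality for $\tilde\Delta$, giving a contradiction once $C\sqrt{s}\lambda<r$.

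With RSC in hand, the combined inequality reads
\[
\kappa\|\Delta\|_2^2\le \tfrac{\lambda}{2}\|\Delta\|_1+\lambda(\|\Delta_S\|_1-\|\Delta_{S^c}\|_1)\le\tfrac{3\lambda}{2}\|\Delta_S\|_1\le\tfrac{3\lambda}{2}\sqrt{s}\|\Delta\|_2 ,
\]
hence $\|\Delta\|_2\le (3/(2\kappa))\sqrt{s}\lambda=C\sqrt{s}\lambda$.
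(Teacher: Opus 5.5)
Your proposal is correct. It follows the same overall template as the paper: a deterministic cone condition from the basic inequality, local restricted strong convexity, rescaling to remove the localization, and then $\|\Delta\|_2\lesssim\sqrt{s}\lambda$. The curvature step, however, is organized differently.

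The paper works at the level of gradients:
\begin{itemize}
\item It uses the KKT condition $\nabla\widehat{\mathcal{L}}(\widehat{\boldsymbol{\theta}}_1)=-\lambda\boldsymbol{\xi}$ together with a first-order RSC bound $\langle\nabla\widehat{\mathcal{L}}(\boldsymbol{\theta}^\star+\Delta)-\nabla\widehat{\mathcal{L}}(\boldsymbol{\theta}^\star),\Delta\rangle\ge c_1\|\Delta\|_2^2-c_2(\log d/n)\|\Delta\|_1^2$ on $\|\Delta\|_2\le 1$.
\item This RSC bound is imported from Lemma F.2 of \cite{fan18} and Lemma 2 of \cite{supp}, not proved.
\item Combining the two gives $\kappa\|\overline{\Delta}\|_2^2\le r\|\lambda\boldsymbol{\xi}+\nabla\widehat{\mathcal{L}}(\boldsymbol{\theta}^\star)\|_\infty\|\widehat{\Delta}\|_1\le 6\sqrt{s}\lambda\|\overline{\Delta}\|_2$.
\item Its localization uses that $u\mapsto\langle\nabla\widehat{\mathcal{L}}(\boldsymbol{\theta}^\star+u\widehat{\Delta})-\nabla\widehat{\mathcal{L}}(\boldsymbol{\theta}^\star),\widehat{\Delta}\rangle$ is nondecreasing, rather than convexity of the penalized objective along the segment.
\end{itemize}

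You instead stay at the level of function values: the basic inequality plus a Bregman-divergence lower bound. This avoids subgradients altogether. You also propose to prove the RSC yourself by truncation and contraction. That makes the dependence of $\kappa$ on $a$, the tail of $\varepsilon$ and $\lambda_{\min}(\boldsymbol{\Sigma})$ explicit, at the cost of redoing a known lemma. The paper's version is shorter but rests entirely on the cited results. Both arguments need the same side conditions, which are not in the statement: $s\log d/n$ small and $C\sqrt{s}\lambda$ below the localization radius.

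One point to get right when you write out the RSC. Take the truncation level for $|\boldsymbol{x}_i^\top\Delta|$ proportional to $\|\Delta\|_2$, e.g.\ $T\|\Delta\|_2/r$, as Loh--Wainwright do. This is still a valid lower bound since $\|\Delta\|_2\le r$, so $|\varepsilon_i-u\boldsymbol{x}_i^\top\Delta|\le 2T$ on the event.

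With a fixed level $T$, the truncated quadratic has Lipschitz constant of order $T$ and sup-norm of order $T^2$. Contraction and bounded differences then only give deviations of order $T\sqrt{s\log d/n}\,\|\Delta\|_2$ and $T^2\sqrt{t/n}$, which do not beat $\kappa\|\Delta\|_2^2$ when $\|\Delta\|_2$ is small.

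With the homogeneous truncation, the function class is invariant under rescaling of $\Delta$, so you can normalize to $\|\Delta\|_2=1$ on the cone. You then obtain the claimed $\sqrt{s\log d/n}\,\|\Delta\|_2^2$ deviation uniformly, with probability $1-e^{-cn}$.
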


The following proposition bounds the $\ell_\infty$-norm of the gradient of the surrogate loss function at the true parameter point $\boldsymbol{\theta}^*$.

\begin{proposition}\label{star}
	If the corresponding assumptions hold,
	then with probability at least $1-tmd^{-C}$, choosing Option \uppercase\expandafter{\romannumeral1} in Algorithm~\ref{Al2}, we have 
	$$\left\| \nabla \widehat{\mathcal{L}}\left(\boldsymbol{\theta}^{\star},\widehat{\boldsymbol{\theta}}_{t}\right)\right\|_{\infty}\lesssim \sqrt{\frac{\log d}{N}}+\beta \sqrt{\frac{\log d}{n}}+\sqrt{\frac{\log d}{n}}\left( \frac{s\log d}{\sqrt{n}}\right) ^{t}.$$
\end{proposition}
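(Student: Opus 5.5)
We start from the gradient of the surrogate objective in \eqref{eq2}, evaluated at $\boldsymbol{\theta}^\star$:
\[
\nabla \widehat{\mathcal{L}}(\boldsymbol{\theta}^\star,\widehat{\boldsymbol{\theta}}_t)=\nabla\widehat{L}_1(\boldsymbol{\theta}^\star)-\nabla\widehat{L}_1(\widehat{\boldsymbol{\theta}}_t)+\nabla\widetilde{\mathcal{L}}(\widehat{\boldsymbol{\theta}}_t).
\]
Write $\boldsymbol{g}(\cdot)$ for the population gradient and $\nabla\widehat{L}_{\mathcal{M}}:=|\mathcal{M}|^{-1}\sum_{k\in\mathcal{M}}\nabla\widehat{L}_k$ for the honest average. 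Adding and subtracting terms splits the gradient into three pieces,
\[
\underbrace{\bigl[\nabla\widehat{L}_1(\boldsymbol{\theta}^\star)-\nabla\widehat{L}_1(\widehat{\boldsymbol{\theta}}_t)\bigr]-\bigl[\boldsymbol{g}(\boldsymbol{\theta}^\star)-\boldsymbol{g}(\widehat{\boldsymbol{\theta}}_t)\bigr]}_{I}
+\underbrace{\nabla\widetilde{\mathcal{L}}(\widehat{\boldsymbol{\theta}}_t)-\nabla\widehat{L}_{\mathcal{M}}(\widehat{\boldsymbol{\theta}}_t)}_{II}
+\underbrace{\nabla\widehat{L}_{\mathcal{M}}(\widehat{\boldsymbol{\theta}}_t)-\boldsymbol{g}(\widehat{\boldsymbol{\theta}}_t)+\boldsymbol{g}(\boldsymbol{\theta}^\star)}_{III}.
\]
We use $\mathbb{E}[\varepsilon\mid\boldsymbol{x}]=0$ together with symmetry of $\ell_a'$. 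We will also need $\boldsymbol{g}(\boldsymbol{\theta}^\star)=0$, or at least that it is negligible; this should either be assumed or follow from symmetric conditional errors.

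\textbf{Term III.} We handle it by a further split:
\[
III=\bigl[\nabla\widehat{L}_{\mathcal{M}}(\boldsymbol{\theta}^\star)\bigr]+\bigl[(\nabla\widehat{L}_{\mathcal{M}}-\boldsymbol{g})(\widehat{\boldsymbol{\theta}}_t)-(\nabla\widehat{L}_{\mathcal{M}}-\boldsymbol{g})(\boldsymbol{\theta}^\star)\bigr].
\]
The first bracket is an average of $(1-\alpha)N$ i.i.d. vectors $\ell_a'(\varepsilon_i)\boldsymbol{x}_i$. Since $|\ell_a'|\le 2/a$ and $\boldsymbol{x}$ is sub-Gaussian, each coordinate is sub-Gaussian. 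A Bernstein bound plus a union bound over $d$ coordinates gives $\sqrt{\log d/N}$ with probability $1-d^{-C}$. The second bracket is of the same type as $I$, with $n$ replaced by $N$, so it is dominated by $I$.

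\textbf{Term I (the empirical-process term).} By the mean value theorem,
\[
\nabla\widehat{L}_1(\boldsymbol{\theta}^\star)-\nabla\widehat{L}_1(\widehat{\boldsymbol{\theta}}_t)=\frac1n\sum_i\ell_a''(\xi_i)\,\boldsymbol{x}_i\boldsymbol{x}_i^\top(\widehat{\boldsymbol{\theta}}_t-\boldsymbol{\theta}^\star).
\]
Here $0<\ell_a''\le2$, and $\ell_a''$ is Lipschitz, which lets us replace $\xi_i$ by $\varepsilon_i$ at a second-order cost. Let $\boldsymbol{\Delta}_t=\widehat{\boldsymbol{\theta}}_t-\boldsymbol{\theta}^\star$. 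Centering gives the bound
\[
\bigl\|(\widehat{\mathbf{H}}_1-\mathbf{H})\boldsymbol{\Delta}_t\bigr\|_\infty\le\|\widehat{\mathbf{H}}_1-\mathbf{H}\|_\infty\|\boldsymbol{\Delta}_t\|_1\lesssim\sqrt{\log d/n}\,\|\boldsymbol{\Delta}_t\|_1,
\]
where $\widehat{\mathbf{H}}_1=n^{-1}\sum_i\ell_a''(\varepsilon_i)\boldsymbol{x}_i\boldsymbol{x}_i^\top$ is the local Hessian at $\boldsymbol{\theta}^\star$ and $\mathbf{H}$ its population counterpart. The remainder is controlled through
\[
\frac1n\sum_i|\boldsymbol{x}_i^\top\boldsymbol{\Delta}_t|^2\,\|\boldsymbol{x}_i\|_\infty\lesssim\sqrt{\log d}\,\|\boldsymbol{\Delta}_t\|_2^2 .
\]
This uses the sparsity $\|\widehat{\boldsymbol{\theta}}_t\|_0\le Cs$ from Assumption~\ref{a2} and the projection remark, so that restricted eigenvalues apply. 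With an inductive hypothesis
\[
\|\boldsymbol{\Delta}_t\|_1\lesssim s\sqrt{\log d/N}+\beta s\sqrt{\log d/n}+s\sqrt{\log d/n}\,(s\log d/\sqrt n)^{t},
\]
which holds at $t=0$ by Assumption~\ref{a2} and is propagated by Proposition~\ref{thm0}, we get
\[
\|I\|_\infty\lesssim\sqrt{\log d/n}\,\|\boldsymbol{\Delta}_t\|_1 .
\]
Multiplying the last summand of the hypothesis by $\sqrt{\log d/n}$ yields exactly the contraction factor $s\log d/\sqrt n$. The other two summands become $\sqrt{\log d/N}\cdot s\sqrt{\log d/n}$ and $\beta\sqrt{\log d/n}\cdot s\sqrt{\log d/n}$. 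These are absorbed into the first two summands of the claimed bound when $s\log d/\sqrt n\lesssim1$.

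\textbf{Term II (Byzantine error; the main obstacle).} Fix a coordinate $j$. By Lemma~\ref{trmean} (the trimmed-mean lemma of \cite{stat}), since $\beta\ge\alpha$ the trimmed mean is sandwiched between honest order statistics. Hence
\[
|II_j|\lesssim\beta\max_{k\in\mathcal{M}}\bigl|\widehat g^{(k)}_j-g_j(\widehat{\boldsymbol{\theta}}_t)\bigr|+\Bigl|\frac{1}{|\mathcal{M}|}\sum_{k\in\mathcal{M}}\widehat g^{(k)}_j-g_j(\widehat{\boldsymbol{\theta}}_t)\Bigr|.
\]
The difficulty is that $\widehat{\boldsymbol{\theta}}_t$ depends on all the data, so the per-machine deviation must hold uniformly in the parameter. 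We would bound
\[
\max_k\bigl\|\nabla\widehat{L}_k(\widehat{\boldsymbol{\theta}}_t)-\boldsymbol{g}(\widehat{\boldsymbol{\theta}}_t)\bigr\|_\infty\le\max_k\bigl\|\nabla\widehat{L}_k(\boldsymbol{\theta}^\star)\bigr\|_\infty+\max_k\bigl\|(\widehat{\mathbf{H}}_k-\mathbf{H})\boldsymbol{\Delta}_t\bigr\|_\infty+\text{remainder}.
\]
This reuses the Term I argument with a union bound over $m$ machines and $d$ coordinates, which produces the probability $1-tmd^{-C}$ after a further union over rounds. The first piece is $\lesssim\sqrt{\log d/n}$; multiplied by $\beta$ it gives $\beta\sqrt{\log d/n}$. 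The second piece is $\lesssim\sqrt{\log d/n}\,\|\boldsymbol{\Delta}_t\|_1$, which is again absorbed into the contraction term.

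Summing the bounds on I, II and III yields the claim. The uniform-in-$\boldsymbol{\theta}$ control of the $\ell_\infty$ deviation in Term II is the step we expect to be delicate; the fallback is a covering-number argument over $Cs$-sparse vectors in an $\ell_1$ ball.
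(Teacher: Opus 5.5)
Your proof is correct at the paper's level of rigour and follows the same skeleton: an exact three-way split, Lemma~\ref{trmean} centred at $g_j(\widehat{\boldsymbol{\theta}}_t)$, Bernstein at the fixed point $\boldsymbol{\theta}^\star$, a union bound over the $m$ machines, and induction through Proposition~\ref{thm0}. Where you differ is in how you handle the data dependence of $\widehat{\boldsymbol{\theta}}_t$.

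The paper Taylor-expands only in the local-minus-global term $T_1$. For the global-minus-population term $T_2$, the honest-average difference $T_{31}$ and the per-machine maximum inside $T_3$, it invokes Lemma~\ref{unibd}. That lemma is a VC-subgraph/Talagrand bound, uniform over $\{\boldsymbol{\theta}:\|\boldsymbol{\theta}\|_0\le Cs,\ \|\boldsymbol{\theta}-\boldsymbol{\theta}^\star\|_1\le R\}$, with deviations of order $c_nR\sqrt{s\log d/n}$.

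You instead linearize every gradient difference at $\boldsymbol{\theta}^\star$. The first-order part is then a Hessian deviation at a fixed point, paired with $\|\boldsymbol{\Delta}_t\|_1$ by H\"older. The only quantity needing uniform control is $n^{-1}\sum_i(\boldsymbol{x}_i^\top\boldsymbol{\Delta})^2$ over $Cs$-sparse $\boldsymbol{\Delta}$. That is a standard sparse-eigenvalue bound, which holds on all $m$ machines simultaneously with probability at least $1-me^{-cn}$ once $n\gtrsim s\log d$. So the obstacle you flag in Term~II is already resolved by your own decomposition, and no separate covering argument is needed; Lemma~\ref{unibd} is exactly that covering alternative.

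Your route is more elementary and, for the smooth pseudo-Huber loss, gives a sharper first-order term: $\sqrt{\log d/n}\,R$ versus $c_nR\sqrt{s\log d/n}$. The paper's lemma buys generality instead, since it uses no second derivative and is reused for the quantile and hinge losses. You are also right that $\boldsymbol{g}(\boldsymbol{\theta}^\star)=0$ is needed and does not follow from $\mathbb{E}[\varepsilon\mid\boldsymbol{x}]=0$ alone; the paper's $T_1+T_2+T_3$ split uses it silently.

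One bookkeeping correction. The quadratic remainder $\sqrt{\log d}\,\|\boldsymbol{\Delta}_t\|_2^2$ is not dominated by $\sqrt{\log d/n}\,\|\boldsymbol{\Delta}_t\|_1$ as you assert; that would require $\|\boldsymbol{\Delta}_t\|_2\lesssim n^{-1/2}$. Keep it as a separate term and absorb it using an $\ell_2$ bound $\|\boldsymbol{\Delta}_t\|_2\lesssim\sqrt{s}\,\lambda_t$. This makes it at most $(s\log d/\sqrt{n})\,\lambda_t$, which fits the claimed rate. Proposition~\ref{thm0} supplies that $\ell_2$ bound for $t\ge 1$. At $t=0$ you need $\|\widehat{\boldsymbol{\theta}}_0-\boldsymbol{\theta}^\star\|_2\lesssim\sqrt{s\log d/n}$, which Assumption~\ref{a2} does not literally give, since it only provides an $\ell_1$ bound. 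The paper's bound on $T_{12}$ is no more careful on this point.
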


\begin{proposition}\label{star'}
	If the corresponding assumptions hold, and the fraction $\alpha$ of Byzantine machines satisfies $$
	\alpha + \sqrt{\frac{\log d}{m(1-\alpha)}} + 0.4748 \frac{S}{\sqrt{n}} \leq \frac{1}{2}-\epsilon,$$
	where $S$ is the supremum of the absolute skewness of the gradient of the loss function, then with probability at least $1-tmd^{-C}$, choosing Option \uppercase\expandafter{\romannumeral2} in Algorithm~\ref{Al2}, we have 
	$$\left\| \nabla \widehat{\mathcal{L}}\left(\boldsymbol{\theta}^{\star},\widehat{\boldsymbol{\theta}}_{t}\right)\right\|_{\infty}\lesssim \sqrt{\frac{\log  d}{N}}+\frac{\alpha}{\sqrt{n} }+\frac{1}{n}+\sqrt{\frac{\log d}{n}}\left( \sqrt{\frac{s\log d}{n} } \right) ^{t}.$$
\end{proposition}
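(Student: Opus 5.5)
We need to bound $\|\nabla\widehat{\mathcal L}(\boldsymbol\theta^\star,\widehat{\boldsymbol\theta}_t)\|_\infty$ for Option II. By \eqref{eq2},
\[
\nabla \widehat{\mathcal{L}}(\boldsymbol{\theta}^\star,\widehat{\boldsymbol{\theta}}_t)=\nabla\widehat L_1(\boldsymbol\theta^\star)-\nabla\widehat L_1(\widehat{\boldsymbol\theta}_t)+\operatorname{median}_k\{\widehat{\boldsymbol g}^{(k)}(\widehat{\boldsymbol\theta}_t)\}.
\]
We will add and subtract the population gradient $\boldsymbol g(\widehat{\boldsymbol\theta}_t)$ and use $\boldsymbol g(\boldsymbol\theta^\star)=0$. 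This splits the gradient into two terms. The first is a \emph{local deviation term}
\[
\big[\nabla\widehat L_1(\boldsymbol\theta^\star)-\boldsymbol g(\boldsymbol\theta^\star)\big]-\big[\nabla\widehat L_1(\widehat{\boldsymbol\theta}_t)-\boldsymbol g(\widehat{\boldsymbol\theta}_t)\big].
\]
The second is a \emph{median error term} $\operatorname{median}_k\{\widehat{\boldsymbol g}^{(k)}(\widehat{\boldsymbol\theta}_t)\}-\boldsymbol g(\widehat{\boldsymbol\theta}_t)$. We then bound each in $\ell_\infty$.

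For the local deviation term we use that $\ell_a'$ is bounded by $2a^{-1}$ and $\ell_a''\le 2$, so the summands are Lipschitz in $\boldsymbol\theta$. We restrict to the set $\{\|\boldsymbol\theta-\boldsymbol\theta^\star\|_0\le Cs,\ \|\boldsymbol\theta-\boldsymbol\theta^\star\|_2\le r_t\}$. Sparsity of $\widehat{\boldsymbol\theta}_t$ comes from Proposition~\ref{spa}, or from the projection in the remark after Assumption~\ref{assumption1}. On this set we apply a covering-number argument over the $\binom{d}{Cs}$ supports combined with Bernstein for sub-Gaussian $\boldsymbol x$. This should give, uniformly, a bound $\lesssim r_t\sqrt{s\log d/n}\cdot\sqrt{\log d}$ or similar. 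We then plug in the inductive hypothesis $r_t\lesssim \sqrt{s}\lambda_t$ from Proposition~\ref{thm0}. This produces the contraction factor $\sqrt{s\log d/n}$ multiplying the previous $\lambda_t$. Unrolling, the factor yields the term $\sqrt{\log d/n}\,(\sqrt{s\log d/n})^t$, with the initial error of order $\sqrt{\log d/n}$ coming from Assumption~\ref{a2}.

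For the median error term we fix a coordinate $j$ and treat $\boldsymbol\theta=\widehat{\boldsymbol\theta}_t$ by a union over a net as above. The honest messages $\widehat g_j^{(k)}$, $k\in\mathcal M$, are i.i.d.\ averages of $n$ bounded variables with variance $\sigma_j^2/n$. We use the Berry--Esseen bound, with constant $0.4748$ and the skewness bound $S$ guaranteed finite by the third-moment condition in Assumption~\ref{a4}. This gives, for each threshold $z$, an approximation of the honest empirical CDF by $\Phi(z\sqrt n/\sigma_j)$ up to $0.4748S/\sqrt n$. Hoeffding over the $(1-\alpha)m$ honest machines adds a deviation $\sqrt{\log d/(m(1-\alpha))}$ with probability $1-d^{-C}$. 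Since at most $\alpha m$ messages are Byzantine, the median lies between the honest $(1/2-\alpha')$- and $(1/2+\alpha')$-quantiles, where $\alpha'=\alpha+\sqrt{\log d/(m(1-\alpha))}+0.4748S/\sqrt n$. The assumed condition $\alpha'\le 1/2-\epsilon$ keeps these quantiles in a region where $\Phi^{-1}$ is Lipschitz. Inverting therefore gives
\[
|\operatorname{median}-g_j|\lesssim \frac{\sigma_j}{\sqrt n}\Big(\alpha+\sqrt{\frac{\log d}{m}}+\frac{1}{\sqrt n}\Big)\asymp \frac{\alpha}{\sqrt n}+\sqrt{\frac{\log d}{N}}+\frac1n.
\]
This is the analogue of Lemma~\ref{trmean} and the argument of \cite{Divide}. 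A union bound over the $d$ coordinates, the $m$ machines and the $t$ rounds gives probability $1-tmd^{-C}$.

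The main obstacle is making the median bound uniform in the data-dependent point $\widehat{\boldsymbol\theta}_t$. The Berry--Esseen step is pointwise, while $\widehat{\boldsymbol\theta}_t$ depends on all the honest data. I would handle this with a net over the sparse ball together with a Lipschitz bound on each $\widehat g^{(k)}_j$ in $\boldsymbol\theta$, which follows from $\ell_a''\le 2$ and sub-Gaussian $\boldsymbol x$. The discretization error would then be absorbed into the $\sqrt{\log d/n}(\sqrt{s\log d/n})^t$ term. If that fails, the fallback is to redo the argument at the fixed point $\boldsymbol\theta^\star$ and push the dependence on $\widehat{\boldsymbol\theta}_t$ into the local deviation term.
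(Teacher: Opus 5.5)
Your treatment of the non-aggregation part is fine. Bounding $[\nabla\widehat L_1(\boldsymbol\theta^\star)-\boldsymbol g(\boldsymbol\theta^\star)]-[\nabla\widehat L_1(\widehat{\boldsymbol\theta}_t)-\boldsymbol g(\widehat{\boldsymbol\theta}_t)]$ uniformly over the sparse ball gives the same order as the paper's $T_1+T_2$. Your primary plan for the median term, however, does not deliver the stated rate. To make the Berry--Esseen/Hoeffding quantile bound hold at every point of a net of the $Cs$-sparse ball, you must union-bound over roughly $\binom{d}{Cs}(C/\eta)^{Cs}$ points, so the deviation level is $t\asymp s\log d$ rather than $\log d$. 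The Hoeffding contribution then becomes $\sqrt{s\log d/(m(1-\alpha))}$, which has two consequences:
\begin{itemize}
\item the leading term of the gradient bound inflates from $\sqrt{\log d/N}$ to $\sqrt{s\log d/N}$, i.e.\ $s\sqrt{\log d/N}$ instead of $\sqrt{s\log d/N}$ in $\ell_2$ error;
\item the hypothesis would have to become $\alpha+\sqrt{s\log d/(m(1-\alpha))}+0.4748S/\sqrt n\le\frac12-\epsilon$.
\end{itemize}
The culprit is the cardinality of the net, not the discretization error, so this loss cannot be absorbed into the contraction term. A net is harmless in your local deviation term because there the increments have variance $\lesssim r_t^2$, so the entropy cost multiplies $r_t$. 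For the median, the fluctuation at each net point is of order $1/\sqrt n$ however small $r_t$ is.

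The missing idea is to split the median term into a fixed-point part and a small increment. This is your fallback, and it is what the paper does via \eqref{eq:U} together with Lemma 3 of \cite{Divide}. For honest $k$,
\[
\widehat{\boldsymbol g}^{(k)}(\widehat{\boldsymbol\theta}_t)-\boldsymbol g(\widehat{\boldsymbol\theta}_t)=[\widehat{\boldsymbol g}^{(k)}(\boldsymbol\theta^\star)-\boldsymbol g(\boldsymbol\theta^\star)]+U^{(k)}(\widehat{\boldsymbol\theta}_t),
\]
with $U^{(k)}(\boldsymbol\theta)=\widehat{\boldsymbol g}^{(k)}(\boldsymbol\theta)-\widehat{\boldsymbol g}^{(k)}(\boldsymbol\theta^\star)-(\boldsymbol g(\boldsymbol\theta)-\boldsymbol g(\boldsymbol\theta^\star))$. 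Order statistics are $1$-Lipschitz under sup-norm perturbations of their inputs. Hence the coordinatewise median, with at most $\alpha m$ arbitrary entries, lies between the honest $(\frac12-\alpha)$- and $(\frac12+\alpha)$-quantiles of $\{\widehat g_j^{(k)}(\boldsymbol\theta^\star)-g_j(\boldsymbol\theta^\star)\}_{k\in\mathcal M}$, widened by $u=\max_{k\in\mathcal M}\sup_{\boldsymbol\theta\in\Omega}\|U^{(k)}(\boldsymbol\theta)\|_\infty$. The quantile bound is now needed only at the fixed point $\boldsymbol\theta^\star$, with a union over $j\in[d]$. So $t=C\log d$ suffices and gives $\sqrt{\log d/N}+\alpha/\sqrt n+1/n$. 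The term $u$ is bounded by Lemma~\ref{unibd} with a union over the $(1-\alpha)m$ honest machines, which is where the factor $m$ in $1-tmd^{-C}$ comes from. It scales with the radius $R\lesssim\sqrt s\,d_t$, so it feeds the contraction. The perturbation must therefore be controlled for every honest machine, not pushed into machine~1's local deviation term as your fallback sentence suggests.

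Finally, your contraction bookkeeping does not close as written. Combining $r_t\lesssim\sqrt s\lambda_t$ with your coefficient $\sqrt{s\log d/n}\sqrt{\log d}$ gives $\lambda_t\,s\log d/\sqrt n$, not $\lambda_t\sqrt{s\log d/n}$. The paper's induction from $d_t\sqrt{s\log d/n}$ has the same slack. Obtaining exactly the factor $(\sqrt{s\log d/n})^t$ requires tracking these $\sqrt s$ and logarithmic factors more carefully.
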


\begin{proposition}\label{spa}
Under the same assumptions and $n\gtrsim d$, we have the estimator $\widehat{\boldsymbol{\theta}}_{t}$ sparse, that is, with high probability,  for some $C > 0$, $\left\|\widehat{\boldsymbol{\theta}}_{t}\right\|_0\leq Cs$.
\end{proposition}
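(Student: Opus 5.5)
Since $\widehat{\boldsymbol{\theta}}_{t+1}$ solves an $\ell_1$-penalized problem, I would control $\|\widehat{\boldsymbol{\theta}}_{t+1}\|_0$ through the KKT conditions of \eqref{eq2}, using the standard primal--dual witness argument. Throughout I work on the high-probability event where Proposition~\ref{thm0} and Proposition~\ref{star} (or Proposition~\ref{star'}) hold at round $t$, so that $\lambda_{t+1}\ge 2\|\nabla\widehat{\mathcal{L}}(\boldsymbol{\theta}^\star,\widehat{\boldsymbol{\theta}}_t)\|_\infty$. The argument is an induction on $t$, with base case (\ref{a2}).

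\textbf{Step 1 (KKT and support counting).} Set $\widehat{\Delta}=\widehat{\boldsymbol{\theta}}_{t+1}-\boldsymbol{\theta}^\star$. Every $j$ in the support of $\widehat{\boldsymbol{\theta}}_{t+1}$ satisfies $|[\nabla\widehat{\mathcal{L}}(\widehat{\boldsymbol{\theta}}_{t+1},\widehat{\boldsymbol{\theta}}_t)]_j|=\lambda_{t+1}$. Write
\[
\nabla\widehat{\mathcal{L}}(\widehat{\boldsymbol{\theta}}_{t+1},\widehat{\boldsymbol{\theta}}_t)=\nabla\widehat{\mathcal{L}}(\boldsymbol{\theta}^\star,\widehat{\boldsymbol{\theta}}_t)+\nabla\widehat{L}_1(\widehat{\boldsymbol{\theta}}_{t+1})-\nabla\widehat{L}_1(\boldsymbol{\theta}^\star).
\]
On $\operatorname{support}(\widehat{\boldsymbol{\theta}}_{t+1})\setminus S$, the first term is at most $\lambda_{t+1}/2$ in absolute value, so the second term must be at least $\lambda_{t+1}/2$ there. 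By the mean value theorem, the second term equals $\widehat{H}\widehat{\Delta}$, where $\widehat{H}=n^{-1}\sum_i w_i\boldsymbol{x}_i\boldsymbol{x}_i^\top$ with weights $0<w_i\le 2$ (using $\ell_a''\le 2$). Hence for any set $J\subseteq\operatorname{support}(\widehat{\boldsymbol{\theta}}_{t+1})\setminus S$,
\[
|J|\lambda_{t+1}^2/4\le\|(\widehat{H}\widehat{\Delta})_J\|_2^2\le\|\widehat{H}^{1/2}\|^2_{J}\,\widehat{\Delta}^\top\widehat{H}\widehat{\Delta}.
\]

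\textbf{Step 2 (bounding the right side).} The sparse maximal eigenvalue $\max_{\|\delta\|_0\le Cs}\delta^\top\widehat{H}\delta/\|\delta\|_2^2$ is $O(1)$ by sub-Gaussianity (\ref{a3}), (\ref{a4}) and a covering argument. A simpler route, which I would use, is the global bound $\|n^{-1}\boldsymbol{X}^\top\boldsymbol{X}\|_{op}\lesssim 1+d/n=O(1)$ when $n\gtrsim d$; this is exactly where the hypothesis $n\gtrsim d$ enters, and it lets me skip the sparse-eigenvalue fixed-point argument. Also $\widehat{\Delta}^\top\widehat{H}\widehat{\Delta}\lesssim\|\widehat{\Delta}\|_2^2\lesssim s\lambda_{t+1}^2$ by Proposition~\ref{thm0} applied at round $t+1$ (the cone condition there makes this valid). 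Combining, $|J|\lesssim s$, so $\|\widehat{\boldsymbol{\theta}}_{t+1}\|_0\le s+|J|\le Cs$. A union bound over rounds then gives the claim with the stated probability.

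\textbf{Main obstacle.} The delicate point is making the constant $C$ uniform in $t$ so that the induction closes. Proposition~\ref{thm0} requires $\widehat{\boldsymbol{\theta}}_t$ sparse, via the Hessian concentration in Proposition~\ref{star}, and the constant produced in Step 2 must not inflate from round to round. I would handle this as follows: the bound from Step 2 depends only on the operator-norm constant and on the constant in Proposition~\ref{thm0}, neither of which depends on $C$ because $n\gtrsim d$ makes the restricted quantities global. If that fails, the fallback is the projection onto $\widehat S_0$ described in the Remark, which makes sparsity automatic.
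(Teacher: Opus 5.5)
Your proposal is correct and follows essentially the paper's route. Both arguments apply the KKT condition on the support of the new iterate, split $\nabla\widehat{\mathcal{L}}(\widehat{\boldsymbol{\theta}}_{t+1})$ into $\nabla\widehat{\mathcal{L}}(\boldsymbol{\theta}^\star)$ (at most $\lambda/2$ per coordinate) plus a Hessian-times-error term, and bound that term in $\ell_2$ by an $O(1)$ operator norm times $\|\widehat{\Delta}\|_2\lesssim\sqrt{s}\lambda$ from Proposition~\ref{thm0}.

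Your version is slightly cleaner. The global bound $\|n^{-1}\boldsymbol{X}^\top\boldsymbol{X}\|\lesssim 1$ under $n\gtrsim d$ avoids the paper's circular appeal to a bound valid only for $s$-sparse vectors, which it applies on a support $\hat S$ whose size is the very thing being bounded. In addition, your explicit induction with $\widehat{\Delta}=\widehat{\boldsymbol{\theta}}_{t+1}-\boldsymbol{\theta}^\star$ corrects the paper's use of $\widehat{\boldsymbol{\theta}}_0$ where the current iterate is meant.
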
	

\begin{proof}[Proof of Proposition~\ref{thm0}]
	When $t=1$, due to the definition of $\widehat{\boldsymbol{\theta}}_{1}$, we have
	\begin{equation}\label{eq100}
		\widehat{\mathcal{L}}\left(\widehat{\boldsymbol{\theta}}_{1}\right)-\widehat{\mathcal{L}}\left(\boldsymbol{\theta}^{\star}\right)
		+\lambda\left\|\widehat{\boldsymbol{\theta}}_{1}\right\|_{1}
		-\lambda\left\|\boldsymbol{\theta}^{\star}\right\|_{1} \leq 0,
	\end{equation}
	and there exists $ \boldsymbol{\xi}\in\left. \partial\left\| \boldsymbol{\theta}\right\|_{1}\right| _{\boldsymbol{\theta}=\boldsymbol{\theta}^{\star}} $, such that
	\begin{equation}\label{kkt}
	\nabla \widehat{\mathcal{L}}(\widehat{\boldsymbol{\theta}_{1}})+\lambda\boldsymbol{\xi}=\boldsymbol{0}.
	\end{equation}
	Since 
	$$
	\begin{aligned}
		\widehat{L}_1(\widehat{\boldsymbol{\theta}}_{1})-\widehat{L}_1(\boldsymbol{\theta}^{\star})-&\left\langle \nabla \widehat{L}_1(\boldsymbol{\theta}^{\star}),\widehat{\boldsymbol{\theta}}_{1}-\boldsymbol{\theta}^{\star}\right\rangle\\=&\frac{1}{n}\left(\widehat{\boldsymbol{\theta}}_{1}-\boldsymbol{\theta}^{\star}\right)^{\top}\sum_{i =1}^{n}\ell_a^{\prime\prime}\left(y_i-\boldsymbol{x}_i^{\top} \widetilde{\boldsymbol{\theta}}\right) \boldsymbol{x}_{i}\boldsymbol{x}_{i}^{\top}\left(\widehat{\boldsymbol{\theta}}_{1}-\boldsymbol{\theta}^{\star}\right)>0,
	\end{aligned}$$
	where $\widetilde{\boldsymbol{\theta}}:=\boldsymbol{\theta}^{\star}+\gamma\left(\widehat{\boldsymbol{\theta}}_{1}-\boldsymbol{\theta}^{\star}\right),\gamma\in (0,1)$, we have that 
	$$
	\begin{aligned}
		\widehat{\mathcal{L}}\left(\widehat{\boldsymbol{\theta}}_{1}\right)-\widehat{\mathcal{L}}\left(\boldsymbol{\theta}^{*}\right)&=\widehat{L}_1(\widehat{\boldsymbol{\theta}}_{1})-\widehat{L}_1(\boldsymbol{\theta}^{\star})-
		\left\langle \nabla \widehat{L}_1(\widehat{\boldsymbol{\theta}}_{0})-\nabla \widetilde{\mathcal{L}}(\widehat{\boldsymbol{\theta}}_{0}),\widehat{\boldsymbol{\theta}}_{1}-\boldsymbol{\theta}^{\star}\right\rangle\\& \geq \left\langle \nabla \widehat{L}_1(\boldsymbol{\theta}^{\star}),\widehat{\boldsymbol{\theta}}_{1}-\boldsymbol{\theta}^{\star}\right\rangle-\left\langle \nabla \widehat{L}_1(\widehat{\boldsymbol{\theta}}_{0})-\nabla \widetilde{\mathcal{L}}(\widehat{\boldsymbol{\theta}}_{0}),\widehat{\boldsymbol{\theta}}_{1}-\boldsymbol{\theta}^{\star}\right\rangle \\& =\left\langle \nabla \widehat{\mathcal{L}}(\boldsymbol{\theta}^{\star}),\widehat{\boldsymbol{\theta}}_{1}-\boldsymbol{\theta}^{\star}\right\rangle \geq -\left\| \nabla \widehat{\mathcal{L}}(\boldsymbol{\theta}^{\star})\right\|_{\infty}\left\|\widehat{\boldsymbol{\theta}}_{1}-\boldsymbol{\theta}^{\star}\right\|_{1}\geq-\frac{\lambda}{2}\left\|\widehat{\boldsymbol{\theta}}_{1}-\boldsymbol{\theta}^{\star}\right\|_{1},
	\end{aligned}
	$$
	where we use the fact that, $ \nabla \widehat{\mathcal{L}}(\boldsymbol{\theta})= \nabla \widehat{L}_1(\boldsymbol{\theta})-\nabla \widehat{L}_1(\widehat{\boldsymbol{\theta}}_{0})+\nabla \widetilde{\mathcal{L}}(\widehat{\boldsymbol{\theta}}_{0})$, for any $ \boldsymbol{\theta} $, and that $ \lambda\geq2\left\| \nabla \widehat{\mathcal{L}}(\boldsymbol{\theta}^{\star})\right\|_{\infty} $.
	Hence, by (\ref{eq100}), we derive that,
	$$-\frac{\lambda}{2}\left\|\widehat{\boldsymbol{\theta}}_{1}-\boldsymbol{\theta}^{\star}\right\|_{1}\leq 
	\lambda\left\|\boldsymbol{\theta}^{*}\right\|_{1}-\lambda\left\|\widehat{\boldsymbol{\theta}}_{1}\right\|_{1}.$$
	Define
	$\widehat{\Delta}:=\widehat{\boldsymbol{\theta}}_{1}-\boldsymbol{\theta}^{\star}$, then $\left\|\widehat{\Delta}_{S^{c}}+\boldsymbol{\theta}^{*}\right\|_{1}=\left\|\widehat{\Delta}_{S^{c}}\right\|_{1}+\left\|\boldsymbol{\theta}^{*}\right\|_{1}$, and we have
	$$
	\begin{aligned}
		-\frac{\lambda}{2}\left\|\widehat{\Delta}_{S}\right\|_{1}-\frac{\lambda}{2}\left\|\widehat{\Delta}_{S^{c}}\right\|_{1}&\leq 
		\lambda\left\|\boldsymbol{\theta}^{*}\right\|_{1}-\lambda\left\|\widehat{\Delta}_{S}+\widehat{\Delta}_{S^{c}}+\boldsymbol{\theta}^{*}\right\|_{1}\\&\leq
		\lambda\left\|\boldsymbol{\theta}^{*}\right\|_{1}-\lambda\left\|\widehat{\Delta}_{S^{c}}+\boldsymbol{\theta}^{*}\right\|_{1}+\lambda\left\|\widehat{\Delta}_{S}\right\|_{1}\leq \lambda\left\|\widehat{\Delta}_{S}\right\|_{1}-\lambda\left\|\widehat{\Delta}_{S^{c}}\right\|_{1}.
	\end{aligned}
	$$
	Hence, $\left\|\widehat{\Delta}_{S^{c}}\right\|_{1}\leq 3 \left\|\widehat{\Delta}_{S}\right\|_{1}$, $\widehat{\boldsymbol{\theta}}_{1}\in \mathcal{C}(\Delta,3)$. We have 
	\begin{equation}\label{1s2}
	\left\|\widehat{\Delta}\right\|_{1}=\left\|\widehat{\Delta}_{S}\right\|_{1}+\left\|\widehat{\Delta}_{S^{c}}\right\|_{1}\leq 4\left\|\widehat{\Delta}_{S}\right\|_{1}\leq 4\sqrt{s}\left\|\widehat{\Delta}\right\|_{2} .
\end{equation} 
Define $\overline{\boldsymbol{\theta}}:=\boldsymbol{\theta}^{\star}+r\widehat{\boldsymbol{\Delta}}
	$, then $\overline{\Delta}:=\overline{\boldsymbol{\theta}}-\boldsymbol{\theta}^{\star}$, where $$r=\begin{cases}
		1, &\text{if $\|\widehat{\boldsymbol{\Delta}}\|_{2} \leq 1$,}\\
		1/\|\widehat{\boldsymbol{\Delta}}\|_{2}, &\text{if $\|\widehat{\boldsymbol{\Delta}}\|_{2} > 1$.}
	\end{cases}
	$$
	Then, by Lemma F.2 of \cite{fan18} and Lemma 2 of \cite{supp}, with probability at least $ 1-C_{1}\exp(-C_{2}n) $, we have
	$$
	\begin{aligned}
	r\langle\nabla \widehat{\mathcal{L}}(\widehat{\boldsymbol{\theta}}_{1})-\nabla \widehat{\mathcal{L}}(\boldsymbol{\theta}^{\star}),\widehat{\Delta}\rangle\geq \langle\nabla \widehat{\mathcal{L}}(\overline{\boldsymbol{\theta}})-\nabla \widehat{\mathcal{L}}(\boldsymbol{\theta}^{\star}),\overline{\Delta}\rangle&\ge c_{1}\|\overline{\Delta}\|_2^2-c_{2}\frac{\log d}{n}\|\overline{\Delta}\|_{1}^2 \\& \geq c_{1}\|\overline{\Delta}\|_2^2-c_{2}s\frac{\log d}{n}\|\overline{\Delta}\|_{2}^2\geq c\|\overline{\Delta}\|_2^2.
	\end{aligned}
	$$
	By (\ref{kkt}) and $r\| \widehat{\Delta}\| _{2}=\| \overline{\Delta}\| _{2}$, we have
	\begin{align*}
	6\sqrt{s}\lambda\|\overline{\Delta}\|_{2}=6\sqrt{s}\lambda r\|\widehat{\Delta}\|_{2}&\geq r\left( \|\lambda\boldsymbol{\xi}\|_{\infty}+\dfrac{\lambda}{2}\right)\|\widehat{\Delta}\|_{1} \\&\geq r \|\lambda\boldsymbol{\xi}+\nabla \widehat{\mathcal{L}}(\boldsymbol{\theta}^{\star})\|_{\infty}\|\widehat{\Delta}\|_{1}\\&\geq r\langle\nabla \widehat{\mathcal{L}}(\widehat{\boldsymbol{\theta}}_{1})-\nabla \widehat{\mathcal{L}}(\boldsymbol{\theta}^{\star}),\widehat{\Delta}\rangle\geq \kappa_{r}\|\overline{\Delta}\|_2^2.
	\end{align*}
	We obtain $\|\overline{\Delta}\|_2\lesssim \sqrt{s}\lambda$. By Proposition~\ref{star} and Proposition~\ref{star'}, since $n\gtrsim \log d$, we have $\|\overline{\Delta}\|_2<1$, meaning that
	 $r=1 $, that is,
	$$P\left(\left\|\widehat{\boldsymbol{\theta}}_{1}-\boldsymbol{\theta}^{\star}\right\|_{2}\leq C\sqrt{s}\lambda\right)\geq 1-e^{-cn}.$$
	Hence we proved the second result, and the first result is obtained by (\ref{1s2}).
\end{proof}

\begin{proof}[Proof of Proposition~\ref{star}]
	$$
	\begin{aligned}
	&\left\| \nabla \widehat{\mathcal{L}}\left(\boldsymbol{\theta}^{\star},\widehat{\boldsymbol{\theta}}_{t}\right)\right\|_{\infty}=\left\|\nabla \widehat{L}_1(\boldsymbol{\theta}^{\star})-\nabla \widehat{L}_1(\widehat{\boldsymbol{\theta}}_{t})+\nabla \widetilde{\mathcal{L}}(\widehat{\boldsymbol{\theta}}_{t})\right\|_{\infty}\\
	=&
	\left\|\frac{1}{n} \sum_{i=1}^n\left(\ell_a^{\prime}(y_i-\boldsymbol{x}_i^{\top} \boldsymbol{\theta}^{\star})-\ell_a^{\prime}(y_i-\boldsymbol{x}_i^{\top} \widehat{\boldsymbol{\theta}}_{t})\right)  \boldsymbol{x}_i+\mathop{\text{Aggregate}}\{\widehat{\boldsymbol{g}}^{(k)}\left(\widehat{\boldsymbol{\theta}}_{t}\right)\}\right\|_{\infty}\\ 
	\leq&
	\left\|\frac{1}{n} \sum_{i=1}^n\left(\ell_a^{\prime}(y_i-\boldsymbol{x}_i^{\top} \boldsymbol{\theta}^{\star})-\ell_a^{\prime}(y_i-\boldsymbol{x}_i^{\top} \widehat{\boldsymbol{\theta}}_{t})\right)  \boldsymbol{x}_i-\frac{1}{N} \sum_{i=1}^N\left(\ell_a^{\prime}(y_i-\boldsymbol{x}_i^{\top} \boldsymbol{\theta}^{\star})-\ell_a^{\prime}(y_i-\boldsymbol{x}_i^{\top} \widehat{\boldsymbol{\theta}}_{t})\right)  \boldsymbol{x}_i\right\|_{\infty}\\
	+&\left\|\frac{1}{N}\sum_{i=1}^N\left(\ell_a^{\prime}(y_i-\boldsymbol{x}_i^{\top} \boldsymbol{\theta}^{\star})-\ell_a^{\prime}(y_i-\boldsymbol{x}_i^{\top} \widehat{\boldsymbol{\theta}}_{t})\right)  \boldsymbol{x}_i-\mathbb{E}\left[\left(\ell_a^{\prime}(y-\boldsymbol{x}^{\top} \boldsymbol{\theta}^{\star})-\ell_a^{\prime}(y-\boldsymbol{x}^{\top} \widehat{\boldsymbol{\theta}}_{t})\right)  \boldsymbol{x}\right]\right\|_{\infty} \\+&\left\|\mathop{\text{Aggregate}}\limits_{k \in [m]}\left\lbrace\widehat {\boldsymbol{g}}^{(k)} \left(\widehat{\boldsymbol{\theta}}_{t}\right)\right\rbrace -\boldsymbol{g}\left(\widehat{\boldsymbol{\theta}_t}\right)\right\|_{\infty}:=T_{1}+T_{2}+T_{3}.
	\end{aligned}
	$$
	For $T_{1}$, there exists a number $\widetilde{u}_{i}$ between $y_i-\boldsymbol{x}_i^{\top} \widehat{\boldsymbol{\theta}}_{t}$ and $y_i-\boldsymbol{x}_i^{\top} \boldsymbol{\theta}^{\star}$, such that,
	$$
	\ell_a^{\prime}(y_i-\boldsymbol{x}_i^{\top} \widehat{\boldsymbol{\theta}}_{t})-\ell_a^{\prime}(y_i-\boldsymbol{x}_i^{\top} \boldsymbol{\theta}^{\star})=\ell_a^{\prime\prime}(y_i-\boldsymbol{x}_i^{\top} \boldsymbol{\theta}^{\star})\boldsymbol{x}_i^{\top} \left(\boldsymbol{\theta}^{\star}-\widehat{\boldsymbol{\theta}}_{t}\right)+\ell_a^{\prime\prime\prime}(\widetilde{u}_{i})\left(\boldsymbol{x}_i^{\top} \left(\boldsymbol{\theta}^{\star}-\widehat{\boldsymbol{\theta}}_{t}\right)\right)^2/2.
	$$
	Thus, we have
	$$
	\begin{aligned}
		T_1&\leq\left\|\frac{1}{n} \sum_{i=1}^n\ell_a^{\prime\prime}(y_i-\boldsymbol{x}_i^{\top} \boldsymbol{\theta}^{\star}) \left\langle\boldsymbol{\theta}^{\star}-\widehat{\boldsymbol{\theta}}_{t}, \boldsymbol{x}_i\right\rangle \boldsymbol{x}_i-\frac{1}{N} \sum_{i=1}^N\ell_a^{\prime\prime}(y_i-\boldsymbol{x}_i^{\top} \boldsymbol{\theta}^{\star}) \left\langle\boldsymbol{\theta}^{\star}-\widehat{\boldsymbol{\theta}}_{t}, \boldsymbol{x}_i\right\rangle \boldsymbol{x}_i\right\|_{\infty}\\&+\left\|\frac{1}{n} \sum_{i=1}^n\frac{\ell_a^{\prime\prime\prime}(\widetilde{u}_{i})}{2} \left\langle\boldsymbol{\theta}^{\star}-\widehat{\boldsymbol{\theta}}_{t}, \boldsymbol{x}_i\right\rangle^2\boldsymbol{x}_i-\frac{1}{N} \sum_{i=1}^N\frac{\ell_a^{\prime\prime\prime}(\widetilde{u}_{i})}{2} \left\langle\boldsymbol{\theta}^{\star}-\widehat{\boldsymbol{\theta}}_{t}, \boldsymbol{x}_i\right\rangle^2\boldsymbol{x}_i\right\|_{\infty}:=T_{11}+T_{12}.
	\end{aligned}
	$$
	With probability at least $ 1- d^{-C}$,
	$$
	\begin{aligned}
	T_{11}&\leq \left\|\frac{1}{n} \sum_{i=1}^n\ell_a^{\prime\prime}(y_i-\boldsymbol{x}_i^{\top} \boldsymbol{\theta}^{\star}) \boldsymbol{x}_i\boldsymbol{x}_i^{\top}-\frac{1}{N} \sum_{i=1}^N\ell_a^{\prime\prime}(y_i-\boldsymbol{x}_i^{\top} \boldsymbol{\theta}^{\star}) \boldsymbol{x}_i \boldsymbol{x}_i^{\top}\right\|_{\infty}\left\|\boldsymbol{\theta}^{\star}-\widehat{\boldsymbol{\theta}}_{t}\right\|_{1}\\ & \leq 
	\left(\left\|\frac{1}{n} \sum_{i=1}^n\ell_a^{\prime\prime}(y_i-\boldsymbol{x}_i^{\top} \boldsymbol{\theta}^{\star}) \boldsymbol{x}_i\boldsymbol{x}_i^{\top}-\mathbb{E}\left[\ell_a^{\prime\prime}(y_i-\boldsymbol{x}_i^{\top} \boldsymbol{\theta}^{\star}) \boldsymbol{x}_i\boldsymbol{x}_i^{\top}\right]\right\|_{\infty}\right. \\&\left. +\left\|\frac{1}{N} \sum_{i=1}^N\ell_a^{\prime\prime}(y_i-\boldsymbol{x}_i^{\top} \boldsymbol{\theta}^{\star}) \boldsymbol{x}_i\boldsymbol{x}_i^{\top}-\mathbb{E}\left[\ell_a^{\prime\prime}(y_i-\boldsymbol{x}_i^{\top} \boldsymbol{\theta}^{\star}) \boldsymbol{x}_i\boldsymbol{x}_i^{\top}\right]\right\|_{\infty}\right) \left\|\boldsymbol{\theta}^{\star}-\widehat{\boldsymbol{\theta}}_{t}\right\|_{1}\\ &\leq 
	C\sqrt{s}d_{t}\left( \frac{\log d}{n}+\sqrt{\frac{\log d}{n}}+\frac{\log d}{N}+\sqrt{\frac{\log d}{N}}\right) \lesssim \sqrt{s}d_{t}\sqrt{\frac{\log d}{n}} ,
    \end{aligned}
	$$
	where we used that $ \left\|\frac{1}{n} \sum_{i=1}^n\ell_a^{\prime\prime}(y_i-\boldsymbol{x}_i^{\top} \boldsymbol{\theta}^{\star}) \boldsymbol{x}_i\boldsymbol{x}_i^{\top}-\mathbb{E}\left[\ell_a^{\prime\prime}(y_i-\boldsymbol{x}_i^{\top} \boldsymbol{\theta}^{\star}) \boldsymbol{x}_i\boldsymbol{x}_i^{\top}\right]\right\|_{\infty}\leq C\left( \frac{\log d}{n}+\sqrt{\frac{\log d}{n}}\right)  $ by Bernstein's inequality and our assumptions, in the third inequality. And by the properties of function $ \ell_a $, we have
	$$
	\begin{aligned}
	T_{12}&\leq \left\|\frac{1}{n} \sum_{i=1}^n\frac{\ell_a^{\prime\prime\prime}(\widetilde{u}_{i})}{2} \left\langle\boldsymbol{\theta}^{\star}-\widehat{\boldsymbol{\theta}}_{t}, \boldsymbol{x}_i\right\rangle^2\boldsymbol{x}_i\right\|_{\infty}+\left\|\frac{1}{N} \sum_{i=1}^N\frac{\ell_a^{\prime\prime\prime}(\widetilde{u}_{i})}{2} \left\langle\boldsymbol{\theta}^{\star}-\widehat{\boldsymbol{\theta}}_{t}, \boldsymbol{x}_i\right\rangle^2\boldsymbol{x}_i\right\|_{\infty}\\ &\lesssim  a\left\|\boldsymbol{\theta}^{\star}-\widehat{\boldsymbol{\theta}}_{t}\right\|_{1}^2 \|\boldsymbol{x}_i\|_{\infty}^3\lesssim sd_{t}^{2}\frac{\log^{3/2} d}{\sqrt{n} }.
	\end{aligned}
	$$
	For $T_2$, by Lemma \ref{unibd}, we immediately have that $T_2\leq C\sqrt{s}d_t \sqrt{\frac{\log^3 d}{N}}$ with probability at least $ 1-d^{-C} $.

	For Option \uppercase\expandafter{\romannumeral1},
	by Lemma~\ref{trmean}, $ T_3\leq  \frac{1}{(1-2 \beta) m}\left\|\sum_{k \in \mathcal{M}}\left(\widehat{\boldsymbol{g}}^{(k)}-\boldsymbol{g}\right)\right\|_\infty+\frac{2\beta+\alpha}{1-2 \beta}\max_{k \in  \mathcal{M}}\left\|\widehat{\boldsymbol{g}}^{(k)}-\boldsymbol{g}\right\|_{\infty} . $	
		
	$$
	\begin{aligned}
	&\left\|\frac{1}{(1-\alpha) m} \sum_{k \in\mathcal{M}}\left[\nabla\widehat{L}_k\left(\widehat{\boldsymbol{\theta}}_{t}\right)\right]-\mathbb{E}\left[\frac{1}{(1-\alpha) m} \sum_{k \in\mathcal{M}}\nabla\widehat{L}_k\left(\widehat{\boldsymbol{\theta}}_{t}\right)\right]\right\|_{\infty} \\ \leq & \left\|\frac{1}{(1-\alpha) m} \sum_{k \in\mathcal{M}}\left[\nabla\widehat{L}_k\left(\widehat{\boldsymbol{\theta}}_{t}\right)-\nabla \widehat{L}_k(\boldsymbol{\theta}^{\star})\right]-\frac{1}{(1-\alpha) m} \sum_{k \in\mathcal{M}}\mathbb{E}\left[\nabla\widehat{L}_k\left(\widehat{\boldsymbol{\theta}}_{t}\right)-\nabla \widehat{L}_k(\boldsymbol{\theta}^{\star})\right]\right\|_{\infty}\\+&
	\left\|\frac{1}{(1-\alpha) m} \sum_{k \in\mathcal{M}}\left[\nabla \widehat{L}_k(\boldsymbol{\theta}^{\star})\right]-\mathbb{E}\left[\nabla \widehat{L}_k(\boldsymbol{\theta}^{\star})\right]\right\|_{\infty}=:T_{31}+T_{32}.
	\end{aligned}
	$$
	Using Lemma~\ref{unibd}, we get $ T_{31} \lesssim  \sqrt{s}d_{t}\sqrt{\log d} \sqrt{\frac{s\log d }{(1-\alpha)mn}} $, with probability at least $ 1-d^{-C} $. And 
	$T_{32}=\max_{j\in[d]}\left|\frac{1}{(1-\alpha)N} \sum\limits_{i\in [n],k \in \mathcal{M}}x_{i,j}^{(k)}\ell_{a}^{\prime}(y_{i}-\boldsymbol{x}_{i}^{\top}\boldsymbol{\theta}^{\star})-\mathbb{E}\left[x_{\cdot,j}\ell_{a}^{\prime}(y-\boldsymbol{x}^{\top}\boldsymbol{\theta}^{\star})\right]\right|$.
Since $\left|\ell_{a}^{\prime}(y-\boldsymbol{x}^{\top}\boldsymbol{\theta}^{\star})\right|\leq 2/a, $ $\left\|x_{i,j}^{(k)}\ell_{a}^{\prime}(y_{i}-\boldsymbol{x}_{i}^{\top}\boldsymbol{\theta}^{\star})\right\|_{\psi_{2}}\leq \frac{2}{a}\|x_{i,j}^{(k)}\|_{\psi_{2}}=\frac{2}{a}K$, by Bernstein's inequality, we have $$
T_{32}\leq \frac{CK}{a}\left(\sqrt{\frac{\log d}{(1-\alpha)N}}+\frac{\log d}{(1-\alpha)N}\right)  \lesssim \sqrt{\frac{\log d}{(1-\alpha)N}}
,$$ with probability at least $1-2d^{-C}$. 

	Similarly, with probability at least $ 1-md^{-C} $,
	$$\max_{k \in\mathcal{M}}\left|\left[\nabla\widehat{L}_k\left(\widehat{\boldsymbol{\theta}}_{t}\right)\right]_{j}-\mathbb{E}\left[\nabla\widehat{L}_k\left(\widehat{\boldsymbol{\theta}}_{t}\right)\right]_{j}\right|\leq (1+s d_{t}\sqrt{\log d})\sqrt{\frac{\log d}{n}}$$
	Therefore, we derive that, with probability at least $ 1-md^{-C}, $
	$$T_{3}\lesssim(1+s d_{t}\sqrt{\log d})\left( 
	\frac{\sqrt{1-\alpha}}{1-2\beta}\sqrt{\frac{\log  d}{N}}+\frac{2\beta+\alpha}{1-2\beta}\sqrt{\frac{\log d}{n}}\right) .$$

	Hence, with probability at least $  1-md^{-C}, $
	$$\left\| \nabla \widehat{\mathcal{L}}\left(\boldsymbol{\theta}^{\star},\widehat{\boldsymbol{\theta}}_{t}\right)\right\|_{\infty}\leq C \left(\sqrt{\frac{\log d}{N}}+\beta \sqrt{\frac{\log d}{n}}+d_t\cdot\left(\frac{s\log d}{\sqrt{N} }+\frac{s\log d}{\sqrt{n}}\right) \right) .$$
	
	By math induction, we finally obtain with probability at least $  1-tmd^{-C}, $
	$$\left\| \nabla \widehat{\mathcal{L}}\left(\boldsymbol{\theta}^{\star},\widehat{\boldsymbol{\theta}}_{t}\right)\right\|_{\infty}\lesssim \sqrt{\frac{\log  d}{N}}+\beta \sqrt{\frac{\log d}{n}}+\sqrt{\frac{\log d}{n}}\left( \frac{s\log d}{\sqrt{n}}\right) ^{t}.$$
\end{proof}

\begin{proof}[Proof of Proposition \ref{star'}]
	
	For  Option \uppercase\expandafter{\romannumeral2}, the preceding decomposition and proof align with those of Proposition~\ref{star}, with the only distinction lying in the handling of the term \( T_3 \). By the results of Lemma 1 in \cite{stat}, which explicitly bound the tails of the empirical distribution function for honest samples, we obtain that, with probability at least $1-4e^{-2t}$,
	\begin{equation}\label{med}
		\left| Q_{\frac{1}{2}+\alpha}\left\{\widehat{g}^{(k)}_j\left(\boldsymbol{\theta}\right):k  \in \mathcal{M}\right\} - g_j\left(\boldsymbol{\theta}\right) \right| \leq C_\epsilon \sqrt{\frac{\text{Var}(\tilde{\ell}_j(y-\boldsymbol{x}^{\top}\boldsymbol{\theta}))}{n}} \left( \alpha + \sqrt{\frac{t}{m(1-\alpha)}} + c \frac{\gamma(\tilde{\ell}_j(y-\boldsymbol{x}^{\top}\boldsymbol{\theta}))}{\sqrt{n}} \right)
	\end{equation}
	for any fixed $\boldsymbol{\theta} \in \Omega=\left\{\boldsymbol{\theta}:\|\boldsymbol{\theta}\|_0 \leq C s,\left\|\boldsymbol{\theta}-\boldsymbol{\theta}^{\star}\right\|_1 \leq R\right\}$ and $j\in[d]$, where $C_\epsilon$ is defined as $C_\epsilon := \sqrt{2\pi} \exp\left( \frac{1}{2} \left( \Phi^{-1}(1-\epsilon) \right)^2 \right),$ with $\Phi^{-1}(\cdot)$ being the inverse of the cumulative distribution function $\Phi(\cdot)$ of the standard Gaussian distribution. We have $\text{Var}(\tilde{\ell}_j(y-\boldsymbol{x}^{\top}\boldsymbol{\theta}))\leq (2/a)^2$, and  
$\gamma(\tilde{\ell}_j(y-\boldsymbol{x}^{\top}\boldsymbol{\theta})) := \mathbb{E}[ | \tilde{\ell}_j(y-\boldsymbol{x}^{\top}\boldsymbol{\theta}) - \mathbb{E}[\tilde{\ell}_j(y-\boldsymbol{x}^{\top}\boldsymbol{\theta})] |^3 ]/\text{Var}(\tilde{\ell}_j(y-\boldsymbol{x}^{\top}\boldsymbol{\theta}))^{3/2}\leq S< \infty$, since $\tilde{\ell}_j(y-\boldsymbol{x}^{\top}\boldsymbol{\theta})$ is sub-gaussian.  And the same is true for $Q_{1/2-\alpha} $. 

	Let $U^{(k)}(\boldsymbol{\theta})=\widehat{\boldsymbol{g}}^{(k)}(\boldsymbol{\theta})- \widehat{\boldsymbol{g}}^{(k)}( \boldsymbol{\theta}^{\star})-\left(\boldsymbol{g}(\boldsymbol{\theta})-\boldsymbol{g}( \boldsymbol{\theta}^{\star})\right)$. Then, we have the following identity:
	\begin{equation}\label{eq:U}
		\widehat{\boldsymbol{g}}^{(k)}( \widehat{\boldsymbol{\theta}}_t) = 
	\widehat{\boldsymbol{g}}^{(k)}( \boldsymbol{\theta}^{\star})-\boldsymbol{g}( \boldsymbol{\theta}^{\star})+\boldsymbol{g}(\widehat{\boldsymbol{\theta}}_t)+U^{(k)}( \widehat{\boldsymbol{\theta}}_t).
	\end{equation}
	Hence, we derive that, 
$$
\begin{aligned}
T_3 &= \left\|\operatorname*{median}_{k \in [m]}\left\{\widehat{\boldsymbol{g}}^{(k)}\left(\boldsymbol{\theta}^\star\right)-\boldsymbol{g}( \boldsymbol{\theta}^{\star})+U^{(k)}(\widehat{\boldsymbol{\theta}}_t)\right\}\right\|_\infty \\
&\leq \max\left\{\left\|Q_{\frac{1}{2}-\alpha}\left\{\widehat{\boldsymbol{g}}^{(k)}\left(\boldsymbol{\theta}^\star\right):k  \in \mathcal{M}\right\}-\boldsymbol{g}( \boldsymbol{\theta}^{\star})\right\|_\infty,\left\|Q_{\frac{1}{2}+\alpha}\left\{\widehat{\boldsymbol{g}}^{(k)}\left(\boldsymbol{\theta}^\star\right):k  \in \mathcal{M}\right\}-\boldsymbol{g}( \boldsymbol{\theta}^{\star})\right\|_\infty\right\}\\
&+\max_{k\in \mathcal{M}}\sup_{\boldsymbol{\theta}\in \Omega}\left\|U^{(k)}\left(\boldsymbol{\theta}\right)\right\|_\infty\\
&:=T'_{31}+T'_{32},
\end{aligned}
$$
where the inequality follows from Lemma 3 and Equation (38) in \cite{Divide}. 
For $T'_{32}$, by Lemma \ref{unibd}, we obtain that $T'_{32}\leq CR\sqrt{\frac{\log^3 d}{n}}$ with probability at least $ 1-(1-\alpha)md^{-C} $.
And for $T'_{31}$, using equation (\ref{med}) and union bound for $j\in[d]$, we have that, with probability at least $1-4d \exp{(-2t)}$, 
$$
T'_{31}\leq C\frac{2}{a}\left(\sqrt{\frac{t}{(1-\alpha)N}}+\frac{\alpha}{\sqrt{n}}+\frac{S}{n}\right),
$$
substitute $t=C\log d$ and $R\leq\sqrt{s}d_t$ into the above inequalities, we obtain that, with probability at least $1-md^{-C}$, 
$$
T_3\lesssim \sqrt{\frac{\log d}{N}}+\frac{\alpha}{\sqrt{n}}+\frac{1}{n}+d_t\sqrt{\frac{s\log d}{n}}.
$$

	Hence, with probability at least $  1-md^{-C}, $
	$$\left\| \nabla \widehat{\mathcal{L}}\left(\boldsymbol{\theta}^{\star},\widehat{\boldsymbol{\theta}}_{t}\right)\right\|_{\infty}\leq C \left(\sqrt{\frac{\log  d}{N}}+\frac{\alpha}{\sqrt{n} }+\frac{1}{n}+d_t\cdot\sqrt{\frac{s\log d}{n} } \right) .$$
	
	By math induction, we finally obtain with probability at least $  1-tmd^{-C}, $
	$$\left\| \nabla \widehat{\mathcal{L}}\left(\boldsymbol{\theta}^{\star},\widehat{\boldsymbol{\theta}}_{t}\right)\right\|_{\infty}\lesssim \sqrt{\frac{\log  d}{N}}+\frac{\alpha}{\sqrt{n} }+\frac{1}{n}+\sqrt{\frac{\log d}{n}}\left( \sqrt{\frac{s\log d}{n} } \right) ^{t}.$$
\end{proof}

\subsubsection{Proof of Proposition~\ref{spa}}
\begin{proof}
	By definition, there exists $\boldsymbol{\xi}_1\in\partial\left\| \widehat{\boldsymbol{\theta}}_1\right\|_{1}$ such that

$$\lambda \boldsymbol{\xi}_{1} = \frac{1}{n} \sum_{i=1}^{n} \ell_{a}^{\prime}(y_{i} - \langle \boldsymbol{x}_{i}, \widehat{\boldsymbol{\theta}}_1 \rangle) \boldsymbol{x}_{i} - \frac{1}{n} \sum_{i=1}^{n} \ell_{a}^{\prime}(y_{i} - \langle \boldsymbol{x}_{i}, \widehat{\boldsymbol{\theta}}_{0} \rangle) \boldsymbol{x}_{i} -\mathop{\text{Aggregate}}\limits_{k \in [m]}\left\lbrace\widehat {\boldsymbol{g}}^{(k)} \left(\widehat{\boldsymbol{\theta}}_{0}\right)\right\rbrace.$$
Let $\hat{S}=\{j\in[d]:\widehat{\boldsymbol{\theta}}_{1,j}\neq 0\}$ and $\hat{s}=|\hat{S}|$. We know that $$\begin{cases}\boldsymbol{\xi}_{1,j} = \text{sign}(\boldsymbol{\theta}_{1,j})&\text{if } j\in\hat{S},\\
\boldsymbol{\xi}_{1,j} \in [-1, 1]&\text{if }j\notin\hat{S}.\end{cases}$$
Hence, $$\nabla \widehat{\mathcal{L}}(\widehat{\boldsymbol{\theta}}_0)_{\hat{S}} + \lambda \cdot \text{sign}\left(\widehat{\boldsymbol{\theta}}_{1,\hat{S}}\right) = 0,$$
and we have, \begin{equation}\label{support}
	\lambda\sqrt{\hat{s}}\leq \| \nabla \widehat{\mathcal{L}}(\widehat{\boldsymbol{\theta}}_0)_{\hat{S}} \|_2 .
\end{equation}
Since $\nabla \widehat{\mathcal{L}}(\widehat{\boldsymbol{\theta}}_0) - \nabla \widehat{\mathcal{L}}(\boldsymbol{\theta}^*) = \nabla L_1(\widehat{\boldsymbol{\theta}}_0) - \nabla L_1(\boldsymbol{\theta}^*)= \mathbf{H}_{mean}\widehat{\Delta}$,  where the second equality we use the second mean value theorem, $\widehat{\Delta}=\widehat{\boldsymbol{\theta}}_{0}-\boldsymbol{\theta}^{\star}$ and $\mathbf{H}_{mean}$ is the Hessian matrix for some $\boldsymbol{\theta}$.  we derive that,$$ \| (\nabla \widehat{\mathcal{L}}(\widehat{\boldsymbol{\theta}}_0)_{\hat{S}}  \|_2 \le \| (\nabla \widehat{\mathcal{L}}(\boldsymbol{\theta}^\star))_{\hat{S}} \|_2+ \| (\mathbf{H}_{mean} \widehat{\Delta})_{\hat{S}} \|_2,$$ the Hessian matrix form of the pseudo-Huber loss is: $$ \nabla^2 L_1(\boldsymbol{\theta}) = \frac{1}{n} \mathbf{X}^\top \mathbf{W}(\boldsymbol{\theta}) \mathbf{X}, $$where $\mathbf{W}$ is a diagonal matrix, and the weight $w_i = \ell_a''(y_i-\boldsymbol{x}_i^\top \boldsymbol{\theta} )\leq 2$, then  $\mathbf{H}_{mean} = \frac{1}{n} \mathbf{X}^\top \mathbf{W}(\boldsymbol{u}) \mathbf{X}$ for some $\boldsymbol{u}$ between $\boldsymbol{\theta}^\star$ and $\widehat{\boldsymbol{\theta}}_0$.  By theorem 10.5.11 of \cite{hdp}, we have with probability at least $1-2\exp(-cn)$, when $n\ge Cs\log(ed/s),$
$$\frac{\|\mathbf{X}^\top v\|_2}{\sqrt{n}}\leq c<2,$$
for all sparse vectors $v\in\mathbb{R}^d$, such that, $\|v\|_2=1$, $\|v\|_0\leq s$. Then, we get $$\| (\mathbf{H}_{mean} \widehat{\Delta})_{\hat{S}} \|_2\leq c\|\widehat{\Delta}\|_2.$$
Finally, combined with (\ref{support}), we obtain \begin{align*}
\lambda\sqrt{\hat{s}}&\leq\| (\nabla \widehat{\mathcal{L}}(\boldsymbol{\theta}^\star))_{\hat{S}} \|_2+ \| (\mathbf{H}_{mean} \widehat{\Delta})_{\hat{S}} \|_2\\
&\leq \sqrt{\hat{s}}\| \nabla \widehat{\mathcal{L}}(\boldsymbol{\theta}^\star)\|_\infty + c \|\widehat{\Delta}\|_2\\
&\leq \sqrt{\hat{s}}\frac{\lambda}{2}+c\lambda\sqrt{s},
\end{align*}
which conduces that, $\hat{s}\leq C s$.
\end{proof}

\subsection{Proof for Quantile Regression and SVM}
We mainly provide the part we sovle differently from the proof in \cite{wanglei1}.
\begin{proof}[Proof of Proposition~\ref{theorem3}]
	Due to Theorem 1 of \cite{wanglei1}, for $ \lambda_{t+1}\geq 2\left\| \nabla \widehat{\mathcal{L}}\left(\boldsymbol{\theta}^{\star},\widehat{\boldsymbol{\theta}}_{t}\right)\right\|_{\infty} $, we have with probability at least $ 1-n^{-C} $,
	\begin{equation}\label{thm2}
	d_{t}\leq C\left( \lambda_{t}\sqrt{s}+\frac{s^{3/2}\log d\log n}{n}\right) ,
	\end{equation}
then the convergence rate would be dominated by $ \lambda_{t} $. 
We just need to estimate $\lambda$, whose bound is related to $\|\nabla\widehat{\mathcal{L}}(\boldsymbol{\theta}^\star)\|_\infty$. Since Proposition 1 in \cite{wanglei1} shows that, with probability at least $1-d^{-C}$,
$$ \left\| \nabla \widehat{L}_1(\boldsymbol{\theta}^\star) - \nabla \widehat{L}_1(\widehat{\boldsymbol{\theta}}_0) + \nabla \widehat{L}(\widehat{\boldsymbol{\theta}}_0) \right\|_\infty \leq C \left( \sqrt{\frac{\log d}{N}} + \frac{s \log d}{n^{3/4}} + \frac{s (\log d)^{3/2}}{n} \right), $$
and noticed that,
$$ \|\nabla \widehat{\mathcal{L}}(\boldsymbol{\theta}^\star)\|_\infty \leq \left\| \nabla \widehat{L}_1(\boldsymbol{\theta}^\star) - \nabla \widehat{L}_1(\widehat{\boldsymbol{\theta}}_0) + \nabla \widehat{L}(\widehat{\boldsymbol{\theta}}_0) \right\|_\infty + \left\| \nabla \widehat{L}(\widehat{\boldsymbol{\theta}}_0) -\mathop{\text{Aggregate}}\limits_{k \in [m]}\left\lbrace\widehat {\boldsymbol{g}}^{(k)} \left(\widehat{\boldsymbol{\theta}}_{0}\right)\right\rbrace\right\|_\infty,$$we only need to give a bound of  $\left\| \nabla \widehat{L}(\widehat{\boldsymbol{\theta}}_0) -\mathop{\text{Aggregate}}\limits_{k \in [m]}\left\lbrace\widehat {\boldsymbol{g}}^{(k)} \left(\widehat{\boldsymbol{\theta}}_{0}\right)\right\rbrace\right\|_\infty.$

For Option \uppercase\expandafter{\romannumeral1}, let $\bar{\boldsymbol{g}}=\frac{1}{m}  \sum_{k=1}^{m}\nabla \widehat{L}_k(\widehat{\boldsymbol{\theta}}_0) $,
 $$\left\| \nabla \widehat{L}(\widehat{\boldsymbol{\theta}}_0) -\mathop{\text{trmean}}\limits_{\beta}\left\lbrace\widehat {\boldsymbol{g}}^{(k)} \left(\widehat{\boldsymbol{\theta}}_{0}\right)\right\rbrace\right\|_\infty\leq \frac{1}{(1-2 \beta) m}\left\|\sum_{k \in \mathcal{M}}\left(\widehat{\boldsymbol{g}}^{(k)}-\bar{\boldsymbol{g}}\right)\right\|_ \infty+\frac{2\beta+\alpha}{1-2 \beta}\max_{k \in \mathcal{M}}\left\|\widehat{\boldsymbol{g}}^{(k)}-\bar{\boldsymbol{g}}\right\|_ {\infty} .$$ Since
\begin{align*}
    &\quad\frac{1}{(1-\alpha)m }\sum_{k \in \mathcal{M}}\left(\widehat{\boldsymbol{g}}^{(k)}-\bar{\boldsymbol{g}}\right)\\
    &= \frac{1}{(1-\alpha)m }\sum_{k \in \mathcal{M}}\left(\widehat{\boldsymbol{g}}^{(k)}-\boldsymbol{g}\right)-\frac{1}{(1-\alpha)m }\sum_{k \in \mathcal{M}}\left(\bar{\boldsymbol{g}}-\boldsymbol{g}\right) \\
    &= \frac{1}{(1-\alpha)N}\sum_{1\leq i\leq n, k\in\mathcal{M}} \bigg( \boldsymbol{x}_i^{(k)} \big( \tau - \mathbf{1}_{\left\{ y_i - \widehat{\boldsymbol{\theta}}_0^\top \boldsymbol{x}_i^{(k)} \leq 0 \right\}} \big) - \mathbb{E} \left[ \boldsymbol{x} \big( \tau - \mathbf{1}_{\left\{ y - \widehat{\boldsymbol{\theta}}_0^\top \boldsymbol{x} \leq 0 \right\}} \big) \right] \bigg) \\
    &- \frac{1}{N}\sum_{1\leq i\leq N} \bigg( \boldsymbol{x}_i^{(k)} \big( \tau - \mathbf{1}_{\left\{ y_i - \widehat{\boldsymbol{\theta}}_0^\top \boldsymbol{x}_i^{(k)} \leq 0 \right\}} \big) - \mathbb{E} \left[ \boldsymbol{x} \big( \tau - \mathbf{1}_{\left\{ y - \widehat{\boldsymbol{\theta}}_0^\top \boldsymbol{x} \leq 0 \right\}} \big) \right] \bigg),
\end{align*}
and 
\begin{align*}
\widehat{\boldsymbol{g}}^{(K)}-\bar{\boldsymbol{g}}
&=\frac{m-1}{m}\left(\widehat{\boldsymbol{g}}^{(K)}-\widehat{\boldsymbol{g}}^{(-K)}\right)\\
&=\left(1-\frac{1}{m}\right)\frac{1}{n}\sum_{1\leq i\leq n}\bigg( \boldsymbol{x}_i^{(K)} \left( \tau - \mathbf{1}_{\left\{ y_i - \widehat{\boldsymbol{\theta}}_0^\top \boldsymbol{x}_i^{(K)}\leq 0 \right\}} \right)-\mathbb{E} \left[ \boldsymbol{x} \left( \tau - \mathbf{1}_{\left\{ y - \widehat{\boldsymbol{\theta}}_0^\top \boldsymbol{x}\leq 0 \right\}} \right) \right] \bigg)\\
&- \frac{1}{N}\sum_{k\neq K}\bigg( \boldsymbol{x}_i^{(k)} \left( \tau - \mathbf{1}_{\left\{ y_i - \widehat{\boldsymbol{\theta}}_0^\top \boldsymbol{x}_i^{(k)}\leq 0 \right\}} \right)-\mathbb{E} \left[ \boldsymbol{x} \left( \tau - \mathbf{1}_{\left\{ y - \widehat{\boldsymbol{\theta}}_0^\top \boldsymbol{x}\leq 0 \right\}} \right) \right] \bigg),
\end{align*}
by similar argument as Lemma A.1 of \cite{wanglei1}, with  $$\sigma^2=\sup_j\mathbb{E}\left[ x_{j}\mathbf{1}_{\left\{ \max_j |x_j| \leq c_n \right\}}\big( \tau - \mathbf{1}_{\left\{ y - \widehat{\boldsymbol{\theta}}_0^\top \boldsymbol{x} \leq 0 \right\}} \big) \right]^2\leq \log (Nd),$$ we get
with probability at least $1-d^{-C}$, $$\frac{1}{(1-2 \beta) m}\left\|\sum_{k \in \mathcal{M}}\left(\widehat{\boldsymbol{g}}^{(k)}-\bar{\boldsymbol{g}}\right)\right\|_ \infty\lesssim \frac{1-\alpha}{1-2 \beta} \left( \sqrt{\frac{\log d}{(1-\alpha)N}} +\sqrt{\frac{\log d}{N}} \right),$$ and with probability at least $1-(1-\alpha)md^{-C}$,$$\frac{2\beta+\alpha}{1-2 \beta}\max_{k \in \mathcal{M}}\left\|\widehat{\boldsymbol{g}}^{(k)}-\bar{\boldsymbol{g}}\right\|_ {\infty} \lesssim \frac{2\beta+\alpha}{1-2 \beta}(1-\frac{1}{m})\sqrt{\frac{\log d}{n}}.$$
Put all the above togather, and we proved that, with probability at least $1-(1-\alpha)md^{-C}$, $$\|\nabla \widehat{\mathcal{L}}(\boldsymbol{\theta}^\star)\|_\infty\leq C\left( \sqrt{\frac{\log d}{N}} + \beta \sqrt{\frac{\log d}{n}}  + \frac{s \log d}{n^{3/4}}  \right).$$ 

with probability at least $1-(1-\alpha)tmd^{-C}.$
On that event, with probability at least $1 - n^{-C}$,$$\|\widehat{\boldsymbol{\theta}} - \boldsymbol{\theta}^\star\|_ 2 \lesssim \sqrt{\frac{s\log d}{N}} + \beta \sqrt{\frac{s\log d}{n}} +  \frac{s^{3/2}\log d}{n^{3/4}}  .$$

\end{proof}

The proof for Option \uppercase\expandafter{\romannumeral2} is similar to that of Proposition~\ref{star'}, so we omit the proof.
And in general, by induction, we can prove that,
\[\left\|\widehat{\boldsymbol{\theta}}_t - \boldsymbol{\theta}^\star\right\|_ 2\lesssim  \sqrt{\frac{s\log d}{N}} + \alpha \sqrt{\frac{s\log d}{n}}  + 
\left( \frac{s \log d}{n} \right)^{1 - (1/2)^{t+1}},
\]
for proper choice of $\widehat{\boldsymbol{\theta}}_{0}$. As an extension of results in \cite{wanglei1}, we can see that, the iterative term is still dominated by contamination error term, and the essential order of the error term is $O(\sqrt{s\log d/N})+O(\alpha\sqrt{s\log d/n})$.

In the proof of Theorem~\ref{theorem4}, compared to \cite{svm}, we prove the corresponding estimation bound of the communication error using methods in \cite{svmm}. 

\begin{proof}[Proof of Theorem~\ref{theorem4}]
Assume that $\lambda \geq 2\|\nabla \widehat{\mathcal{L}}(\boldsymbol{\theta}^\star)\|_\infty$, we will fisrt use the following Step 1 -- Step 3 to prove our first conclusion: with probability at least $1 - n^{-C}$, $$ \|\widehat{\boldsymbol{\theta}} - \boldsymbol{\theta}^\star\|_2 \leq C \left[ \lambda \sqrt{s} + \frac{s^{3/2}(\log d)^{5/2}}{n}\right]. $$

\noindent\textbf{Step 1.} Let $\boldsymbol{\delta}=\boldsymbol{\theta}-\boldsymbol{\theta}^\star$, and $\widehat{\boldsymbol{\delta}}=\widehat{\boldsymbol{\theta}}-\boldsymbol{\theta}^\star$, by the same argument as the proof of Theorem~\ref{theorem1}, we know that $$ \|\widehat{\boldsymbol{\delta}}_{S^c}\|_1 \leq 3\|\widehat{\boldsymbol{\delta}}_S\|_1. $$

\noindent\textbf{Step 2.}  We claim that, with probability at least $1-n^{-C}$, 
$$\begin{aligned}
&\sup_{\substack{\|\boldsymbol{\delta}\|_2 \leq t,\,\\\|\boldsymbol{\delta}_{S^c}\|_1 \leq 3\|\boldsymbol{\delta}_S\|_1}} \left| \widehat{L}_1(\boldsymbol{\theta}^\star + \boldsymbol{\delta}) - \widehat{L}_1(\boldsymbol{\theta}^\star) - \boldsymbol{\delta}^\top \nabla \widehat{L}_1(\boldsymbol{\theta}^\star) - \mathbb{E}\left[\widehat{L}_1(\boldsymbol{\theta}^\star + \boldsymbol{\delta})\right] + \mathbb{E}\left[\widehat{L}_1(\boldsymbol{\theta}^\star)\right] \right|\\
& \leq C \left( \frac{s^{3/4} (c_n t)^{3/2} \sqrt{\log n}}{\sqrt{n}} + \frac{c_n \sqrt{s} t \log n}{n} \right).
\end{aligned}  $$
Without loss of generality, we condition on $\{y_i = 1, i = 1, \dots, n\}$ in the following, since all $2^n$ cases can be handled in exactly the same way. We also condition on the event $\{\max_i \|\boldsymbol{x}_i\|_\infty \leq c_n := C\sqrt{\log (nd)}\}$. Using the identity 
$$
(a - b)_+ - a_+ = \int_0^{b} \mathbf{1}_{\{0<a \leq u\}} du - b \mathbf{1}_{\{a > 0\}},
$$
we derive that
$$\left(1 - y \boldsymbol{x}^\top (\boldsymbol{\theta}^\star + \boldsymbol{\delta})\right)_+ - \left(1 - y \boldsymbol{x}^\top \boldsymbol{\theta}^\star\right)_+ +\boldsymbol{\delta}^\top y\boldsymbol{x}\mathbf{1}_{\{1 - y \boldsymbol{x}^\top \boldsymbol{\theta}^\star\geq0\}}=\int_0^{\boldsymbol{x}^\top\boldsymbol{\delta}}\mathbf{1}_{\{0<1 - y \boldsymbol{x}^\top \boldsymbol{\theta}^\star\leq u\}}du,
$$
hence, 
$$\begin{aligned}
\widehat{L}_1(\boldsymbol{\theta}^\star+\boldsymbol{\delta}) - \widehat{L}_1(\boldsymbol{\theta}^\star)-\boldsymbol{\delta}^\top\nabla\widehat{L}_1(\boldsymbol{\theta}^\star)=\frac{1}{n}\sum_{i=1}^{n}\int_0^{\boldsymbol{x}_i^\top\boldsymbol{\delta}}\mathbf{1}_{\{0<1 -  \boldsymbol{x}_i^\top \boldsymbol{\theta}^\star\leq u\}}du.
\end{aligned}$$
Denote the integral function by $D(\boldsymbol{x}; \boldsymbol{\delta})=\int_0^{\boldsymbol{x}^\top\boldsymbol{\delta}}\mathbf{1}_{\{0<1 -  \boldsymbol{x}^\top \boldsymbol{\theta}^\star\leq u\}}du.$ Then for all $\boldsymbol{\delta}\in\mathcal{C}(S,3)$, we have $|D(\boldsymbol{x}; \boldsymbol{\delta})|\leq |\boldsymbol{x}^\top\boldsymbol{\delta}|\leq c_n\|\boldsymbol{\delta}\|_1\leq 4c_n \|\boldsymbol{\delta}_S\|_1 \leq 4c_n \sqrt{s} \|\boldsymbol{\delta}\|_2,$ and 
$$ \mathbb{E}[D^2(\boldsymbol{x}; \boldsymbol{\delta})] \le  \mathbb{E} \left[ (\boldsymbol{x}^\top \boldsymbol{\delta})^2 \cdot \mathbf{1}_{\left\{ |1 - \boldsymbol{x}^\top \boldsymbol{\theta}^\star| \le |\boldsymbol{x}^\top \boldsymbol{\delta}| \right\}} \right], $$
by assumption (C1), we have that,  $f(\boldsymbol{x}^\top \boldsymbol{\theta}^\star|\boldsymbol{x})\leq C_{den}$ for $|1 - \boldsymbol{x}^\top \boldsymbol{\theta}^\star| \le |\boldsymbol{x}^\top \boldsymbol{\delta}|$, hence,
$$ \sigma_D^2 = \text{Var}(D) \le C_{den} \cdot \mathbb{E} \left[ (\boldsymbol{x}^\top \boldsymbol{\delta})^2 \cdot |\boldsymbol{x}^\top \boldsymbol{\delta}| \right]\le C c_n^3\|\boldsymbol{\delta}\|_1^3 \le C c_n^3s^{3/2}\|\boldsymbol{\delta}\|_2^3.$$
Using similar method in Lemma A.1 of \cite{wanglei1}, we obtain, with probability at least $1-n^{-C}$,
$$\begin{aligned} 
&\widehat{L}_1(\boldsymbol{\theta}^\star + \boldsymbol{\delta}) - \widehat{L}_1(\boldsymbol{\theta}^\star) - \boldsymbol{\delta}^\top \nabla \widehat{L}_1(\boldsymbol{\theta}^\star) - \mathbb{E}\left[\widehat{L}_1(\boldsymbol{\theta}^\star + \boldsymbol{\delta})\right] +\mathbb{E}\left[\widehat{L}_1(\boldsymbol{\theta}^\star)\right]\\
& \leq C \left( \frac{s^{3/4} (c_n \|\boldsymbol{\delta}\|_2)^{3/2} \sqrt{\log n}}{\sqrt{n}} + \frac{c_n \sqrt{s} \|\boldsymbol{\delta}\|_2 \log n}{n} \right). 
\end{aligned}$$

\noindent\textbf{Step 3.}  Assume $\|\widehat{\boldsymbol{\delta}}\|_2= t>0$, by the definition of $\widehat{\boldsymbol{\theta}}$, we have
$$ \widehat{\mathcal{L}}(\boldsymbol{\theta}^\star + \widehat{\boldsymbol{\delta}}) - \widehat{\mathcal{L}}(\boldsymbol{\theta}^\star) + \lambda\|\boldsymbol{\theta}^\star + \widehat{\boldsymbol{\delta}}\|_1 - \lambda\|\boldsymbol{\theta}^\star\|_1 \leq 0. $$We also know that, $-\|\boldsymbol{\theta}^\star + \widehat{\boldsymbol{\delta}}\|_1 + \|\boldsymbol{\theta}^\star\|_1 \leq \|\widehat{\boldsymbol{\delta}}_S\|_1 \leq \sqrt{s}\|\widehat{\boldsymbol{\delta}}_S\|_2 \leq \sqrt{s}t.$ Using the result from Step 2 and the lower bound for $\mathbb{E}\left[\widehat{L}_1(\boldsymbol{\theta}^\star + \boldsymbol{\delta})\right] - \mathbb{E}\left[\widehat{L}_1(\boldsymbol{\theta}^\star)\right]$, similar to Lemma 4 of \cite{quansparse}, we have
$$ \begin{aligned}
C(t^2 \land t) &\leq\mathbb{E}\left[\widehat{L}_1(\boldsymbol{\theta}^\star + \widehat{\boldsymbol{\delta}})\right] - \mathbb{E}\left[\widehat{L}_1(\boldsymbol{\theta}^\star)\right]\\
&\leq \widehat{L}_1(\boldsymbol{\theta}^\star + \widehat{\boldsymbol{\delta}}) - \widehat{L}_1(\boldsymbol{\theta}^\star) - \boldsymbol{\delta}^\top \nabla \widehat{L}_1(\boldsymbol{\theta}^\star)  + C \left( \frac{s^{3/4} (c_n t)^{3/2} \sqrt{\log n}}{\sqrt{n}} + \frac{c_n \sqrt{s} t \log n}{n} \right) \\
&= \widehat{\mathcal{L}}(\boldsymbol{\theta}^\star + \widehat{\boldsymbol{\delta}}) - \widehat{\mathcal{L}}(\boldsymbol{\theta}^\star) - \boldsymbol{\delta}^\top \nabla \widehat{\mathcal{L}}(\boldsymbol{\theta}^\star) + C \left( \frac{s^{3/4} (c_n t)^{3/2} \sqrt{\log n}}{\sqrt{n}} + \frac{c_n \sqrt{s} t \log n}{n} \right) \\
&\leq -\lambda(\|\boldsymbol{\theta}^\star + \widehat{\boldsymbol{\delta}}\|_1 - \|\boldsymbol{\theta}^\star\|_1 ) + \|\widehat{\boldsymbol{\delta}}\|_1\|\nabla \widehat{\mathcal{L}}(\boldsymbol{\theta}^\star)\|_\infty  + C \left( \frac{s^{3/4} (c_n t)^{3/2} \sqrt{\log n}}{\sqrt{n}} + \frac{c_n \sqrt{s} t \log n}{n} \right) \\
&\leq C\lambda\sqrt{s}t + C \left( \frac{s^{3/4} (c_n t)^{3/2} \sqrt{\log n}}{\sqrt{n}} + \frac{c_n \sqrt{s} t \log n}{n} \right). 
\end{aligned} $$
Some algebra shows that
$$ t \leq C \left( \lambda\sqrt{s} + \frac{c_n\sqrt{s}\log n}{n} + \frac{s^{3/2}c_n^2\log n}{n} \right) \leq C \left( \lambda\sqrt{s} + \frac{s^{3/2}c_n^2\log n}{n} \right), $$
which completes the proof of the first sub-conclusion in our argument.

\noindent\textbf{Step 4.} The argument proceeds along the same lines as the proof of Proposition~\ref{theorem3}, leading us to conclude that,
$$\|\nabla \widehat{\mathcal{L}}(\boldsymbol{\theta}^\star)\|_\infty\leq C\left( \sqrt{\frac{\log d}{N}} + \beta \sqrt{\frac{\log d}{n}}  + \frac{s \log d}{n^{3/4}}  \right),$$ with probability at least $1-(1-\alpha)md^{-C}$.
 On that event, with probability at least $1 - n^{-C}$,$$\|\widehat{\boldsymbol{\theta}} - \boldsymbol{\theta}^\star\|_ 2 \lesssim \sqrt{\frac{s\log d}{N}} + \beta \sqrt{\frac{s\log d}{n}} +  \frac{s^{3/2}\log d}{n^{3/4}}  .$$
\end{proof}

\subsection{Technical Lemmas}
We adapt the statement of the following lemma from \cite{wanglei1}.
\begin{lemma}\label{unibd}
	If $\left\|\boldsymbol{\theta}\right\| _{0}\leq Cs,  \left\|\boldsymbol{\theta}-\boldsymbol{\theta}^{\star} \right\| _{1}\leq R $, then as long as the derivate $\ell^{\prime}(\cdot)$ of loss function is Lipschitz and monotonic (or piecewise monotonic), with probability at least $1-d^{-C}$, we have
	
	$$
	\begin{aligned}
	& \left\|\frac{1}{n} \sum_i \boldsymbol{x}_i\left(\ell^{\prime}(y_i-\boldsymbol{x}_i^{\top} \boldsymbol{\theta})-\ell^{\prime}(y_i-\boldsymbol{x}_i^{\top} \boldsymbol{\theta}^{\star})\right) -\mathbb{E}\boldsymbol{x}\left(\ell^{\prime}(y-\boldsymbol{x}^{\top} \boldsymbol{\theta})-\ell^{\prime}(y-\boldsymbol{x}^{\top} \boldsymbol{\theta}^{\star})\right) \right\|_{\infty} \\
	& \quad \leq C\left(c_n R \sqrt{\frac{s \log d}{n}}+c_n \frac{s \log d}{n}\right) .
	\end{aligned}
	$$
	
\end{lemma}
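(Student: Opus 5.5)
This is a uniform deviation bound for an empirical process indexed by sparse parameters in an $\ell_1$-ball around $\boldsymbol{\theta}^\star$. I will prove it coordinate by coordinate and then take a union bound over $j\in[d]$.

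\textbf{Step 1: truncation.} First I restrict to the event $\{\max_{i}\|\boldsymbol{x}_i\|_\infty\le c_n\}$ with $c_n=C\sqrt{\log(nd)}$. By sub-Gaussianity (Assumption~\ref{a3}) and a union bound over $nd$ entries, this event has probability at least $1-d^{-C}$.

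\textbf{Step 2: centred summands and their size.} Fix $j$ and write $h_{\boldsymbol{\theta}}(y,\boldsymbol{x})=x_j\bigl(\ell'(y-\boldsymbol{x}^\top\boldsymbol{\theta})-\ell'(y-\boldsymbol{x}^\top\boldsymbol{\theta}^\star)\bigr)$. Since $\ell'$ is $L$-Lipschitz, on the truncation event
\[
|h_{\boldsymbol{\theta}}|\le L c_n|\boldsymbol{x}^\top(\boldsymbol{\theta}-\boldsymbol{\theta}^\star)|\le L c_n^2 R .
\]
For the variance, $\mathbb{E}h_{\boldsymbol{\theta}}^2\le L^2c_n^2\,\mathbb{E}(\boldsymbol{x}^\top\boldsymbol{\delta})^2\le C c_n^2\|\boldsymbol{\delta}\|_2^2\le Cc_n^2R^2$, using Assumption~\ref{a4} with $\boldsymbol{\delta}=\boldsymbol{\theta}-\boldsymbol{\theta}^\star$. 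Bernstein's inequality for a fixed $\boldsymbol{\theta}$ then gives the deviation bound
\[
c_nR\sqrt{u/n}+c_n^2Ru/n
\]
with probability $1-2e^{-u}$.

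\textbf{Step 3: uniformity over the index set.} I pass to a supremum over $\Omega=\{\|\boldsymbol{\theta}\|_0\le Cs,\ \|\boldsymbol{\theta}-\boldsymbol{\theta}^\star\|_1\le R\}$. The set $\boldsymbol{\theta}-\boldsymbol{\theta}^\star$ ranges over vectors that are $(C+1)s$-sparse, so $\Omega$ is a union of $\binom{d}{(C+1)s}\le e^{Cs\log d}$ pieces. Each piece is a low-dimensional $\ell_1$-ball. Because $\ell'$ is monotone (or piecewise monotone), the class $\{h_{\boldsymbol{\theta}}\}$ restricted to one support is a VC-subgraph class of dimension $O(s)$. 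Hence the supremum over a piece can be controlled in either of two ways: by an $\epsilon$-net with $\epsilon=R/n$ and the Lipschitz bound to handle the discretisation error, or by a bracketing/Talagrand argument as in Lemma A.1 of \cite{wanglei1}.

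Taking $u=Cs\log d$ to absorb the entropy $e^{Cs\log d}$ from the supports, the net, and the union over $j\in[d]$ yields
\[
C\left(c_nR\sqrt{\frac{s\log d}{n}}+c_n\frac{s\log d}{n}\right).
\]
The second term arises from the Bernstein range term once $c_nR\lesssim 1$. That holds in our regime, where $R\lesssim s\sqrt{\log d/n}$.

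\textbf{Main obstacle.} I expect the delicate part to be the uniformity step. It needs a monotonicity-based VC or bracketing argument rather than plain Lipschitz contraction, so that the covering complexity scales like $s\log d$ and not $d$. A further difficulty is that for non-smooth $\ell'$, such as the quantile indicator, the Lipschitz bound in Step 2 must be replaced by a density-based variance bound. That case is handled exactly as in Lemma A.1 of \cite{wanglei1}, which I would invoke.
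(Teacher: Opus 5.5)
Your proof is correct, and its skeleton matches the paper's. Both truncate at $c_n$, split $\Omega$ into the $\binom{d}{Cs}$ sparse pieces $\Omega(T)$, pay entropy of order $s\log d$ for the supports plus $\log d$ for the coordinates, and use the variance proxy $\sigma_j\lesssim c_nR$.

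The difference is in the uniform step. The paper never discretizes by hand. It uses monotonicity of $\ell'$ to make each $\mathcal{F}(\Omega(T))$ a VC-subgraph class of index $Cs$, and turns this into the covering bound $(Cd\|F\|_{L_2(P_n)}/\epsilon)^{Cs}$. It then bounds $\mathbb{E}\|R_n\|_{\mathcal{F}(\Omega)}$ by Koltchinskii's entropy bound and concentrates via symmetrization and Talagrand's inequality with $t=C\log d$. This is close to your second option.

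Your primary option is an $\epsilon$-net of each sparse $\ell_1$-piece at scale $R/n$, with pointwise Bernstein and a union bound. The Lipschitz property of $\ell'$ controls the discretization error by $2Lc_n^2R/n$. This is more elementary, and under the lemma's Lipschitz hypothesis it makes monotonicity unnecessary. So your closing remark that uniformity \emph{needs} a VC or bracketing argument is not accurate here. That machinery is indispensable only for non-Lipschitz $\ell'$, such as the quantile or hinge indicators, which this lemma does not cover.

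The paper's route buys exactly that generality, plus a range term free of $R$. Its envelope $F=|x_j|$ tacitly uses boundedness of $\ell'$ (true for pseudo-Huber, since $|\ell_a'|\le 2/a$), which gives $c_ns\log d/n$. Your Lipschitz range bound $Lc_n^2R$ instead gives $c_n^2Rs\log d/n$. You do not need the regime assumption $c_nR\lesssim 1$ to absorb this. Since $c_n^2Rs\log d/n=c_nR\sqrt{s\log d/n}\cdot c_n\sqrt{s\log d/n}$, it is dominated by the first term once $n\gtrsim c_n^2s\log d$.

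Two small costs on your side are shared with the paper:
\begin{itemize}
\item The net at scale $R/n$ adds entropy $s\log n$, so strictly you get $\log(nd)$. The paper's Koltchinskii bound involves $\log(Cd\|F\|/\sigma_j)$ with $\sigma_j\asymp c_nR$, which hides a comparable factor.
\item Neither argument accounts for the gap between truncated and untruncated means. By Cauchy--Schwarz this gap is $O(R\,(nd)^{-C})$, so it is harmless.
\end{itemize}
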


\begin{proof}[Proof of Lemma~\ref{unibd}]
	Define the class of functions
	
	$$
	\mathcal{F}(\Omega)=\left\{x_j \mathbf{1}_{\left\{\max _j\left|x_j\right| \leq c_n\right\}}\left(\ell^{\prime}(y_i-\boldsymbol{x}_i^{\top} \boldsymbol{\theta})-\ell^{\prime}(y_i-\boldsymbol{x}_i^{\top} \boldsymbol{\theta}^{\star})\right): \boldsymbol{\theta} \in \Omega\right\},
	$$
	with envelope function $F(\boldsymbol{x}, y)=\left|x_j\right|$. 
	
	We take $$\Omega=\left\{\boldsymbol{\theta}:\|\boldsymbol{\theta}\|_0 \leq C s,\left\|\boldsymbol{\theta}-\boldsymbol{\theta}^{\star}\right\|_1 \leq R\right\}=\cup_{T \subset\{1, \ldots, d\},|T| \leq C s} \Omega(T)$$
	with $\Omega(T)=\{\boldsymbol{\theta}:$ support of $\boldsymbol{\theta} \subset$ $\left.T,\left\|\boldsymbol{\theta}-\boldsymbol{\theta}_0\right\|_1 \leq R\right\}$. By Lemma 2.6.15, Lemma 2.6.18(vi) and the proof of Lemma 2.6.18(viii) in \cite{van}, since $ \mathcal{F}(\Omega) $ is the class of multipliers and  compositions of finite dimensional functions and monotonic (or piecewise monotonic) functions, for each fixed $T$ with $|T| \leq C s$, $\mathcal{F}(\Omega(T))$ is a VC-subgraph with index bounded by $C s$. Then by Theorem 2.6.7 of \cite{van}, we have
	
	$$
	N\left(\epsilon, \mathcal{F}(\Omega(T)), L_2\left(P_n\right)\right) \leq\left(\frac{C\|F\|_{L_2\left(P_n\right)}}{\epsilon}\right)^{C s}.
	$$
	Since there are at most $\binom{d}{C s} \leq(e d /(C s))^{C s}$ different such $T$, we have
	
	$$
	N\left(\epsilon, \mathcal{F}(\Omega), L_2\left(P_n\right)\right) \leq\left(\frac{C\|F\|_{L_2\left(P_n\right)}}{\epsilon}\right)^{C s}\left(\frac{e d}{C s}\right)^{C s} \leq\left(\frac{C d\|F\|_{L_2\left(P_n\right)}}{\epsilon}\right)^{C s}
	$$

	With $\Omega$ set as above, let $\sigma_{j}^2=\sup _{f \in \mathcal{F}(\Omega)} P f^2$. Then by Theorem 3.12 of \cite{oracle}, with $\|F\|_{L_2(P)}$ obviously bounded by a constant, we have
	
	$$
	\mathbb{E}\left\|R_n\right\|_{\mathcal{F}(\Omega)} \leq C\left(\sigma_{j}\sqrt{\frac{s \log d}{n}}+\frac{c_n s \log d}{n}\right).
	$$
	where $\left\|R_n\right\|_{\mathcal{F}}=\sup _{f \in \mathcal{F}} n^{-1} \sum_{i=1}^n \varepsilon_i f\left(\boldsymbol{x}_i, y_i\right)$ with $\varepsilon_i$ being i.i.d. Rademacher random variables. 
	
	Using the symmetrization inequality which states that $E \| P_n-$ $P\left\|_{\mathcal{F}} \leq 2 E\right\| R_n \|_{\mathcal{F}}$, Talagrand's inequality (p.24 of \cite{oracle}) gives
	
	$$
	P\left(\left\|P_n-P\right\|_{\mathcal{F}} \geq C\left(\sigma_{j} \sqrt{\frac{s \log d}{n}}+c_n \frac{s \log d}{n}+\sqrt{\frac{\sigma_{j}^2 t}{n}}+c_n \frac{t}{n}\right)\right) \leq e^{-t},
	$$
	that is, with probability $1-d^{-C}$,
	
	$$
	\left\|P_n-P\right\|_{\mathcal{F}} \leq C\left(\sigma_{j} \sqrt{\frac{s \log d}{n}}+c_n \frac{s \log d}{n}\right),
	$$
hence,
	$$
	\begin{aligned}
	& \left\|\frac{1}{n} \sum_i \boldsymbol{x}_i\left(\ell^{\prime}(y_i-\boldsymbol{x}_i^{\top} \boldsymbol{\theta})-\ell^{\prime}(y_i-\boldsymbol{x}_i^{\top} \boldsymbol{\theta}^{\star})\right) -\mathbb{E}\boldsymbol{x}\left(\ell^{\prime}(y-\boldsymbol{x}^{\top} \boldsymbol{\theta})-\ell^{\prime}(y-\boldsymbol{x}^{\top} \boldsymbol{\theta}^{\star})\right) \right\|_{\infty} \\
	=& \max_{j\in [d]} \left|\frac{1}{n} \sum_i x_{i,j}\left(\ell^{\prime}(y_i-\boldsymbol{x}_i^{\top} \boldsymbol{\theta})-\ell^{\prime}(y_i-\boldsymbol{x}_i^{\top} \boldsymbol{\theta}^{\star})\right) -\mathbb{E}x_{\cdot,j}\left(\ell^{\prime}(y-\boldsymbol{x}^{\top} \boldsymbol{\theta})-\ell^{\prime}(y-\boldsymbol{x}^{\top} \boldsymbol{\theta}^{\star})\right) \right|\\
	 \leq &C\sup_{j\in[d]}\left(\sigma_{j} \sqrt{\frac{s \log d}{n}}+c_n \frac{s \log d}{n}\right).
	\end{aligned}
	$$
	
	It is easy to see that
	
	$$
	\sup_{j\in[d]}\sigma_{j}^2 \leq C \sup _{\boldsymbol{\theta} \in \Omega} \mathbb{E}\left[\left|\mathbf{1}_{\left\{\max _j\left|x_j\right| \leq c_n\right\}} \boldsymbol{x}_i^{\top}\left(\boldsymbol{\theta}-\boldsymbol{\theta}^{\star}\right)\right|^{2}\right] \leq C c_n^{2} R^{2}.
	$$

\end{proof}

We next give a deterministic bound for the trimmed-mean aggregator, showing how averaging over the honest workers mitigates the effect of Byzantine contamination by letting $\mu$ be the mean of the honest workers.
\begin{lemma}\label{trmean}
For any $\mu\in \mathbb{R}$. If $\beta>\alpha$, then
	$$\left|\operatorname{trmean}_{\beta}\left\{\widehat{g}^{(k)}_{j}: k \in [m]\right\}-\mu\right|\leq\frac{1}{(1-2 \beta) m}\left|\sum_{k \in \mathcal{M}}\left(\widehat{g}^{(k)}_{j}-\mu\right)\right|+\frac{2\beta+\alpha}{1-2 \beta}\max_{k \in  \mathcal{M}}\left|\widehat{g}^{(k)}_{j}-\mu\right|.$$
\end{lemma}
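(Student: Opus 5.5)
The plan is a purely deterministic decomposition of the trimmed mean, fixing a coordinate $j$ and $\mu$. Write $b=\beta m$, $\mathcal{U}=\mathcal{U}_j(t)$ for the untrimmed set, so $|\mathcal{U}|=(1-2\beta)m$. Then
\[
\operatorname{trmean}_\beta\{\widehat g_j^{(k)}\}-\mu=\frac{1}{(1-2\beta)m}\sum_{k\in\mathcal{U}}(\widehat g_j^{(k)}-\mu).
\]
Split $\mathcal{U}=(\mathcal{U}\cap\mathcal{M})\cup(\mathcal{U}\cap\mathcal{B})$. Then write $\sum_{k\in\mathcal{U}\cap\mathcal{M}}=\sum_{k\in\mathcal{M}}-\sum_{k\in\mathcal{M}\cap\mathcal{T}_j}$. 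This gives three pieces: the full honest sum, which produces the first term of the bound; the trimmed honest workers; and the untrimmed Byzantine workers.

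\textbf{Trimmed honest workers.} There are at most $2b=2\beta m$ such indices, and each summand is bounded by $M:=\max_{k\in\mathcal{M}}|\widehat g_j^{(k)}-\mu|$. This piece therefore contributes at most $2\beta m M$.

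\textbf{Untrimmed Byzantine workers.} There are at most $|\mathcal{B}|=\alpha m$ such indices. The key step is to show that every untrimmed value, Byzantine or not, lies in the interval $[\min_{k\in\mathcal{M}}\widehat g_j^{(k)},\max_{k\in\mathcal{M}}\widehat g_j^{(k)}]$. Then each summand is again bounded by $M$, giving a contribution of at most $\alpha m M$.

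To prove the interval claim, take an untrimmed value $v$ and suppose $v>\max_{\mathcal{M}}\widehat g_j^{(k)}$. Every value at or above $v$ in the sorted order is then Byzantine. Since $v$ is not among the top $b$ values, $v$ and all values above it number at least $b+1$. So at least $b+1>\alpha m$ workers would be Byzantine, using $\beta>\alpha$, which contradicts $|\mathcal{B}|=\alpha m$. The lower end is symmetric. Ties need a little care, since trimming breaks ties arbitrarily; but a value strictly above the honest maximum is still strictly above all honest values, so the counting argument goes through.

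\textbf{Combining.} Dividing the three pieces by $(1-2\beta)m$ gives
\[
\frac{1}{(1-2\beta)m}\Bigl|\sum_{k\in\mathcal{M}}(\widehat g_j^{(k)}-\mu)\Bigr|+\frac{2\beta+\alpha}{1-2\beta}M,
\]
which is the claimed bound. The interval (order-statistic) claim is the only nontrivial step and the main obstacle. Everything else is triangle-inequality bookkeeping, and the bound holds for every $\mu$ since $\mu$ enters only through centering.
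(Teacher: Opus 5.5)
Your proof is correct and follows the paper's argument. Both use the same three-way split of $\sum_{k\in\mathcal{U}_j(t)}(\widehat g_j^{(k)}-\mu)$ into the full honest sum, the at most $2\beta m$ trimmed honest workers, and the at most $\alpha m$ untrimmed Byzantine workers, each bounded by $\max_{k\in\mathcal{M}}|\widehat g_j^{(k)}-\mu|$. The only difference is cosmetic. The paper treats the last piece by exhibiting, for the worst untrimmed Byzantine value, an honest worker in the corresponding trimmed tail that dominates it, whereas you show directly that every untrimmed value lies in the honest range --- the same pigeonhole count on the trimmed tails. Your version also makes clear that $\beta\ge\alpha$ already suffices, since you only need $b+1>\alpha m$.
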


\begin{proof}[Proof of Lemma~\ref{trmean}]
	Denote $r_{j}:=\operatorname{argmax}_{k \in \mathcal{B} \cap \mathcal{U}_j(t)}\left|\widehat{g}^{(k)}_{j}-\mu\right|$. On the one hand, if $\widehat{g}^{(r_{j})}_{j}-\mu\geq 0$,
	since $\beta> \alpha$, we have $\exists i_{j} \in \mathcal{T}_j(t)\cap \mathcal{M}$, s.t. $ \widehat{g}^{(i_{j})}_{j}\geq \widehat{g}^{(r_{j})}_{j}$. Therefore $\left|\widehat{g}^{(i_{j})}_{j}-\mu\right|\geq \left|\widehat{g}^{(r_{j})}_{j}-\mu\right|$. On the other hand, if $\widehat{g}^{(r_{j})}_{j}-\mu< 0$, $\exists i_{j} \in \mathcal{T}_j(t)\cap \mathcal{M}$, s.t. $ \widehat{g}^{(i_{j})}_{j}\leq \widehat{g}^{(r_{j})}_{j}$. Therefore, $\left|\widehat{g}^{(i_{j})}_{j}-\mu\right|\geq \left|\widehat{g}^{(r_{j})}_{j}-\mu\right|$.
	\begin{equation}\label{eq11}
		\begin{aligned}
			&\left|\sum_{k \in \mathcal{B} \cap \mathcal{U}_j(t)}\left(\widehat{g}^{(k)}_{j}-\mu\right)\right|\leq\alpha m \max_{k \in \mathcal{B} \cap \mathcal{U}_j(t)}\left|\widehat{g}^{(k)}_{j}-\mu\right|\leq \alpha m  \max_{k \in \mathcal{T}_j(t)\cap \mathcal{M}}\left|\widehat{g}^{(k)}_{j}-\mu\right|\leq \alpha m  \max_{k \in  \mathcal{M}}\left|\widehat{g}^{(k)}_{j}-\mu\right|.
		\end{aligned}
	\end{equation}
	By (\ref{eq11}), we have
	$$
	\begin{aligned}
		&\left|\operatorname{trmean}_{\beta}\left\{\widehat{g}^{(k)}_{j}: k \in [m]\right\}-\mu\right|  =\left|\frac{1}{(1-2 \beta) m} \sum_{k \in \mathcal{U}_j(t)} \widehat{g}^{(k)}_{j}-\mu\right| \\
		=&\frac{1}{(1-2 \beta) m}\left|\sum_{k \in \mathcal{M}}\left(\widehat{g}^{(k)}_{j}-\mu\right)-\sum_{k \in \mathcal{M} \cap \mathcal{T}_j(t)}\left(\widehat{g}^{(k)}_{j}-\mu\right)+\sum_{k \in \mathcal{B} \cap \mathcal{U}_j(t)}\left(\widehat{g}^{(k)}_{j}-\mu\right)\right| \\
		\leq&\frac{1}{(1-2 \beta) m}\left(\left|\sum_{k \in \mathcal{M}}\left(\widehat{g}^{(k)}_{j}-\mu\right)\right|+\left|\sum_{k \in \mathcal{M} \cap \mathcal{T}_j(t)}\left(\widehat{g}^{(k)}_{j}-\mu\right)\right|+\left|\sum_{k \in \mathcal{B} \cap \mathcal{U}_j(t)}\left(\widehat{g}^{(k)}_{j}-\mu\right)\right|\right)\\
		\leq&\frac{1}{(1-2 \beta) m}\left(\left|\sum_{k \in \mathcal{M}}\left(\widehat{g}^{(k)}_{j}-\mu\right)\right|+2\beta m \max_{k \in \mathcal{M}}\left|\widehat{g}^{(k)}_{j}-\mu\right|+\alpha m  \max_{k \in  \mathcal{M}}\left|\widehat{g}^{(k)}_{j}-\mu\right|\right)\\ =&\frac{1}{(1-2 \beta) m}\left(\left|\sum_{k \in \mathcal{M}}\left(\widehat{g}^{(k)}_{j}-\mu\right)\right|+\left( 2\beta+\alpha\right)  m  \max_{k \in  \mathcal{M}}\left|\widehat{g}^{(k)}_{j}-\mu\right|\right).
	\end{aligned}
	$$

\end{proof}

\section{Numerical Experiments} \label{Secexper}
The synthetic experiments are organized around two guiding questions: (a) how close the distributed estimator can get to the oracle (global) in high dimensions; and (b) how the performance degrades as the Byzantine fraction $\alpha$ increases under common attack patterns.
The real-data experiments provide an empirical validation that these tradeoffs extend beyond the generative models considered in simulation.

Unless stated otherwise, we consider a distributed high-dimensional sparse linear model with total sample size $N=mn$, where $m$ is the number of worker machines and each worker holds $n$ observations. The covariates are generated independently from a multivariate normal distribution $N(\mathbf{0},\boldsymbol{\Sigma})$, where $\boldsymbol{\Sigma}$ is a Toeplitz covariance matrix with $(j,k)$-th entry $\Sigma_{jk}=0.5^{|j-k|}$. The response is generated according to
\[
Y_i=\boldsymbol{X}_i^\top \boldsymbol{\theta}^\star+\varepsilon_i,\qquad i=1,\ldots,N.
\]
For regression (pseudo-Huber and quantile regression), we fix
\[
\boldsymbol{\theta}^\star=(0.2,0.4,\ldots,2.0,0,\ldots,0)^\top \quad (s=10),
\]
and evaluate estimation error $\|\widehat{\boldsymbol{\theta}}-\boldsymbol{\theta}^{\star}\|_{2}$ together with support recovery, by the number of indices predicted as nonzero that are truly zero (FP), the number of truly nonzero  indices that are missed (FN), and their harmonic mean (F1). For classification, we record the test accuracy. Each configuration is repeated 100 times.

The attack mechanism is given by the following three approaches:
\begin{itemize}
	\item \textbf{Sign-flip attack:} Each Byzantine machine $k$ sends $\widehat{\boldsymbol{g}}^{(k)} = -\nabla \widehat{L}_k(\widehat{\boldsymbol{\theta}}_t)$, which is the most basic form of adversarial behavior that directly opposes the true gradient.
	\item \textbf{Random attack:} Each Byzantine machine $k$ sends $\widehat{\boldsymbol{g}}^{(k)} = \mathbf{r}$, where $\mathbf{r}$ is a random vector drawn from a Gaussian distribution, that is independent of the true gradients.
	\item \textbf{Zero-message attack:} Each Byzantine machine $k$ sends $\widehat{\boldsymbol{g}}^{(k)} = \mathbf{0}$, effectively withholding information and forcing the aggregator to rely solely on the honest machines' contributions.
\end{itemize}

We take the robost parameter $a=0.743$, and trimmed level $\beta=\alpha$ in the following experiments. As for the choice of penalty parameter $\lambda$, we use the BIC-type selection criterion according to \cite{Divide}. 

\subsection{Simulations: Pseudo-Huber Regression}
\label{sec:supp-exp-huber}

For the pseudo-Huber regression, local updates are computed via an I-LAMM type procedure \citep{lamm}. And we mainly compare 5 different estimation methods: 
\begin{itemize}
	\item \textbf{Global:} The oracle estimator that has access to all $N$ samples and computes the regularized M-estimator directly.
	\item \textbf{Local:} The master machine computes the regularized M-estimator using only its local data, also used for initialization for the distributed algorithms.
	\item \textbf{Trimean:} Estimator derived from our Algorithm using Option \uppercase\expandafter{\romannumeral1}.
	\item \textbf{Median:} Estimator derived from our Algorithm using Option \uppercase\expandafter{\romannumeral2}.
	\item \textbf{SLARD:} The proposed method that applies a robust aggregation scheme based on the \cite{Divide} using corresponding loss.
\end{itemize}

\begin{figure}[!htbp]
  \centering
  \includegraphics[width=1\linewidth]{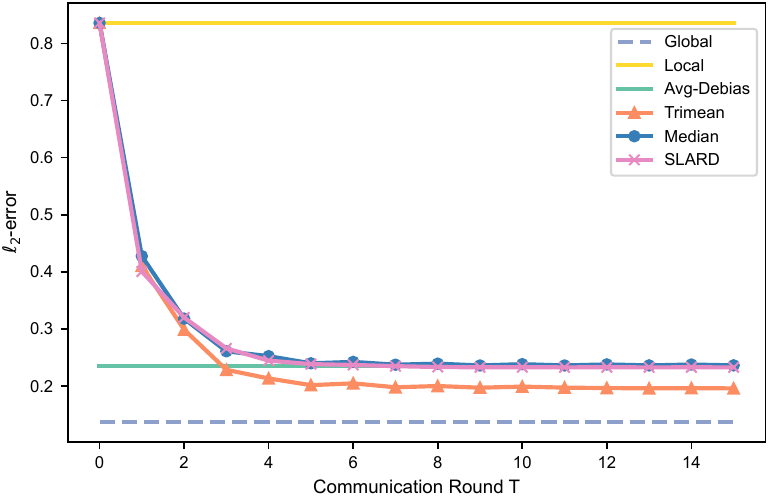}
  \caption{$\ell_2$-error versus communication round $T$ under pseudo-Huber loss with $t_3$ noise ($\alpha=0$). Setup: $(n,m,d)=(500,20,500)$. Corresponding curves from bottom to top: Global (dashed line, grey), Trimean (solid with triangles, orange), SLARD (solid with crosses, pink), Median (solid with circles, blue), Avg-Debias (solid line, green), Local (solid line, yellow). }
  \label{fig:huber-exp0-t3}
\end{figure}

\begin{table}[!htbp]
  \centering
  \caption{Pseudo-Huber Regression with Gaussian noise, $\alpha=0$, $(n,m,d)=(200,50,500)$.}
  \label{tab:huber_gaussian_alpha0}
  \begin{tabular}{@{}lcccccccc@{}}
  \toprule
  Estimator & Error & F1 & F1 Std & FP  & FP Std & FN & FN Std & Time(s) \\
  \midrule
  Global & 0.0976 & 0.82 & 0.11 & 4.68 & 3.08 & 0.00 & 0.00 & 0.89 \\
  Local & 0.6890 & 0.64 & 0.07 & 10.30 & 3.14 & 0.50 & 0.50 & 0.10 \\
  Avg-Debias & 0.2109 & 0.95 & 0.01 & 0.00 & 0.00 & 0.94 & 0.24 & 0.10 \\
  Trimean & 0.1643 & 1.00 & 0.01 & 0.07 & 0.29 & 0.00 & 0.00 & 0.17 \\
  Median & 0.2007 & 0.99 & 0.03 & 0.25 & 0.55 & 0.00 & 0.00 & 0.17 \\
  SLARD & 0.1892 & 0.99 & 0.02 & 0.21 & 0.45 & 0.00 & 0.00 & 0.27 \\
  \bottomrule
  \end{tabular}
\end{table}

\begin{table}[!htbp]
  \centering
  \caption{Pseudo-Huber Regression with Cauchy noise, $\alpha=0$, $(n,m,d)=(500,20,500)$.}
  \label{tab:huber_cauchy_alpha0}
  \begin{tabular}{@{}lcccccccc@{}}
  \toprule
   Estimator & Error & F1 & F1 Std & FP  & FP Std & FN & FN Std & Time(s) \\
  \midrule
  Global & 0.1770 & 0.92 & 0.07 & 1.96 & 2.47 & 0.00 & 0.00 & 0.76 \\
  Local & 4.8915 & 0.52 & 0.29 & 91.79 & 174.22 & 0.51 & 0.62 & 0.13 \\
  Avg-Debias & 0.4182 & 0.83 & 0.35 & 78.38 & 179.54 & 0.30 & 0.46 & 0.07 \\
  Trimean & 0.2216 & 0.97 & 0.04 & 0.66 & 0.96 & 0.00 & 0.00 & 0.21 \\
  Median & 0.2802 & 0.98 & 0.04 & 0.46 & 0.90 & 0.00 & 0.00 & 0.22 \\
  SLARD & (blowup) &  &  &  &  &  &  &  \\
  \bottomrule
  \end{tabular}
\end{table}

To illustrate the convergence behavior, we start with the distributed setting without Byzantine machines ($\alpha=0$) under normal, $t_3$ and Cauchy noise. And we consider \textbf{Avg-Debias}, each machine computes a local estimator and then applies a debiasing step before averaging across machines, as proposed in \cite{Communication1}, which is a common baseline in distributed estimation.

Figure~\ref{fig:huber-exp0-t3} tracks the $\ell_2$-error across communication rounds under $t_3$ noise, demonstrating how quickly each method approaches a stable neighborhood. It shows that iterative distributed estimators converge to near-oracle performance within 5--10 rounds. Hence, we record the time cost at 6 rounds for the iterative estimators as their running time. Tables~\ref{tab:huber_gaussian_alpha0}–\ref{tab:huber_cauchy_alpha0} summarize the zero-contamination baseline ($\alpha=0$) under Gaussian and Cauchy noise, respectively.
It can be observed that Byzantine-robust algorithms can achieve or even surpass the error level of Avg-Debias after only 3–4 iterations, which is consistent with our theoretical results.
This indicates that we do not sacrifice excessive accuracy for improving robustness.  And the gap between the distributed algorithm and the global estimator, coincides with the communication error, which is unavoidable in the distributed setting.

Meanwhile, it is evident that the SLARD algorithm imposes strict requirements on the quality of the initial estimate, and its iterations can easily become unstable. In the following simulations, unless stated otherwise,  we use $t_3$ noise for data generation.
\FloatBarrier

\begin{figure}[!htbp]
  \centering
  \includegraphics[width=\linewidth]{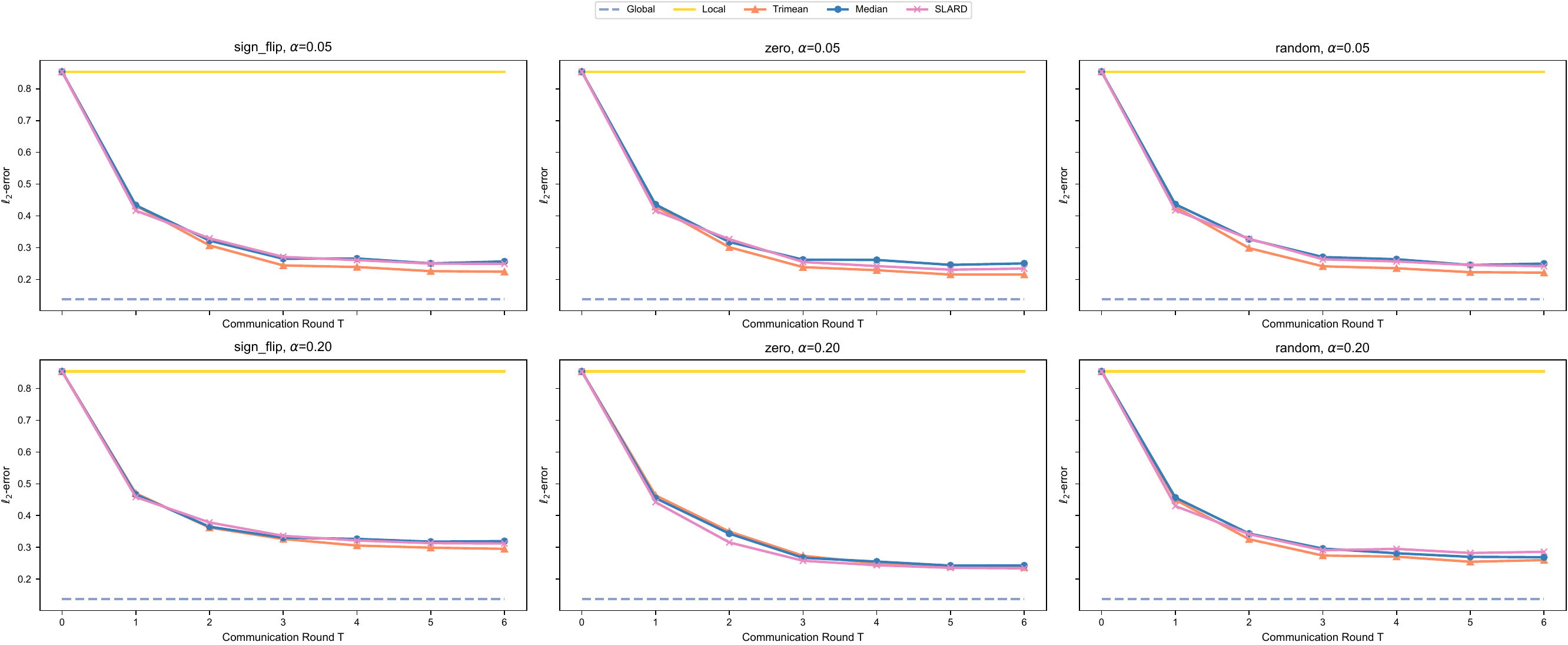}
  \caption{The $\ell_2$-error under pseudo-Huber loss over communication rounds, varying attack types and Byzantine ratios,with $t_3$ noise. $(n,m,d)=(200,50,500)$. }
  \label{fig:huber-exp1-convergence}
\end{figure}

\begin{figure}[!htbp]
	\centering
	\includegraphics[width=\linewidth]{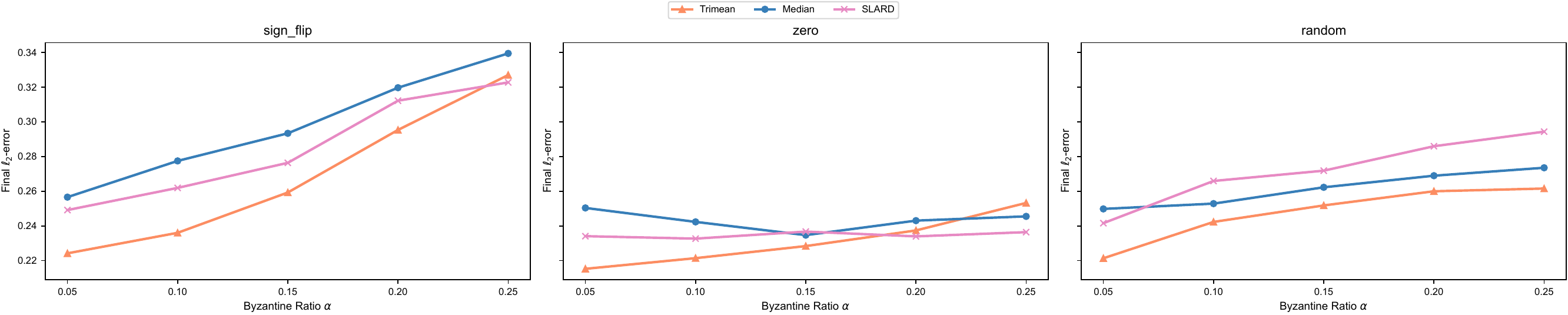}
	\caption{$\ell_2$-error under pseudo-Huber loss versus Byzantine ratio $\alpha$ (evaluated at the final round). Panels correspond to different attack types.}
	\label{fig:huber-exp1-alpha}
\end{figure}

We next fix $(n,m,d)=(200,50,500)$ and experiment for $\alpha \in \{0.00,\allowbreak 0.05,\allowbreak 0.10,\allowbreak 0.15,\allowbreak 0.20,\allowbreak 0.25\}$. Figure~\ref{fig:huber-exp1-convergence} still plots the error decay curves with the number of iterations. We observe that the introduction of Byzantine corruption does not cause significant performance degradation. Moreover, the convergence curve of our estimator exhibits a similar shape to that of SLARD,  while our estimators requires less computation time than SLARD. Figure~\ref{fig:huber-exp1-alpha} shows how the errors vary with the Byzantine ratio $\alpha$ for robust distributed estimators.
All three methods demonstrate robustness against different types of attacks.
Notably, the sign-flip attack has the largest impact on the estimation error, which matches its adversarial feature; in contrast, the zero attack causes the least degradation, which can be explained by the fact that it introduces little disruption to the direction of the optimization update .

\FloatBarrier

\begin{figure}[htbp]
  \centering
  \includegraphics[width=\linewidth]{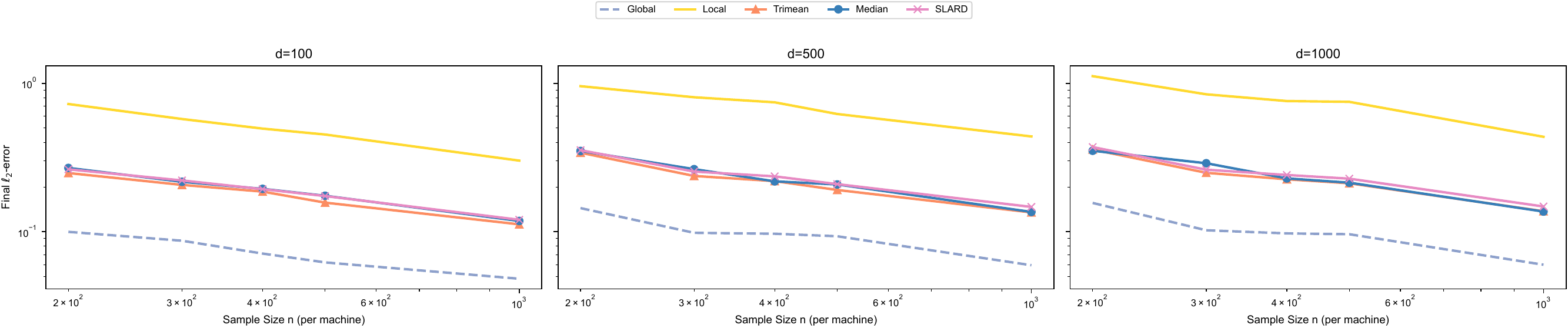}
  \caption{Final $\ell_2$-error under pseudo-Huber loss versus sample size $n$ (log-log scale). Columns correspond to dimensions $d\in\{100,500,1000\}$, with fixed $m=50$ and sign flip attack of the ratio $\alpha=0.2$. }
  \label{fig:huber-exp2}
\end{figure}

\begin{figure}[htbp]
  \centering
  \includegraphics[width=\linewidth]{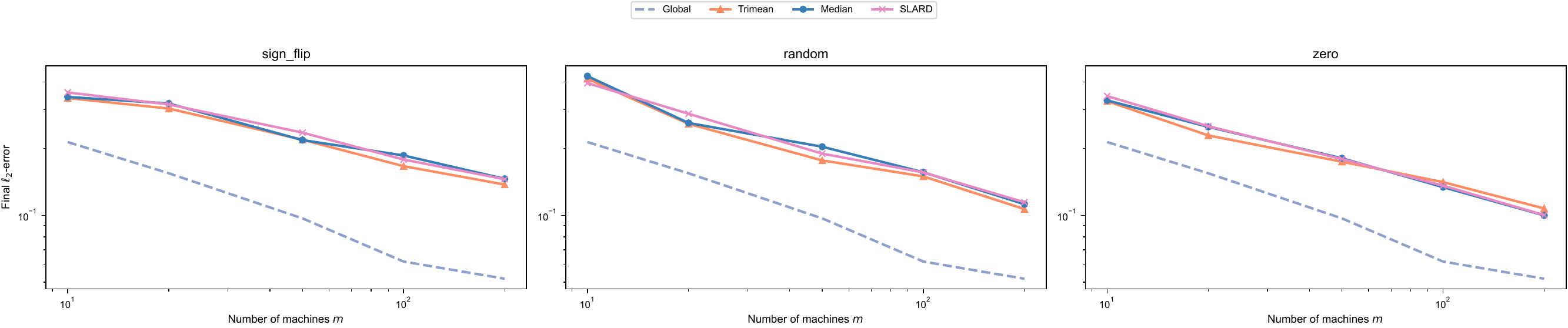}
  \caption{Final $\ell_2$-error under pseudo-Huber loss versus number of machines $m$ (log-log scale). Columns correspond to attack types, fixing $(n,d)=(400,500)$ and $\alpha=0.2$.}
  \label{fig:huber-exp3}
\end{figure}

To characterize scaling behavior, Figure~\ref{fig:huber-exp2} varies the per-machine sample size $n\in\{200,\allowbreak 300,\allowbreak 400,\allowbreak 500,\allowbreak 1000\}$, across multiple ambient dimensions $d$. Figure~\ref{fig:huber-exp3} then varies the number of machines $m\in\{10,\allowbreak 20,\allowbreak 50,\allowbreak 100,\allowbreak 200\}$ under fixed $(n,d)$, shedding light on how the distributed methods trade off additional machines against communication rounds and robustness. As we noted before, the distributed estimators behave well without limiting the number of machines according to the local sample size.

\subsection{Simulations: Quantile Regression}
\label{sec:supp-exp-quantile}

The quantile regression experiments largely mirror the pseudo-Huber design. To handle the challenges arising from the non-smoothness of the quantile loss, we employ the quantile huber-smoothing proposed in \cite{smooth}, to simplify our optimization procedure:
$$\rho_{\tau, h}(u) =
\begin{cases} 
u(\tau - 1) - \frac{(1-\tau)h}{2} & u \leq -h \\
\frac{u^2}{2h} + u(\tau - 0.5) + \frac{h}{8} & -h < u < h \\
u\tau - \frac{\tau h}{2} & u \geq h
\end{cases}$$
where $h$ is the bandwidth parameter.

\begin{figure}[!htbp]
  \centering
  \includegraphics[width=\linewidth]{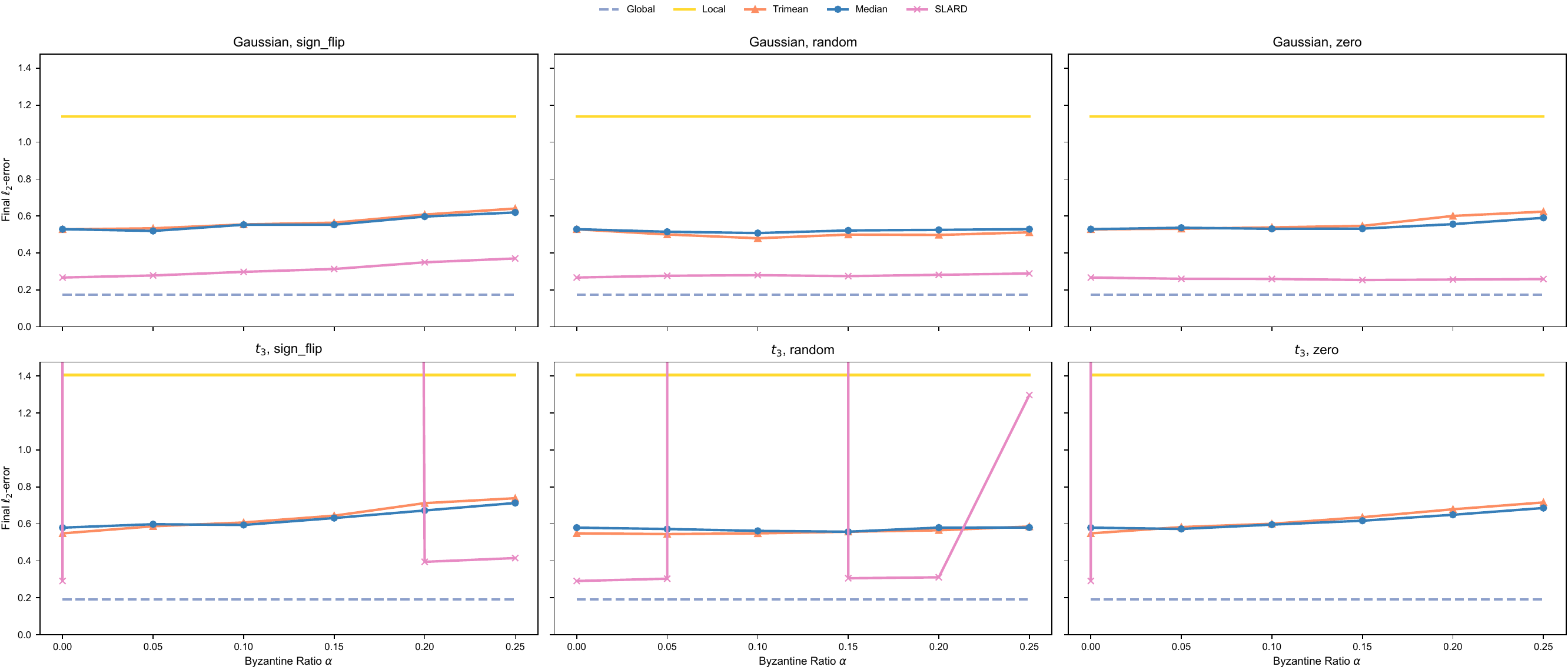}
  \caption{Final $\ell_2$-error under quantile loss versus Byzantine ratio $\alpha$. Rows correspond to noise distributions (Gaussian and $t_3$). Columns correspond to attack types. $(n,m,d)=(300,25,500)$ }
  \label{fig:qexp1_alpha}
\end{figure}

Figure~\ref{fig:qexp1_alpha} scans $\alpha$ under several noise/attack combinations as a robustness stress test: we focus on whether robust aggregators maintain bounded errors as $\alpha$ increases, and whether any method exhibits a sharp failure mode (reported as blowup with frequency over 100 independent trials). For the representative  setting, Gaussian noise with $\alpha=0.2$, Table~\ref{tab:qexp1_gaussian_alpha02} provides detailed results on support recovery behavior.

\begin{table}[!htbp]
  \centering
  \caption{Quantile Regression with Gaussian noise, $\alpha=0.2$, $(n,m,d)=(300,25,500)$.}
  \label{tab:qexp1_gaussian_alpha02}
  \begin{tabular}{llccccc}
    \toprule
    Attack & Estimator & Error (Std) & F1 (Std) & FP (Std) & FN (Std) & Time \\
    \midrule
    \multirow{5}{*}{sign flip}
      & Global  & 0.1746 (0.0148) & 0.94 (0.05) & 1.27 (1.28)  & 0.00 (0.00) & 0.51 \\
      & Local   & 1.1395 (0.1763) & 0.82 (0.07) & 2.90 (1.52)  & 1.09 (0.58) & 0.03 \\
      & Trimean & 0.6077 (0.0317) & 0.98 (0.02) & 0.04 (0.20)  & 0.26 (0.44) & 0.23 \\
      & Median  & 0.5969 (0.1117) & 0.98 (0.07) & 0.57 (4.98)  & 0.22 (0.44) & 0.27 \\
      & SLARD   & 0.3492 (0.0271) & 0.98 (0.03) & 0.37 (0.66)  & 0.00 (0.00) & 0.69 \\
    \midrule
    \multirow{3}{*}{random}
      & Trimean & 0.4973 (0.0939) & 0.98 (0.04) & 0.23 (0.66)  & 0.09 (0.29) & 0.28 \\
      & Median  & 0.5248 (0.1249) & 0.98 (0.04) & 0.31 (0.64)  & 0.13 (0.34) & 0.31 \\
      & SLARD   & 0.2816 (0.0289) & 0.96 (0.04) & 0.79 (0.89)  & 0.00 (0.00) & 0.69 \\
    \midrule
    \multirow{3}{*}{zero}
      & Trimean & 0.6003 (0.0584) & 0.98 (0.03) & 0.10 (0.39)  & 0.23 (0.42) & 0.16 \\
      & Median  & 0.5557 (0.0795) & 0.98 (0.09) & 2.51 (23.97) & 0.17 (0.38) & 0.15 \\
      & SLARD   & 0.2558 (0.0522) & 0.96 (0.05) & 0.82 (1.13)  & 0.00 (0.00) & 0.68 \\
    \bottomrule
  \end{tabular}
\end{table}

\FloatBarrier
\begingroup
\renewcommand{\topfraction}{0.95}
\renewcommand{\bottomfraction}{0.95}
\renewcommand{\textfraction}{0.03}
\renewcommand{\floatpagefraction}{0.8}

\begin{table}[!htbp]
  \centering
  \caption{Quantile regression with $t_{2.1}$ noise, $\alpha=0.2$, sign flip attack.}
  \label{tab:qexp2_error_f1}
  \begin{tabular}{lcccccc>{\raggedright\arraybackslash}p{2.2cm}}
    \toprule
    Setting & $m$ &   & Global & Local & Trimean & Median & SLARD \\
    \midrule
    \multirow{10}{*}{$n=200,d=500$}
      & \multirow{2}{*}{5}   & Error & 0.6290 & 1.8744 & 1.5679 & 1.5584 & blowup(40\%) \\
      &                      & F1    & 0.92   & 0.58   & 0.87   & 0.86   & -- \\
      & \multirow{2}{*}{10}  & Error & 0.3468 & 1.8560 & 1.5156 & 1.5177 & blowup(7\%) \\
      &                      & F1    & 0.88   & 0.60   & 0.88   & 0.88   & -- \\
      & \multirow{2}{*}{25}  & Error & 0.2083 & 1.8010 & 1.1767 & 1.3262 & blowup(2\%) \\
      &                      & F1    & 0.86   & 0.58   & 0.90   & 0.88   & -- \\
      & \multirow{2}{*}{50}  & Error & 0.1855 & 1.9220 & 1.0719 & 1.1076 & blowup(8\%) \\
      &                      & F1    & 0.95   & 0.60   & 0.91   & 0.89   & -- \\
      & \multirow{2}{*}{100} & Error & 0.1201 & 1.8594 & 0.9793 & 1.0175 & blowup(3\%) \\
      &                      & F1    & 0.95   & 0.60   & 0.91   & 0.91   & -- \\
    \midrule
    \multirow{10}{*}{$n=200,d=1000$}
      & \multirow{2}{*}{5}   & Error & 0.6273 & 2.2147 & 1.6186 & 1.5743 & blowup(12\%) \\
      &                      & F1    & 0.91   & 0.48   & 0.86   & 0.85   & -- \\
      & \multirow{2}{*}{10}  & Error & 0.3612 & 2.3831 & 1.6487 & 1.5755 & blowup(12\%) \\
      &                      & F1    & 0.84   & 0.48   & 0.87   & 0.88   & -- \\
      & \multirow{2}{*}{25}  & Error & 0.2228 & 2.5046 & 1.5112 & 1.4880 & blowup(26\%) \\
      &                      & F1    & 0.84   & 0.48   & 0.87   & 0.89   & -- \\
      & \multirow{2}{*}{50}  & Error & 0.1946 & 2.4302 & 1.3461 & 1.3941 & blowup(19\%) \\
      &                      & F1    & 0.98   & 0.47   & 0.88   & 0.89   & -- \\
      & \multirow{2}{*}{100} & Error & 0.1210 & 2.3810 & 1.2350 & 1.2463 & blowup(17\%) \\
      &                      & F1    & 0.96   & 0.46   & 0.86   & 0.88   & -- \\
    \midrule
    \multirow{10}{*}{$n=500,d=500$}
      & \multirow{2}{*}{5}   & Error & 0.3464 & 0.9124 & 0.6971 & 0.6806 & blowup(3\%) \\
      &                      & F1    & 0.93   & 0.74   & 0.96   & 0.95   & -- \\
      & \multirow{2}{*}{10}  & Error & 0.2100 & 0.9532 & 0.6636 & 0.6535 & 0.4000 \\
      &                      & F1    & 0.87   & 0.76   & 0.96   & 0.97   & 0.95 \\
      & \multirow{2}{*}{25}  & Error & 0.1379 & 0.8969 & 0.5025 & 0.5267 & 0.2758 \\
      &                      & F1    & 0.89   & 0.73   & 0.98   & 0.98   & 0.93 \\
      & \multirow{2}{*}{50}  & Error & 0.1160 & 0.9426 & 0.4571 & 0.4363 & 0.2356 \\
      &                      & F1    & 0.97   & 0.76   & 0.99   & 0.99   & 0.99 \\
      & \multirow{2}{*}{100} & Error & 0.0736 & 0.8856 & 0.4275 & 0.4139 & 0.1624 \\
      &                      & F1    & 0.96   & 0.73   & 1.00   & 0.99   & 0.98 \\
    \midrule
    \multirow{10}{*}{$n=500,d=1000$}
      & \multirow{2}{*}{5}   & Error & 0.3479 & 1.2804 & 0.7069 & 0.7017 & 1.1195 \\
      &                      & F1    & 0.91   & 0.84   & 0.95   & 0.93   & 0.91 \\
      & \multirow{2}{*}{10}  & Error & 0.2244 & 1.2227 & 0.7029 & 0.6800 & blowup(2\%) \\
      &                      & F1    & 0.84   & 0.82   & 0.96   & 0.97   & -- \\
      & \multirow{2}{*}{25}  & Error & 0.1542 & 1.2303 & 0.5913 & 0.5944 & 0.3347 \\
      &                      & F1    & 0.90   & 0.84   & 0.97   & 0.97   & 0.95 \\
      & \multirow{2}{*}{50}  & Error & 0.1190 & 1.2691 & 0.5243 & 0.5241 & 0.2456 \\
      &                      & F1    & 0.98   & 0.84   & 0.98   & 0.98   & 0.99 \\
      & \multirow{2}{*}{100} & Error & 0.0735 & 1.2303 & 0.4763 & 0.4694 & 0.1612 \\
      &                      & F1    & 0.95   & 0.83   & 0.98   & 0.99   & 0.97 \\
    \bottomrule
  \end{tabular}
\end{table}

Now we use noise of $t_{2.1}$ distribution, which has only second moments finite, with a sign flip attack of ratio $\alpha=0.2$.  As we can see from Table~\ref{tab:qexp2_error_f1},  on different $(n,m,d)$ settings, the proposed method still performs well in terms of estimation error and support recovery, while SLARD fails to converge on small $n$ and small $m$ settings-- the former is due to the fact that the initial estimator is rough, and the latter is because the robust aggregation information is limited and not stable when the number of machines is small.

Finally, we use simulation results for $\tau = 0.25, 0.5$ and $0.75$ in Table~\ref{tab:qexp3_tau} to show that, the robustness holds universally across different quantile settings.

\begin{table}[!htbp]
  \centering
  \caption{Quantile regression for $(n,m,d)=(400,25,500)$.}
  \label{tab:qexp3_tau}
  \begin{tabular}{lcccccc}
    \toprule
    $\tau$ &   & Global & Local & Trimean & Median & SLARD \\
    \midrule
    \multirow{4}{*}{0.25}
      & Error & 0.1955 & 1.4471 & 0.7124 & 0.6915 & 0.4050 \\
      & F1    & 0.96 & 0.82 & 0.96 & 0.96 & 0.99 \\
      & FP    & 0.77   & 2.10   & 0.03   & 0.09   & 0.26   \\
      & FN    & 0.00   & 1.62   & 0.74   & 0.60   & 0.00   \\
    \midrule
    \multirow{4}{*}{0.5}
      & Error & 0.1806 & 1.3336 & 0.6650 & 0.6594 & 0.3725 \\
      & F1    & 0.95 & 0.86 & 0.96 & 0.96 & 0.98 \\
      & FP    & 1.25   & 1.55   & 0.12   & 0.15   & 0.56   \\
      & FN    & 0.00   & 1.45   & 0.68   & 0.54   & 0.00   \\
    \midrule
    \multirow{4}{*}{0.75}
      & Error & 0.1960 & 1.3962 & 0.7143 & 0.6793 & 0.4004 \\
      & F1    & 0.96 & 0.84 & 0.96 & 0.96 & 0.98 \\
      & FP    & 0.90   & 1.76   & 0.04   & 0.09   & 0.37   \\
      & FN    & 0.00   & 1.51   & 0.70   & 0.61   & 0.00   \\
    \bottomrule
  \end{tabular}
\end{table}

\FloatBarrier
\endgroup

\subsection{Simulations: Sparse SVM}
\label{sec:supp-exp-svm}

We next turn to sparse SVM classification to demonstrate that the same communication-robustness tradeoffs extend beyond regression losses. As comparison, we calculate the non-robust estimator \textbf{Mean} proposed by \cite{svm}, simply averaging the transmitted gradients at each round before optimization. We consider two models for sparse SVM: 

\begin{itemize}
	\item \textbf{Model 1:} We use the correlated Gaussian design from regression: $\boldsymbol{X}_i\sim \mathcal{N}(\boldsymbol{0},\boldsymbol{\Sigma})$ with Toeplitz covariance $\Sigma_{jk}=0.5^{|j-k|}$. The signal $\boldsymbol{\theta}^\star\in\mathbb{R}^d$ is $s$-sparse, with its nonzero entries drawn i.i.d.\ from $\mathrm{Unif}(0.6,1.2)$. Given $\boldsymbol{X}_i$, labels follow a probit model $P(Y_i=1\mid \boldsymbol{X}_i)=\Phi(\boldsymbol{X}_i^\top\boldsymbol{\theta}^\star)$ with $Y_i\in\{-1,1\}$. 
	\item \textbf{Model 2:} We consider a balanced Gaussian-mixture model with $Y\in\{-1,1\}$ and $P(Y=1)=P(Y=-1)=0.5$. Conditional on the class label,
$\boldsymbol{X}\mid (Y=1)\sim \mathcal{N}(\boldsymbol{\mu},\boldsymbol{\Sigma}),\quad \boldsymbol{X}\mid (Y=-1)\sim \mathcal{N}(-\boldsymbol{\mu},\boldsymbol{\Sigma}).$
We set $s=5$ and
$\boldsymbol{\mu}=(0.1, 0.2, 0.3, 0.4, 0.5, 0,\dots,0)^\top \in \mathbb{R}^d.$
The covariance matrix $\boldsymbol{\Sigma}=(\sigma_{ij})$ is constructed as follows: $\sigma_{ii}=1$ for all $i=1,\dots,d$; for the first $s$ coordinates, $\sigma_{ij}=\rho=-0.2$ whenever $1\le i\neq j\le s$; and all other off-diagonal entries are zero.

\end{itemize}
The $N$ training samples are evenly split across $m$ workers, and for Model 1 we evaluate accuracy on an independent test set of size $N_{\text{test}}=N$. For Model 2, the Bayes optimal classification rule is given by
$\operatorname{sgn}\bigl(2.67X_1+2.83X_2+3X_3+3.17X_4+3.33X_5\bigr)$,
with the corresponding Bayes error approximately $6.3\%$.
A weak negative correlation structure is only introduced among the first $s$ informative coordinates, while all other coordinates are mutually independent.

\begin{figure}[htbp]
  \centering
  \includegraphics[width=\linewidth]{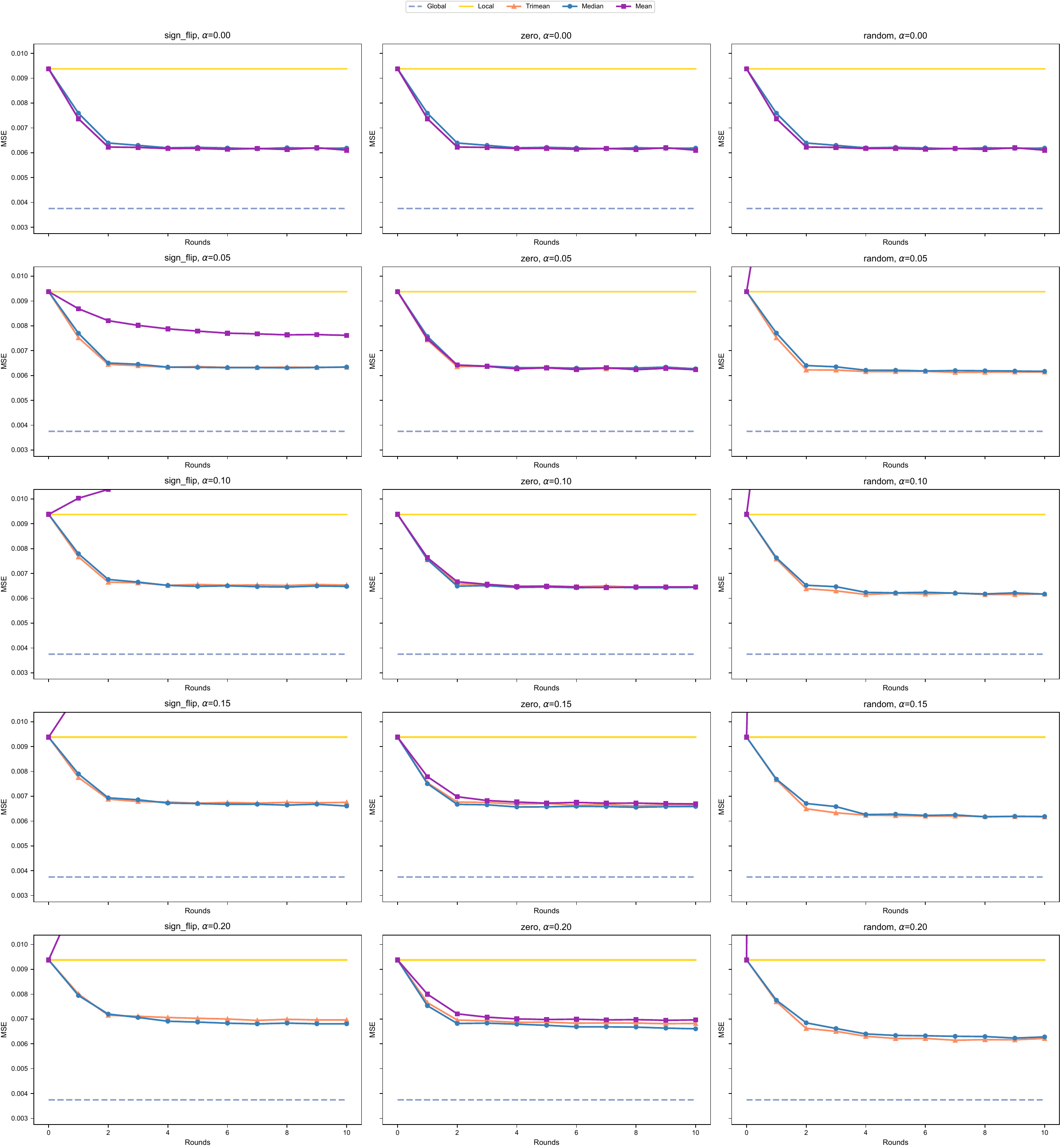}
  \caption{MSE of sparse SVM versus rounds in Model 1. $(n,m,d)=(400,20,500)$. Panels correspond to $(\alpha,\mathrm{attack})$ configurations. }
  \label{fig:svm-sexp1-conv}
\end{figure}

Figure~\ref{fig:svm-sexp1-conv} illustrates how MSE evolves with communication rounds under different $(\alpha,\text{attack})$ setups.
The robust aggregators converge to a stable error level within 5–10 rounds, and the final error is not significantly affected by $\alpha$ or attack type. In contrast, the non-robust Mean estimator exhibits blow-up in most cases, especially as $\alpha$ increases.
Since simply observing MSE may mask decision-boundary effects, as shown in Table~\ref{tab:classification_accuracies}, the distributed estimator without a robust aggregator achieves low test accuracy under different values of $\alpha$ and attack types. In contrast, the Median and Trimean estimators attain test accuracy close to that of the distributed estimator under no attack across all settings. These results convincingly demonstrate the importance of robust operations in distributed classification.

\begin{table}[htbp]
  \centering
  \caption{Classification accuracies (\%) of the sparse SVM estimators for Model 1.}
  \label{tab:classification_accuracies}
  \resizebox{\textwidth}{!}{
  \begin{tabular}{lcc@{\hspace{8pt}}ccc@{\hspace{8pt}}ccc}
    \toprule
    Attack & \multicolumn{2}{c}{$\alpha=0$ (none)} & \multicolumn{3}{c}{$\alpha=0.1$} & \multicolumn{3}{c}{$\alpha=0.2$} \\
    \cmidrule(lr){2-3}\cmidrule(lr){4-6}\cmidrule(lr){7-9}
    & Mean=Trimean & Median & Mean & Median & Trimean & Mean & Median & Trimean \\
    \midrule
    sign flip & \textbf{92.94}$\pm$0.48 & \textbf{92.88}$\pm$0.52 & \textbf{88.95}$\pm$1.87 & \textbf{92.86}$\pm$0.50 & \textbf{92.90}$\pm$0.50 & \textbf{50.11}$\pm$0.57 & \textbf{92.79}$\pm$0.52 & \textbf{92.86}$\pm$0.51 \\
    zero      & \textbf{92.94}$\pm$0.48 & \textbf{92.88}$\pm$0.52 & \textbf{92.95}$\pm$0.49 & \textbf{92.83}$\pm$0.51 & \textbf{92.92}$\pm$0.49 & \textbf{92.90}$\pm$0.49 & \textbf{92.84}$\pm$0.51 & \textbf{92.86}$\pm$0.49 \\
    random    & \textbf{92.94}$\pm$0.48 & \textbf{92.88}$\pm$0.52 & \textbf{52.92}$\pm$6.13 & \textbf{92.84}$\pm$0.50 & \textbf{92.88}$\pm$0.51 & \textbf{50.85}$\pm$2.20 & \textbf{92.83}$\pm$0.54 & \textbf{92.85}$\pm$0.54 \\
    \bottomrule
  \end{tabular}}
\end{table}

\begin{figure}[htbp]
  \centering
  \includegraphics[width=\linewidth]{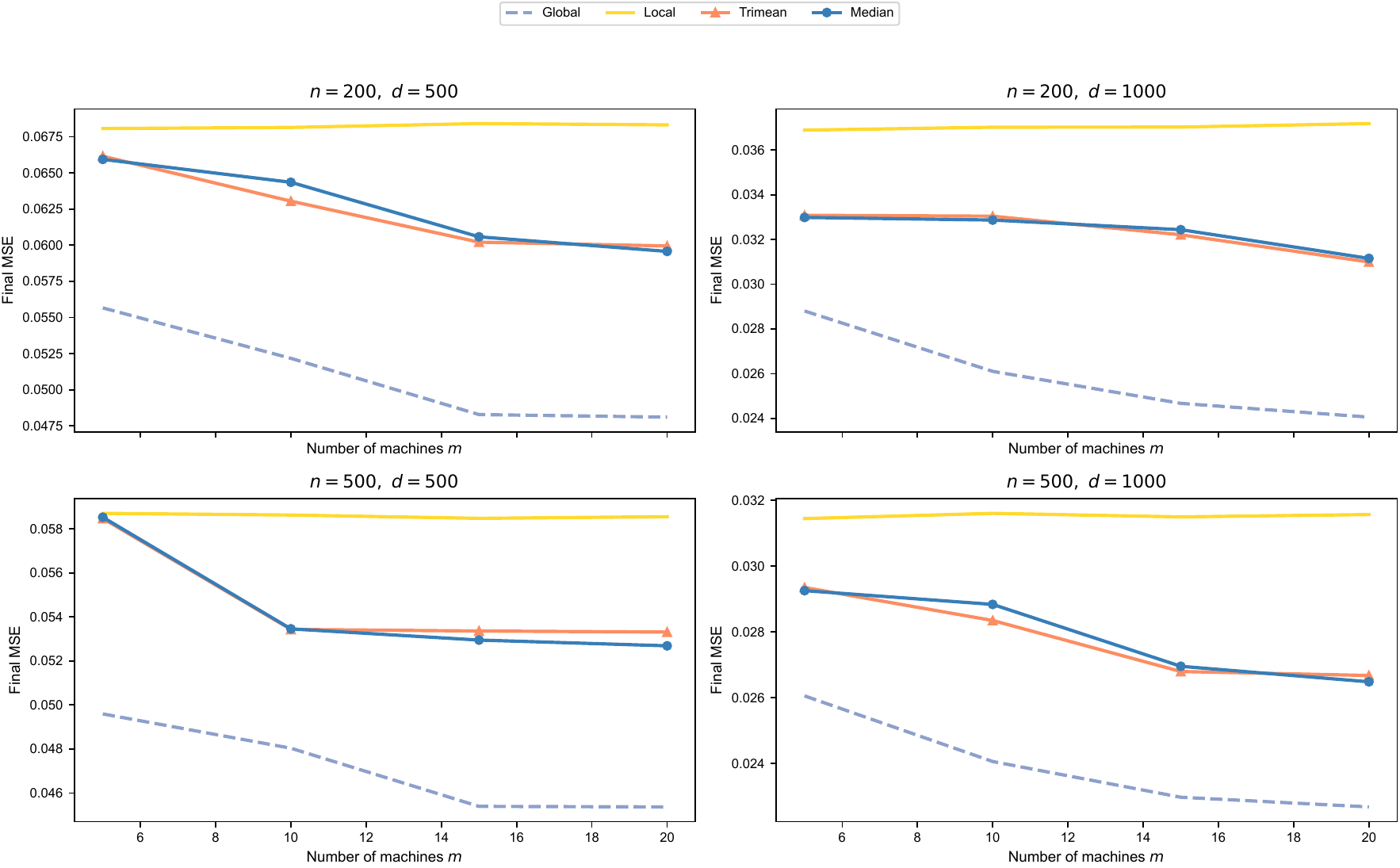}
  \caption{Final MSE of sparse SVM versus number of machines $m$ in Model 2.}
  \label{fig:svm-sexp2-mse}
\end{figure}

\begin{table}[p]
  \centering
  \caption{Model 2 results with $\alpha=0.2$ under the sign-flip attack.}
  \label{tab:model2_results}
  \begin{tabular}{lccccccc}
    \toprule
    Setting & $m$ &   & Global & Local & Trimean & Median & Mean \\
    \midrule
    \multirow{8}{*}{$n=200,d=500$}
      & \multirow{2}{*}{5}  & MSE & 0.0557 & 0.0681 & 0.0662 & 0.0659 & 0.0907 \\
      &                     & F1  & 0.80 & 0.63 & 0.98 & 0.94 & 0.00 \\
      & \multirow{2}{*}{10} & MSE & 0.0522 & 0.0681 & 0.0630 & 0.0644 & 0.0906 \\
      &                     & F1  & 0.95 & 0.61 & 0.96 & 0.97 & 0.00 \\
      & \multirow{2}{*}{15} & MSE & 0.0483 & 0.0684 & 0.0602 & 0.0606 & 0.0906 \\
      &                     & F1  & 0.65 & 0.65 & 0.97 & 0.95 & 0.00 \\
      & \multirow{2}{*}{20} & MSE & 0.0481 & 0.0683 & 0.0599 & 0.0596 & 0.0906 \\
      &                     & F1  & 0.86 & 0.62 & 0.99 & 0.97 & 0.00 \\
    \midrule
    \multirow{8}{*}{$n=500,d=500$}
      & \multirow{2}{*}{5}  & MSE & 0.0496 & 0.0587 & 0.0585 & 0.0585 & 0.0906 \\
      &                     & F1  & 0.68 & 0.63 & 0.98 & 0.98 & 0.00 \\
      & \multirow{2}{*}{10} & MSE & 0.0480 & 0.0586 & 0.0534 & 0.0535 & 0.0906 \\
      &                     & F1  & 0.95 & 0.63 & 0.98 & 0.95 & 0.00 \\
      & \multirow{2}{*}{15} & MSE & 0.0454 & 0.0585 & 0.0534 & 0.0529 & 0.0906 \\
      &                     & F1  & 0.63 & 0.64 & 1.00 & 0.98 & 0.00 \\
      & \multirow{2}{*}{20} & MSE & 0.0454 & 0.0586 & 0.0533 & 0.0527 & 0.0906 \\
      &                     & F1  & 0.85 & 0.63 & 1.00 & 1.00 & 0.00 \\
    \midrule
    \multirow{8}{*}{$n=200,d=1000$}
      & \multirow{2}{*}{5}  & MSE & 0.0288 & 0.0369 & 0.0331 & 0.0330 & 0.0453 \\
      &                     & F1  & 0.94 & 0.84 & 0.98 & 0.90 & 0.00 \\
      & \multirow{2}{*}{10} & MSE & 0.0261 & 0.0370 & 0.0330 & 0.0329 & 0.0453 \\
      &                     & F1  & 0.90 & 0.84 & 0.99 & 1.00 & 0.00 \\
      & \multirow{2}{*}{15} & MSE & 0.0247 & 0.0370 & 0.0322 & 0.0324 & 0.0453 \\
      &                     & F1  & 0.62 & 0.83 & 0.98 & 0.98 & 0.00 \\
      & \multirow{2}{*}{20} & MSE & 0.0241 & 0.0372 & 0.0310 & 0.0312 & 0.0453 \\
      &                     & F1  & 0.74 & 0.86 & 0.98 & 0.97 & 0.00 \\
    \midrule
    \multirow{8}{*}{$n=500,d=1000$}
      & \multirow{2}{*}{5}  & MSE & 0.0261 & 0.0314 & 0.0294 & 0.0293 & 0.0453 \\
      &                     & F1  & 0.97 & 0.87 & 0.99 & 0.97 & 0.00 \\
      & \multirow{2}{*}{10} & MSE & 0.0241 & 0.0316 & 0.0283 & 0.0288 & 0.0453 \\
      &                     & F1  & 0.90 & 0.88 & 0.99 & 0.99 & 0.00 \\
      & \multirow{2}{*}{15} & MSE & 0.0230 & 0.0315 & 0.0268 & 0.0270 & 0.0453 \\
      &                     & F1  & 0.56 & 0.87 & 1.00 & 0.97 & 0.00 \\
      & \multirow{2}{*}{20} & MSE & 0.0227 & 0.0316 & 0.0267 & 0.0265 & 0.0453 \\
      &                     & F1  & 0.75 & 0.88 & 1.00 & 1.00 & 0.00 \\
    \bottomrule
  \end{tabular}
\end{table}

We experiment with Model 2 under the sign flip attack with $\alpha=0.2$, with different $(n,d)$ settings. Table~\ref{tab:model2_results} shows the distance between the estimated and true parameter, as well as the support recovery performance. The robust aggregators achieve stable estimation and support recovery across different $m$ values, while the non-robust Mean estimator fails to recover any informative features. 
Figure~\ref{fig:svm-sexp2-mse} further plots the final error with respect to $m$ across multiple $(n,d)$ settings under $\alpha=0.2$ sign flip attack, showing that adding more machines stably improves distributed estimation performance.
Those observations confirm our theoretical results.

\subsection{Real Data}
\label{sec:supp-exp-real}

The real-data experiments are included to verify that the synthetic conclusions are not driven solely by idealized distributional assumptions. For regression, we use the pseudo regression to analyze the the \texttt{Ames Housing dataset},\footnote{Dataset: \url{https://jse.amstat.org/v19n3/decock/AmesHousing.txt}; documentation: \url{https://jse.amstat.org/v19n3/decock/DataDocumentation.txt}.} which contains 2,930 residential sales records and 82 variables. We randomly partitioned the dataset into training and testing sets with a 80/20 split.  We distinguish numeric, ordinal, and nominal observations; ordinal variables are encoded through specific level mappings, nominal variables are one-hot encoded, and all features are standardized. After removing the identifier columns, we model $\log(1+\texttt{SalePrice})$ as the response. The intercept is appended without regularization. In the distributed experiment, the training data are evenly partitioned across $m=20$ workers, and we evaluate three Byzantine attacks for $\alpha\in\{0.1,0.2\}$ with noise variance of $5.0$.

\begin{figure}[htbp]
  \centering
  \includegraphics[width=\linewidth]{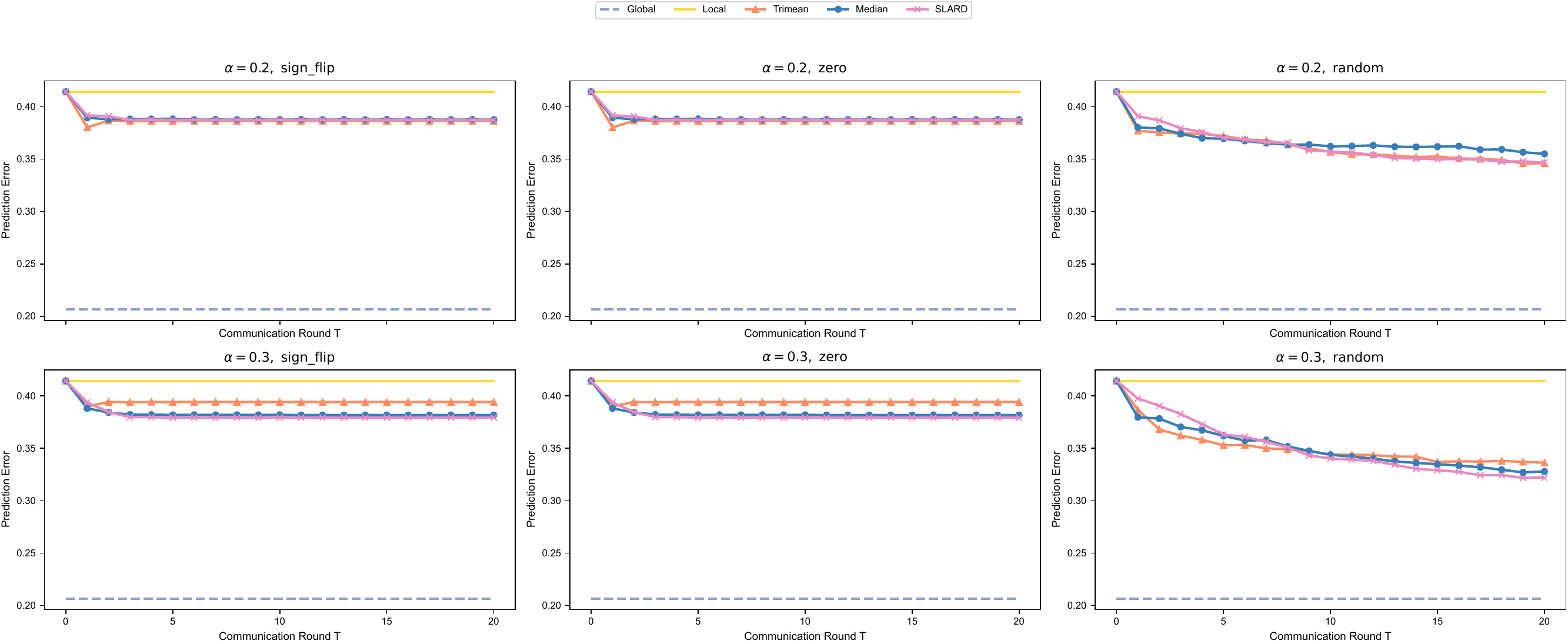}
  \caption{Prediction error for \texttt{Ames Housing dataset}. The total  training sample size $N$ is 2344, and the dimension of features $d$ is 244. }
  \label{fig:real-ames}
\end{figure}

Figure~\ref{fig:real-ames} summarizes the prediction error curves on the  \texttt{Ames Housing dataset}.
The global estimator serves as a natural benchmark.
Under Byzantine perturbations, all the robust methods exhibit the stable performance, while in most of the settings, median-based methods converges slightly faster than the Trimean method.

For classification of sparse SVM, we use two real binary datasets, \texttt{a9a} and \texttt{madelon}, from LIBSVM.\footnote{\url{https://www.csie.ntu.edu.tw/~cjlin/libsvmtools/datasets/binary.html}} We adopt the official training/test splits, whose raw dimensions are $(32561,123)$ and $(16281,123)$ for \texttt{a9a}, and $(2000,500)$ and $(600,500)$ for \texttt{madelon}. In all distributed experiments, the training samples are randomly shuffled and evenly split across $m=20$ workers. Byzantine settings are same as that of the regression experiment. 

\begin{figure}[!htbp]
  \centering
  \includegraphics[width=\linewidth]{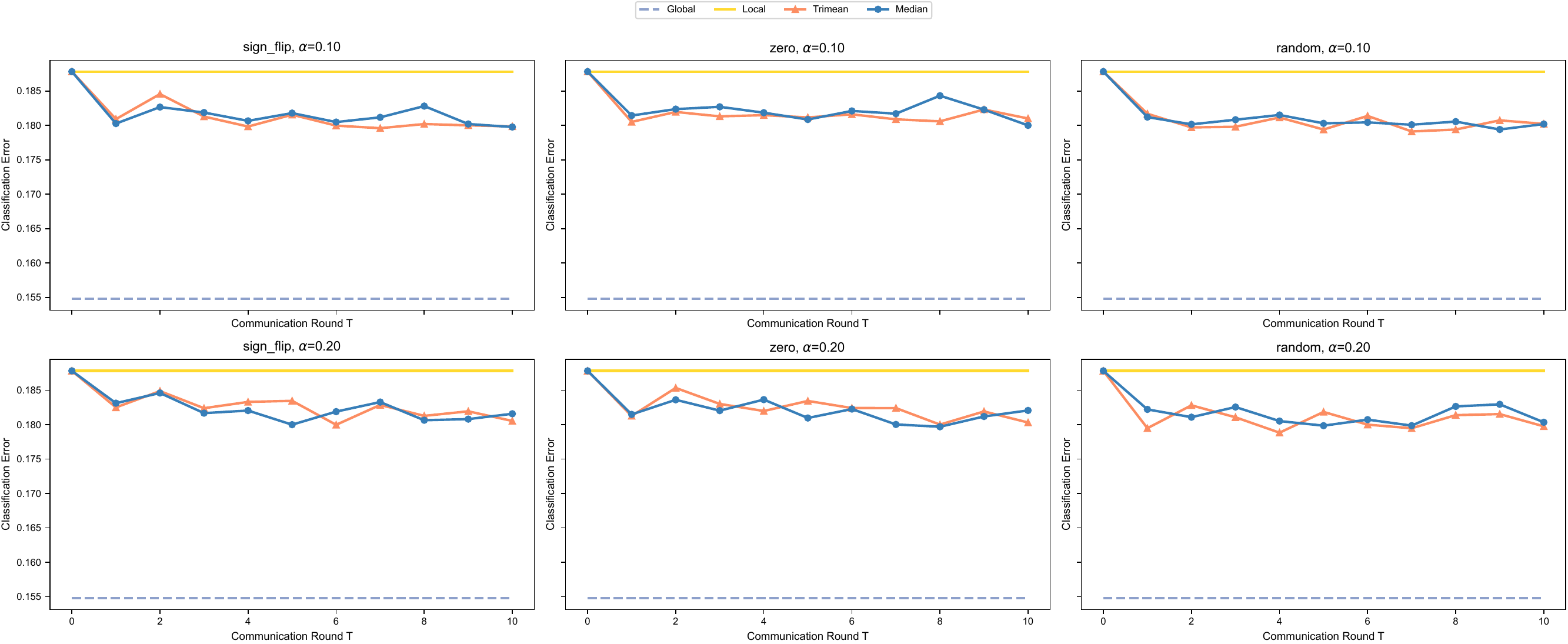}\par\vspace{6pt}
  \includegraphics[width=\linewidth]{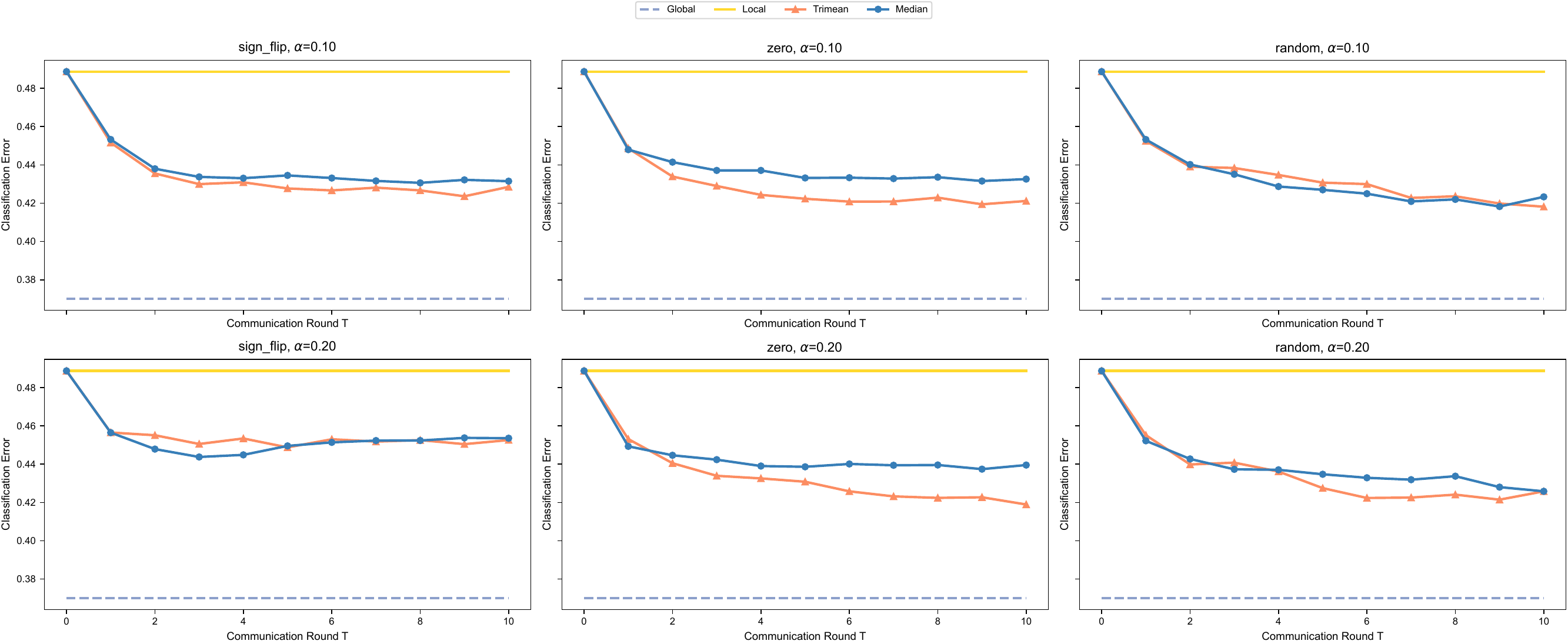}
  \caption{Classification error for real binary datasets. Top: \texttt{a9a}. Bottom: \texttt{madelon}.}
  \label{fig:real-svm}
\end{figure}

Figure~\ref{fig:real-svm} shows the resulting classification error trajectories.  On \texttt{a9a}, the convergence curves of the robust distributed estimators are rather flat, indicating that there is little room for further error reduction on this dataset. Nevertheless, the trimean and median methods still achieve consistent improvements over the local method. On \texttt{madelon}, the robust distributed estimators converge to a stable level within 5–10 rounds, and the final error is not significantly affected by $\alpha$ or attack type.
Even when the local initial estimator performs poorly, our distributed algorithm can reduce the gap relative to the global estimator.

\ifx\papermode\combinedmode
\else
\bibliographystyle{cas-model2-names}
\bibliography{references}
\fi
\suppenddocument

\fi

\printcredits

\bibliographystyle{cas-model2-names}

\bibliography{references}



\end{document}